\newcommand\StartAppendixEntries{}
	\renewcommand\StartAppendixEntries{\value{tocdepth}=-10000\relax}%
	\edef\maintocdepth{\the\value{tocdepth}}%
	\renewcommand\StartAppendixEntries{\value{tocdepth}=\maintocdepth\relax}%
\newcommand*\appendixwithtoc{%
	\cleardoublepage
	\appendix
	\addtocontents{toc}{\protect\StartAppendixEntries}
	\listofatoc
}
\newtheorem{remark}{Remark}
\newtheorem{theorem}{Theorem}
\newtheorem{theo}{Theorem}
\newtheorem{lemma}{Lemma} 
\newtheorem{proposition}{Proposition}
\newtheorem{definition}{Definition}
\newtheorem{assumption}{Assumption}
\newcommand{\OM}{\mathcal{O}}
\newcommand{\cD}{\mathcal{D}}
\newcommand{\set}[1]{\ensuremath{\left\{ #1\right\}}}
\newcommand{\abs}[1]{\ensuremath{\left\vert #1\right\vert}}
\title{\vspace{-10mm}\Huge{Safe Learning under Uncertain  Objectives and Constraints}
	\vspace{5mm}}
\author{
\begin{tabular}[t]{c@{\extracolsep{8em}}c} 
\textbf{Mohammad Fereydounian}  & \textbf{Zebang Shen}\\
ESE  Department & ESE  Department\\
University of Pennsylvania & University of Pennsylvania \\
mferey@seas.upenn.edu &  {zebang@seas.upenn.edu}
\end{tabular}\\
\\
\begin{tabular}[t]{c@{\extracolsep{8em}}c} 
\textbf{Aryan Mokhtari} & \textbf{Amin Karbasi} \\
ECE  Department & EE  Department\\
University of Texas at Austin & Yale University \\
mokhtari@austin.utexas.edu & amin.karbasi@yale.edu
\end{tabular}\\
\\
\begin{tabular}[t]{c@{\extracolsep{0em}}c} 
\textbf{Hamed Hassani}\\
 ESE  Department\\
 University of Pennsylvania \\
hassani@seas.upenn.edu
\end{tabular}\\
}
\date{}
\begin{document}

\maketitle

\begin{abstract}
  In this paper, we consider non-convex optimization problems under \textit{unknown} yet safety-critical constraints. Such problems naturally arise in a variety of domains including robotics, manufacturing, and medical procedures, where it is infeasible to know or identify all the constraints. Therefore, the parameter space should be explored in a conservative way to ensure that none of the constraints are violated during the optimization process once we start from a safe initialization point. To this end, we develop an algorithm called Reliable Frank-Wolfe (Reliable-FW).  Given a general non-convex function and an unknown polytope constraint,  Reliable-FW simultaneously learns the landscape of the objective function and the boundary of the safety polytope. More precisely, by assuming that Reliable-FW  has access to a (stochastic) gradient oracle of the objective function and a noisy feasibility oracle of the safety polytope, it finds an $\epsilon$-approximate first-order stationary point with the optimal  ${\mathcal{O}}({1}/{\epsilon^2})$ gradient oracle complexity (resp. $\tilde{\mathcal{O}}({1}/{\epsilon^3})$ (also optimal) in the stochastic gradient setting), while ensuring the safety of all the iterates. Rather surprisingly, Reliable-FW only makes $\tilde{\mathcal{O}}(({d^2}/{\epsilon^2})\log 1/\delta)$  queries to the noisy feasibility oracle (resp. $\tilde{\mathcal{O}}(({d^2}/{\epsilon^4})\log 1/\delta)$ in the stochastic gradient setting) where $d$ is the dimension and $\delta$ is the reliability parameter, tightening the existing  bounds even for safe minimization of convex functions. We further specialize our results to the case that the objective function is convex. For this case we show that to achieve an $\epsilon$-accurate solution the Reliable-FW method requires $\tilde{\mathcal{O}}({1}/{\epsilon^2})$ stochastic gradient evaluations and $\tilde{\mathcal{O}}((d^2/\epsilon^4){\log 1/\delta})$  queries to the noisy feasibility oracle (NFO) for the case that we have access to stochastic gradients, and $\mathcal{O}({1}/{\epsilon})$ gradient computations and $\tilde{\mathcal{O}}((d^2/\epsilon^2){\log 1/\delta})$ NFO queries for the deterministic setting. A crucial component of our analysis is to introduce and apply a technique called  geometric shrinkage in the context of safe optimization. 
\end{abstract}

\section{Introduction}\label{Introduction}

In this paper, we study a general class of constrained non-convex optimization problems in the presence of uncertainty in both the objective function and the constraint set.  Specifically, we focus on
\begin{align}\label{P}
\min _{x \in \mathcal{D}} f(x), 
\end{align}
where the objective function $f: \mathbb{R}^{d} \rightarrow \mathbb{R}$ is a smooth and potentially non-convex function and the constraint set $\mathcal{D} \subseteq \mathbb{R}^d$ is a compact and convex set.  We focus on a scenario where we do not have access to the exact constraint set $\mathcal{D}$ nor the exact gradient oracle of the objective function~$f$.  Instead, noisy estimates of the feasibility of  a queried point with respect to $\mathcal{D}$ as well as noisy estimates of the gradient of $f$ are assumed accessible. Our goal is to provide optimization procedures to solve problem \eqref{P} which remain \emph{safe} along the \emph{entire optimization path}. That is, we require all the queries to the gradient and feasibility oracles to remain within the (unknown) feasible domain $\mathcal{D}$.

This scenario, a.k.a., \emph{safe learning}, encompasses a broad set of safety-critical applications ranging from autonomous systems \cite{SafeAut}, to robotics \cite{SafeRobot}, and medical diagnosis \cite{med}, to name a few. 
For instance, in medical diagnosis, physicians seek to find the most efficient (optimal) drug combinations or therapies. For such configurations to be tested, they must lie within a set of harmless (safe) therapies.  In such applications,  computing the exact constraint set $\mathcal{D}$ in advance is highly expensive in terms of sample complexity, and the main challenge is to optimize with the least number of queries from the function and feasibility oracles  while ensuring safety all the time. 

Optimization algorithms for solving the constrained problem \eqref{P} can be divided into two major groups: (i) Projected gradient methods \cite{nesterov2013introductory} in which the iterates are projected back to the feasible set following an optimization step. (ii) Conditional gradient methods (Frank-Wolfe methods) \cite{frank1956algorithm,jaggi2013revisiting,FW3} in which the projection operator is replaced by a linear optimization oracle. 
Stochastic variants of these two classes of methods which handle uncertainty in the gradient have been studied extensively in the literature \cite{robbins1951stochastic,nemirovski1978cezari,nemirovskii1983problem,DBLP:conf/icml/HazanK12, DBLP:conf/icml/HazanL16,lan2016conditional,mokhtari2018stochastic,pmlr-v80-qu18a}. However, since a direct access to the feasible set is essential in both of the aforementioned methods, they are incapable of solving the safe learning problem \eqref{P} when the feasible set $\cD$ is not explicitly given.
To fill this gap, prior work \cite{scl, LB}  has developed a framework for solving problem \eqref{P} under uncertain objectives and  safety-critical constraints. However, as we will elaborate below, the resulting sample complexities  are sub-optimal. 

\vspace{-1mm}
In this paper, we assume two types of noisy oracles accessible by the optimization procedure, namely, the Noisy Feasibility Oracle (NFO) and the Stochastic First-order Oracle (SFO).
Specifically, in response to a query, NFO returns a continuous noisy measurement of the feasibility and SFO returns a gradient estimation at the query point. We seek to address the following question: 
\addtolength\leftmargini{-0.3in}
\begin{quote}
\vspace{-1mm}
	\it ``Can we guarantee safe learning with optimal SFO complexity while relying only on NFO?''
	\vspace{-2mm}
\end{quote}
To this aim, a non-convex variant of Frank-Wolfe named Reliable Frank-Wolfe (Reliable-FW) is proposed, which estimates the feasibility in each iteration.
By ensuring a that the function value is maximally reduced at each iteration (via a novel convergence analysis), Reliable-FW achieves a tight SFO complexity for general non-convex objectives.  Simultaneously, Reliable-FW uses a novel technique based on a geometric shrinkage of the feasible set to keep the NFO complexity low while ensuring the feasibility of all the iterates. 

\begin{table}[t!]
	\centering
	\caption{Comparisons with most related constrained safe learning methods under uncertain constraints.  Abbreviations: stochastic (stoch.), deterministic (det.), and complexity (comp.)} 
	\label{table}
	\begin{tabular}{@{}cccccccc@{}}
		\toprule
		& \multicolumn{3}{c}{Objective function} &\phantom{}& \multicolumn{2}{c}{Constraint} \\
		\cmidrule{2-4} \cmidrule{6-7}
		Ref.    		& {Class}  & {Oracle type}			& {Oracle  comp.}			  && {Class} & {Oracle comp.} \\
		\midrule
		\cite{LB}		& nonconvex    & stoch. zeroth-order	&	$\tilde{\OM}({d^3/\epsilon^7})$ && general & $\tilde{\OM}(d^3/\epsilon^7)$ \\
		This paper 		& nonconvex    & stoch.	first-order	&	$\tilde{\OM}({1/\epsilon^3})$   	 && linear  & $\tilde{\OM}(d^2/\epsilon^4)$ \\
		This paper		& nonconvex    & det.   first-order&	$\OM({1/\epsilon^2})$  		 && linear  & $\tilde{\OM}(d^2/\epsilon^2)$ \\
		\cite{scl}		& convex        & det.   first-order	&	$\tilde{\OM}({1/\epsilon})$  && linear  & $\tilde{\OM}(d^3/\epsilon^2)$ \\
		This paper		& convex       & det.   first-order&	$\OM({1/\epsilon})$    		 && linear  & $\tilde{\OM}(d^2/\epsilon^2)$ \\
		\cite{scl}		& convex       & stoch. first-order	&	$\tilde{\OM}({1/\epsilon^3})$&& linear  & $\tilde{\OM}(d^3/\epsilon^2)$ \\
		This paper		& convex       & stoch.   first-order&	$\tilde{\OM}({1/\epsilon^2})$    		 && linear  & $\tilde{\OM}(d^2/\epsilon^4)$ \\
		\bottomrule
	\end{tabular}
\end{table}

\paragraph{Our contributions.}  We tackle the non-convex optimization problem \eqref{P} under the safety criterion when the feasible set $\cD$ is a compact polytope. Our proposed Reliable-FW method provably finds an $\epsilon$-approximate first-order stationary point by calling $\mathcal{O}({1}/{\epsilon^2})$ deterministic gradient evaluations, or by calling $\tilde{\OM}({1}/{\epsilon^3})$ stochastic gradient evaluations in the case where only stochastic gradients are available. Perhaps surprisingly, these results are  tight, i.e., they match the optimal results for optimization under \emph{known} constraints: Our sample complexity bounds match the best known rates when the constraints our known  for both cases of deterministic \cite{lacoste2016convergence,ghadimi2016accelerated,ghadimi2016mini,mokhtari2018escaping}  and stochastic \cite{shen2019complexities,yurtsever2019conditional,hassani2019stochastic, carmon2019lower,arjevani2019lower} gradient oracles. 
More importantly, under a deterministic gradient oracle, we prove that probability at least $1-\delta$, Reliable-FW 
guarantees  feasibility of all the iterates while  making  $\tilde{\mathcal{O}}((d^2/\epsilon^2){\log 1/\delta})$ and $\tilde{\mathcal{O}}((d^2/\epsilon^4){\log 1/\delta})$ NFO queries for deterministic and stochastic gradient settings , respectively. 
 A comparison between our results and state-of-the-art is shown in Table~\ref{table}.
We also study the special case that the objective function is convex. In this case, when we have access to a stochastic gradient oracle, we show that our proposed method finds an $\epsilon$-accurate solution after $\tilde{\mathcal{O}}({1}/{\epsilon^2})$ stochastic gradient evaluations and $\tilde{\mathcal{O}}((d^2/\epsilon^4){\log 1/\delta})$ NFO queries. For the setting that access to exact gradient is possible, our proposed method finds an $\epsilon$-accurate solution after $\mathcal{O}({1}/{\epsilon})$ gradient computations and $\tilde{\mathcal{O}}((d^2/\epsilon^2){\log 1/\delta})$ NFO queries. The novelty of our results lies in designing new method that handle the interplay between uncertainties in the objective and constraints, and new analysis tools (e.g. geometric shrinkage) which enable to obtain tight bounds.

\paragraph{Related Work.} The most related works to the this study are \cite{scl} and \cite{LB}. The safe learning problem in \eqref{P} was first studied in \cite{scl} for the case that  $f$ is convex and $\cD$ is a compact polytope. Specifically, \cite{scl} proposed a method named Safe Frank-Wolfe (SFW) that at each step, a certain amount of feasibility queries are made within a vicinity of the current iterates, which together with previous accumulated knowledge about $\cD$, enable a feasible set estimation $\hat{\cD}$. A standard Frank-Wolfe step is taken over the estimated polytope $\hat{\cD}$ to update the iterates.
Consequently, when $f$ \emph{convex} and its gradient is \emph{deterministically} given, to find an $\epsilon$-suboptimal solution (i.e., $x_{\epsilon}$ with $f(x_\epsilon) - f(x^*) \leq \epsilon$) while ensuring the feasibility of all the iterates with probability at least $1-\delta$, SFW requires $\tilde{\mathcal{O}}({1}/{\epsilon})$ gradient evaluations and $\tilde{\mathcal{O}}((d^3/\epsilon^2){\log 1/\delta})$  NFO queries. \cite{scl} also described a simple stochastic generalization of SFW, still for the convex case, by using a large batch in each iteration to estimate the gradient that requires $\tilde{\mathcal{O}}({1}/{\epsilon^3})$ SFO queries. 

In addition,  \cite{LB} studied the non-convex safe learning problem with general uncertain non-convex constraints using zeroth-order information of both the objective and the constraint functions.
Specifically, \cite{LB} proposed to add a log barrier regularization to the Lagrangian of the original non-linear programming problem and then approximate the gradient of the Lagrangian via the finite-difference scheme with zeroth-order information to optimize.
While the proposed 0-LBM and s0-LBM method apply to possibly all the smooth non-linear programming of interest, such generality comes at a cost of high oracle complexities.
Concretely, to achieve an $\epsilon$-approximate unscaled KKT point (an optimality criterion proposed in \cite{LB} that describes the Lagrangian gradient norm), the total zeroth-order oracle complexity of both the objective and the constraint functions is $\tilde{\OM}({d^3}/{\epsilon^7})$. As highlighted in Table~\ref{table}, our Reliable-FW method improves both SFO and NFO complexities of the method in  \cite{LB} for nonconvex settings.


Several works study robust optimization with chance constraints \cite{CB1,CB2,CB3,CB4,CB5,CB6,CB7,CB8,CB9}. Despite some similarities, our setting is fundamentally different as we rely on noisy queries rather than stochastic constraints.

\paragraph{Notation.} 
 In comparison to $\OM(\cdot)$, the notation $\tilde{\OM}(\cdot)$ hides logarithmic terms.  We define the vector $e_j\in\mathbb{R}^d$ to have a $1$ in its $j$-th element and zero otherwise.  For a positive integer $m$, $[m]:=\{1,\ldots,m\}$, $\mathbb{0}_m:=(0,\ldots,0)^{\top}\in \mathbb{R}^{m}$, and $\mathbb{1}_m:=(1,\ldots,1)^{\top}\in \mathbb{R}^{m}$.  For $x, y\in \mathbb{R}^d$, $\langle x,y\rangle:=x^{\top}y$ and $x\leq y$ means that all elements of $y-x$ are non-negative. We use $\|\cdot\|$to the 2-norm for matrices and vectors.  We use $\rho_{\min}(A)$ and $\rho_{\max}(A)$ to denote the minimum and maximum singular values of matrix $A$, respectively.
For $A\in \mathbb{R}^{d\times m}$ and $B\in \mathbb{R}^{d\times n}$, $[A,B]\in \mathbb{R}^{d\times {(m+n)}}$ denotes the row-wise concatenated matrix. We use $\mathbb{P}\{\mathcal{R}\}$ and $\mathbb{P}\{\mathcal{R}|\mathcal{J}\}$ to denote  the probability of $\mathcal{R}$ and $\mathcal{R}$ given $\mathcal{J}$, respectively. For a closed and convex set $S \subseteq \mathbb{R}^{d}$, the operation $\pi_{S}: \mathbb{R}^{d}\rightarrow \mathbb{R}^d$ denotes the projection on $S$, i.e., $\pi_{S}(x):=\arg\min_{v\in S}\|x-v\|$. The covariance of a vector-valued random variable $x$ is a matrix defined as $\operatorname{Cov}(x):=\mathbb{E}[xx^{\top}]-\mathbb{E}[x]\mathbb{E}[x]^{\top}$. We represent sequences as $\{a_t\}:=(a_0,\ldots,a_t)$.


\section{Problem Formulation} \label{PF} 

In this section, we  introduce the underlying setting of our problem. Recall our main formulation in~\eqref{P}, where we aim to minimize the objective function $f: \mathbb{R}^{d} \rightarrow \mathbb{R}$ over a compact and convex set $\mathcal{D}$.  We further assume that $f$ is differentiable and its gradient $\nabla f$ satisfies the following conditions. 

\begin{assumption}\label{f}
	The objective function gradient $\nabla f$ is  $L$-Lipschitz on $\mathcal{D}$, i.e., for all $x,y\in\cD$, we have $\|\nabla f(x)-\nabla f(y)\|\leqslant L\|x-y\|$.
	Moreover, its norm is uniformly bounded by $M$, i.e., for all $x\in\cD$, we have $ \|\nabla f(x)\|\leqslant M$.
\end{assumption}

As mentioned earlier, we focus on the case that we are uncertain about both the objective function $f$ and the feasible set $\mathcal{D}$. Specifically, we assume that we do not have access to the exact value of the function $f(x)$ and its gradient $\nabla f(x)$, and instead we only have access to a gradient estimator (oracle) denoted by $G(x,\xi)$, where $\xi$ is a random variable representing the randomness in $G(x,\xi)$. We assume that the estimator  $G(x,\xi)$ satisfies the following conditions.

 
\begin{assumption}\label{G}
	The gradient estimator $G$ satisfies the following conditions. 
	\begin{enumerate}[i)]
	\item $G(x,\xi)$ is an unbiased estimator of $\ \nabla f(x)$, i.e., $\mathbb{E}_{\xi}[G(x,\xi)] = \nabla f(x)$.
		\item $G(x,\xi)$ is $L_0$-Lipschitz, i.e., for all $x,y\in\cD$ we have $\|G(x,\xi)-G(y,\xi)\|\leqslant L_0\|x-y\|$.
		\item  The support of $G(x,\xi)$ is bounded by $\sigma_0$, i.e., $\|G(x,\xi)-\nabla f(x)\| \leqslant \sigma_0$, for all $x\in\cD$, w.p.1. 
		 \end{enumerate}
\end{assumption} 

  In connection to what was mentioned earlier in Section~\ref{Introduction}, the total number of calls to the stochastic oracle $G(x,\xi)$ throughout the procedure is referred to as the SFO complexity.

Next, we formally discuss the conditions on our feasible set $\mathcal{D}$, i.e., safety constraint, and clarify its nature of uncertainty. In this paper, we focus on the case that the constraint set $\mathcal{D}$ is a polytope and defined as the intersection of $m$ linear inequity constraints, i.e., $\cD = \{x \in \mathbb{R}^{d}: \, A x-b\leqslant \mathbb{0}_m\}$, where $A\in\mathbb{R}^{m\times d}$ and $b\in\mathbb{R}^{m}$. Since $\cD$ is an intersection of $m$ closed half-planes, compactness of $\cD$ is equivalent to its boundedness. We further assume that $\mathcal{D}$ has a bounded diameter 

\begin{assumption}\label{D}
	The feasible set $\cD$ is compact where its diameter is bounded by $\Lambda$ and its radius is bounded by $\Gamma$, i.e., $\forall x,y\in \cD:\,\, \|x\|\leqslant \Gamma,\,\,\|x-y\|\leqslant\Lambda$.
\end{assumption}

We focus on the case that we are uncertain about the set $\cD$ and the matrix $A$ and vector $b$ are unknown. More precisely, we assume that for a given point $x\in\mathbb{R}^d$, the feasibility of $x$ can only be queried via a stochastic oracle called Noisy Feasibility Oracle (NFO). In response to this query, NFO returns $y = Ax - b + \theta \in \mathbb{R}^m$ with $\theta$ being an $m$-dimensional noise term. Here, $x$ is called an NFO query point and $y$ is called a \emph{measurement} at point $x$.  The total number of queries of this type throughout our procedure is referred to as the NFO complexity. In this paper, we focus on the case that the elements of measurement  noise  $\theta$  are sub-Gaussian. Note that $X$ is called a zero mean $\sigma$-sub-Gaussian random variable if $\mathbb{E}[X]=0$ and $\mathbb{E}[e^{\alpha X}] \leqslant \exp ({\alpha^{2} \sigma^{2}}/{2})$ for all $\alpha\in\mathbb{R}$.

\begin{assumption}\label{NFO}
	Each element of the  measurement noise $\theta\in\mathbb{R}^m$ is a zero mean $\sigma$-sub-Gaussian random variable.  Moreover, every two elements of the same $\theta$ are independent and two different $\theta$s in two different NFO queries are also independent. 
\end{assumption}

Additionally, we assume that an initial feasible point $x_0\in \cD$ is given. Having $x_0$, we aim to create an optimization path consisting of $T$ iterates $x_t\in\mathbb{R}^d$ with $t\in\set{0,\ldots,T-1}$ to solve \eqref{P} (with all considerations discussed earlier in this section) such that all the following three criteria are satisfied

\begin{enumerate}[i)]
	\item All the iterates must be safe. This means that they must be feasible $x_t\in \cD$ for all $t$.
	\item All NFO query points must lie within the $r_0$-vicinity of $\cD$, where $r_0$ is called the exceeding margin. Note that $x\in\mathbb{R}^d$ lies within the $r_0$-vicinity of $\cD$ if there exists $\tilde{x}\in\cD$ such that $\|x-\tilde{x}\|\leqslant r_0$. 
	\item Iterates converge to an $\epsilon$-approximate First-Order Stationary Point (FOSP) $x_{\epsilon}^*$ of \eqref{P}, formally defined as
	\begin{equation}\label{FWgap}
		V_{\cD}\left(x_{\epsilon}^*,f\right):=\max_{v\in \cD} \left\langle \nabla f(x_{\epsilon}^*), x_{\epsilon}^*-v\right\rangle  \leq \epsilon.
	\end{equation}
	The function $V_{\cD}\left(x_{\epsilon}^*,f\right)$ is known as the Frank-Wolfe gap.  In particular, the Frank-Wolfe gap is zero for an FOSP. Hence, the definition of an $\epsilon$-approximate FOSP relaxes the definition of FOSP by letting the Frank-Wolfe gap be as large as $\epsilon$.
\end{enumerate}
\section{Algorithm}

In Section~\ref{PF}, we demonstrated the problem formulation and stated the required safety and optimality criteria. In this section, we propose the Reliable Frank-Wolfe (Reliable-FW) algorithm to tackle that problem while fulfilling these safety and optimality requirements. A pseudo code for Reliable-FW is outlined in Algorithm~\ref{alg}. A detailed description of Reliable-FW follows.

As Algorithm~\ref{alg} demonstrates, Reliable-FW starts at a given feasible point $x_0\in\cD$ and generates $T$ iterates denoted by $x_t$ where $t\in\set{0,\ldots,T-1}$. At iteration $t$, it makes $n_t$ NFO queries around $x_t$ and thus receives their corresponding $n_t$ measurements $y_t$. By appending these new data collectively, a cumulative list of query points $X_t\in\mathbb{R}^{N_t\times d}$ and measurements $Y_t\in\mathbb{R}^{N_t\times m}$ with $N_t=n_0+\ldots+n_t$ will be obtained at iteration $t$. This procedure is performed by calling a subroutine named {\scshape CollectData} in line 4 of Algorithm~\ref{alg} whose pseudo code is illustrated in Subroutine~\ref{datacollalg}. More specifically, upon the call at iteration $t$, {\scshape CollectData} picks $2d$ query points around $x_t$ with a distance $r_0$ away from $x_t$ along $2d$ main directions $\pm e_j, \,j\in[d]$ around $x_t$. {\scshape CollectData} then makes $n_t/2d$ measurements at each query points. Equivalently, the query points can be denoted by $x_{(1)},\ldots,x_{(n_t)}$ consisting of $2d$ distinct points each of which repeated $n_t/2d$ times. Having these $n_t$ query points, NFO returns the corresponding measurements $y_{(1)},\ldots,y_{(n_t)}$. The function {\scshape AskOracle} represents the NFO. It indeed accepts the list $X = [x_{(1)},\ldots,x_{(n_t)}]^{\top}$ and returns $Y = [y_{(1)},\ldots,y_{(n_t)}]^{\top}$. This structure of gathering data points matches the corresponding part of SFW in \cite{scl} and the only advantage here is that {\scshape CollectData} describes it more clearly by a proper indexing. Note that given the safety (feasibility) of $x_t$, this choice the NFO query points $x_{(1)},\ldots,x_{(n_t)}$ ensures that all of them lie within the $r_0$-vicinity of $\cD$ as was required by the problem formulation in Section~\ref{PF}. Hence, it suffices for the analysis to focus on the safety of the iterates.

		\begin{algorithm}[t!]
			\caption{Reliable Frank-Wolfe}
			{\bf Input} $x_{0},r_0,T,n_{t},\eta_{t}, \rho_t$ 
			
			\begin{algorithmic}[1]	
				\STATE ${X}_{-1}\,\leftarrow\, \emptyset$ , $Y_{-1}\,\leftarrow\, \emptyset$
				\vspace{.00cm}
				\FOR{ $t= 0$ {\bfseries to} $T-1$}
				\vspace{.00cm}
				\STATE  $\left({X}_t,Y_t\right)\leftarrow$ {\scshape CollectData}$\left({X}_{t-1},Y_{t-1},x_t,n_t,r_{0}\right)$
				
				\vspace{.00cm}
				\STATE  $\hat{\cD}_{t}\,\leftarrow$ {\scshape EstimatePolytope}$\left({X}_t,Y_t\right)$
				\vspace{.00cm}
				\IF{$t = 0$}
				\vspace{.00cm}
				\STATE $g_t \leftarrow  G_t(x_t)$
				\vspace{.00cm}
				\ELSE
				\vspace{.00cm}
				\label{r}\STATE $g_t \leftarrow  G_t(x_t) + (1-\rho_t)[g_{t-1} -  G_t(x_{t-1})]$ 
				\vspace{.00cm}
				\ENDIF
				\vspace{.00cm}
				\STATE $\hat{v}_{t}\,\leftarrow \arg \min _{v \in \hat{\cD}_{t}}\left\langle g_{t}, v \right\rangle$
				\vspace{.00cm}
				\STATE $x_{t+1}\,\leftarrow\,x_{t}+\eta_{t}\left(\hat{v}_{t}-x_{t}\right)$
				\vspace{.00cm}
				\ENDFOR
				\vspace{.00cm}
				\STATE $t_0\,\leftarrow$ {\scshape DrawUniform}$\left(0,T-1\right)$
				\vspace{.00cm}
			\end{algorithmic}
			{\bfseries Return} $x_{t_0}$
			\label{alg}
		\end{algorithm}
		\makeatletter
		\renewcommand{\ALG@name}{Subroutine}
		\makeatother
		
		\begin{algorithm}[t]
			\caption{{\scshape CollectData}}
			{\bf Input} ${X}_{t-1},Y_{t-1}, x_t, n_t, r_{0}$
			
			\begin{algorithmic}[t]	
				\FOR{$k=1$ {\bfseries to} ${n_t}/{2 d}$}
				\vspace{.00cm}
				\FOR{$j=1$ {\bfseries to} $2 d$}
				\vspace{.00cm}
				\STATE $\ell\,\leftarrow\,2 d\, (k-1)+j$
				\vspace{.0cm}
				\IF{$j \leqslant d$}
				\vspace{.0cm}
				\STATE $x_{(\ell)}\,\leftarrow\,x_t+{r_0}\,e_{j}$
				\vspace{.00cm}
				\ELSE
				\vspace{-1mm}
				\STATE $x_{(\ell)}\,\leftarrow\,x_t-{r_0}\,e_{j-d}$
				\vspace{.0cm}
				\ENDIF
				\vspace{.0cm}
				\ENDFOR
				\vspace{.0cm}
				\ENDFOR
				\vspace{.0cm}
				\STATE  $X\,\leftarrow[x_{(1)},\ldots,x_{(n_t)}]^{\top}$
				\vspace{.0cm}
				\STATE  ${Y}\,\leftarrow$ {\scshape AskOracle}$({X})$
				\vspace{.0cm}
				\STATE ${X}_{t}\,\leftarrow$ $[X_{t-1}^\top, {X}^\top]^\top$
				\vspace{.0cm}
				\STATE  ${Y}_{t}\,\leftarrow$ $[Y_{t-1}^\top, {Y}^\top]^\top$
				\vspace{.00cm}
			\end{algorithmic}
			{\bf Return} ${X}_{t},{Y}_{t}$
			\label{datacollalg}
		\end{algorithm}
Based on the earlier described cumulative list of NFO query points and their corresponding measurements at iteration $t$, namely $X_t$ and $Y_t$, Reliable-FW computes the least squares estimate of $\cD$; see the Appendix for a detailed description. This procedure is denoted by the function {\scshape EstimatePolytope} in Algorithm~\ref{alg}. The least squares method is chosen due to its statistical and algebraic advantages. Having $X_t$ and $Y_t$, {\scshape EstimatePolytope} computes $\hat{\beta}_t = (\bar{X}_t^{\top}\bar{X}_t)^{-1}\bar{X}_t^{\top}Y_t$ where $\bar{X}_t:=[{X}_t,\, {-\mathbb{1}_{N_t}}]$. As a result, by letting $[\hat{A}_t, \hat{b}_t] := \hat{\beta}_t^{\top}$, the least squares estimate of $\cD$ at iteration $t$ can be written as $\hat{\cD}_t:=\{x\in\mathbb{R}^d:\hat{A}_tx-\hat{b}_t\leqslant\mathbb{0}_m\}$. It is worth mentioning that exploiting a least squares method on a collective set of data points to estimate $\cD$ was also used by \cite{scl}. 

As the next step, Reliable-FW seeks to obtain a high-quality estimation of $\nabla f(x_t)$. To this aim, it uses the Stochastic Recursive Momentum (STORM) strategy from \cite{cutkosky2019momentum}. As stated in Section~\ref{PF}, we have access to a stochastic gradient estimator $G(x,\xi)$. At iteration $t$, STORM calls a realization of $G(x,\xi)$, that is, $G_t(x)=G(x,\xi_t)$ (where $\xi_t$ is a realization of $\xi$) and then evaluates $G_t(x)$ at points $x=x_t$ and $x=x_{t-1}$. Given a predefined step size parameter $\rho_t$, STORM then computes $g_t$ as a reduced-variance unbiased estimation of $\nabla f (x_t)$ by using the step shown in line 9 of Algorithm~\ref{alg}.

Followed by acquiring an estimation $g_t$ of the gradient at point $x_t$ and an estimation $\hat{\cD}_t$ of the feasible set, Reliable-FW then estimates the conditional gradient $\hat{v}_{t}$ through a linear optimization over the estimated polytope $\hat{\cD}_t$ in line 10 of Algorithm~\ref{alg}. This step, called the Frank-Wolfe step, is generally known for its advantage of avoiding projections in constrained optimization and thus reducing the complexity while preserving the quality of outcomes. Having $\hat{v}_{t}$, the next iterate is then generated by taking a step towards $\hat{v}_{t}$ with step size $\eta_t$. Finally, via {\scshape DrawUniform}, Reliable-FW draws one of iterates uniformly at random and returns it as the output. The total number of gradient oracle calls (SFO complexity) of Algorithm~\ref{alg} is $2T$ and its total number of feasibility oracle calls (NFO complexity) is $N_{T-1}=n_0+\ldots+n_{T-1}$. Note that to run Algorithm~\ref{alg}, we only need to specify the inputs $x_{0}$, $r_0$, $T$, $n_{t}$, $\eta_{t}$, and $\rho_t$. 
\section{Analysis}\label{A}
In this section, we first present an informal step-by-step description of our analysis and then state our main theoretical results (exact details are given in the supplementary material).
Our analysis consists of two components: safety and convergence. The goal of the safety part is to obtain a sufficient condition on the number $n_t$ of measurements at each iteration such that the feasibility of all the iterates is certified with high probability. Later, the convergence part seeks to find a sufficient condition on the number of iterations $T$ such that the final solution satisfies \eqref{FWgap} as desired. This also implies some more sufficient conditions on $n_t$. Combining these two major parts, we find final expressions for $n_t$ and $T$ such that the requirements  are fulfilled. The values $n_t$ and $T$ will be given in terms of the optimization stepsize $\eta_t$. It turns out that there is a trade-off between $T$ and $n_t$, as the stepsize $\eta_t$ varies. Since $n_t$ and $T$ directly relate to the NFO and SFO complexities, respectively, the value of $\eta_t$ also tunes the trade-off between the NFO and SFO complexities. To obtain the best SFO complexity, the non-convex nature of the problem forces us to choose a stepsize $\eta_t$ larger than $\mathcal{O}(1/\sqrt{t})$. Applying this large stepsize in the analysis of \cite{scl} leads to large NFO complexities. To resolve this issue, we introduce a novel geometric shrinkage idea to tighten the safety analysis such that larger stepsizes can be tolerated on the safety part and hence the NFO complexity is minimized. Before describing this idea, we introduce some preliminary concepts. 

We next define the notion of geometric shrinkage that is a key tool for our safety analysis. 
\begin{definition}[Geometric Shrinkage]\label{d2}
	Let $\tau\in\mathbb{R}$ and $\cD = \{x \in \mathbb{R}^{d}: \,\,A x-b\leqslant \mathbb{0}_m\}$. Then the $\tau$-shrunk version of ${\cD}$ is defined as $
	{\cD}_{\tau}:=\left\{x \in \mathbb{R}^{d}: \ A x-b+\tau\mathbb{1}_m \leqslant 0\right\}$.
\end{definition}

\begin{wrapfigure}{r}{0.4\textwidth}
	\centering
	\vspace{-2mm}
	\includegraphics[width=0.95\linewidth]{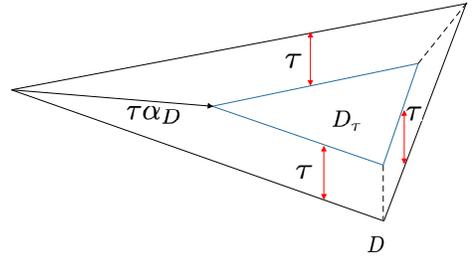}
	\vspace{-2mm}
	\caption{The geometrical interpretation of shrunk polytope ${\cD}_\tau$ and $\alpha_{\cD}$.}
	\vspace{-2mm}
	\label{alpha_shr}
\end{wrapfigure}

Denote $A=[a^1,\ldots,a^m]^{\top}\in\mathbb{R}^{m\times d}$ and $b=[b^1,\ldots,b^m]^{\top}\in\mathbb{R}^{d}$. An active point $v$ of the polytope $\cD$ is defined as a point that lies the boundary of $d$ linearly independent constraints. Note that $v$ lies on the boundary of the $i$-th constraint if $\langle a^i,v\rangle-b^i=0$. For an active point $v$, let $B_{v}=\{i_1,\ldots,i_d\}$ denote the set of $d$ constraints on which $v$ lies. Then define $A^{B_v}=[a^{i_1},\ldots,a^{i_d}]^{\top}$ and note that $A$ is invertible. Finally, letting $\operatorname{Act}(\cD)$ be the set of all active points of $\cD$ allows us to define $\rho_{\min}(\cD):=\min\{\rho_{\min}(A^{B_v}): v\in \operatorname{Act}(\cD)\}$.
Further, for $S \subseteq \mathbb{R}^{d}$, recall that $\pi_{S}(\cdot)$ denotes the projection on $S$, i.e., $\pi_{S}(x):=\arg\min_{v\in S}\|x-v\|$, for $x\in \mathbb{R}^{d}$. Next, we bound the distance between a point of $\cD$ and its projection on $\cD_\tau$.
\begin{lemma}\label{alpha00}
	Consider a positive constant $\tau >0$ and a polytope $\cD$ such that $\cD_{\tau}\neq \emptyset$. Then, for any $x \in {\cD}$, we have $\|x-\pi_{{\cD}_{\tau}} (x)\| \leqslant \alpha_{{\cD}} \tau$, where $\alpha_{{\cD}}={\sqrt{d}}/{\rho_{\min}({\cD})}$.
	Consequently, $\forall x\in\mathbb{R}^d:\, \|x-\pi_{\cD_{\tau}} (x)\|  \leqslant \|x-\pi_{\cD} (x)\| +\alpha_\cD\tau$. 
\end{lemma}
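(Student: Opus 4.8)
The plan is to prove the second (consequence) inequality by reducing it to the first via the triangle inequality, so that the real work lies in the bound for $x\in\cD$. Indeed, given an arbitrary $x\in\mathbb{R}^d$, set $y:=\pi_{\cD}(x)\in\cD$; since $\pi_{\cD_\tau}(x)$ is the nearest point of $\cD_\tau$ to $x$, for every $z\in\cD_\tau$ we have $\|x-\pi_{\cD_\tau}(x)\|\le\|x-z\|$, and choosing $z=\pi_{\cD_\tau}(y)$ gives
\[
\|x-\pi_{\cD_\tau}(x)\|\le\|x-y\|+\|y-\pi_{\cD_\tau}(y)\|=\|x-\pi_{\cD}(x)\|+\|y-\pi_{\cD_\tau}(y)\|,
\]
and the last term is at most $\alpha_\cD\tau$ by the first part applied to $y\in\cD$. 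Hence it suffices to prove $\|x-\pi_{\cD_\tau}(x)\|\le\alpha_\cD\tau$ for every $x\in\cD$.

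For the first part, fix $x\in\cD$ and write $w:=x-\pi_{\cD_\tau}(x)$. If $x\in\cD_\tau$ then $w=0$ and there is nothing to prove, so assume $x\notin\cD_\tau$. The plan is to exploit the normal-cone optimality characterization of the projection: $w$ lies in the normal cone of $\cD_\tau$ at $\pi_{\cD_\tau}(x)$, which for the polytope $\cD_\tau$ is the conical hull of the outer normals $a^i$ of the constraints active at $\pi_{\cD_\tau}(x)$. By Carathéodory's theorem for cones I may write $w=\sum_{i\in I}\mu_i a^i$ with $\mu_i\ge 0$ and $\{a^i:i\in I\}$ linearly independent (so $|I|\le d$). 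The key elementary estimate is that for each active $i$,
\[
\langle a^i,w\rangle=\langle a^i,x\rangle-\langle a^i,\pi_{\cD_\tau}(x)\rangle\le b^i-(b^i-\tau)=\tau,
\]
using $x\in\cD$ (so $\langle a^i,x\rangle\le b^i$) and that $i$ is active for $\cD_\tau$ (so $\langle a^i,\pi_{\cD_\tau}(x)\rangle=b^i-\tau$). Since the $\mu_i$ are nonnegative this yields $\|w\|^2=\sum_{i\in I}\mu_i\langle a^i,w\rangle\le\tau\|\mu\|_1\le\tau\sqrt{d}\,\|\mu\|_2$.

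It then remains to bound $\|\mu\|_2$ in terms of $\|w\|$ and to identify the resulting constant with $\rho_{\min}(\cD)$. Writing $A^I$ for the matrix with rows $a^i$, $i\in I$, linear independence of its rows gives $\|w\|=\|(A^I)^\top\mu\|\ge \rho_{\min}(A^I)\|\mu\|_2$, where $\rho_{\min}(A^I):=\min_{\|v\|=1}\|(A^I)^\top v\|>0$ is the smallest singular value of the full-row-rank matrix $A^I$. Combining with the previous display gives $\|w\|^2\le \tau\sqrt d\,\|w\|/\rho_{\min}(A^I)$, i.e. $\|w\|\le \sqrt d\,\tau/\rho_{\min}(A^I)$. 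Finally I would compare $\rho_{\min}(A^I)$ with $\rho_{\min}(\cD)$: because $\cD$ is compact the matrix $A$ has full rank $d$, so the linearly independent set $\{a^i:i\in I\}$ extends to $d$ linearly independent rows $B_v$ of $A$, whose intersection point $v$ is an active point of $\cD$; deleting rows cannot decrease the smallest singular value (padding a unit $v\in\mathbb{R}^{|I|}$ with zeros shows $\rho_{\min}(A^I)\ge\rho_{\min}(A^{B_v})\ge\rho_{\min}(\cD)$). This gives $\|w\|\le\sqrt d\,\tau/\rho_{\min}(\cD)=\alpha_\cD\tau$, completing the first part.

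The routine pieces here are the triangle-inequality reduction and the normal-cone optimality condition; the step I expect to require the most care is the last one — passing from the active submatrix $A^I$ (which may have fewer than $d$ rows and could be degenerate before Carathéodory is applied) to the $d\times d$ matrix $A^{B_v}$ appearing in the definition of $\rho_{\min}(\cD)$. This hinges on (i) reducing to a linearly independent active set, (ii) the monotonicity of the smallest singular value under row deletion, and (iii) using compactness of $\cD$ to guarantee that such an active point $v$ exists; getting the direction of the inequality right in each of these is the crux.
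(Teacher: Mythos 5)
Your proof is correct, and for the main inequality it takes a genuinely different route from the paper's. The paper argues globally: since the distance $x \mapsto \|x-\pi_{\cD_\tau}(x)\|$ is maximized over $\cD$ at a vertex $v\in\operatorname{Vert}(\cD)$, it suffices to bound $\|v-u\|$, where $u$ is the intersection of the $\tau$-shifted hyperplanes indexed by $B_v$; the explicit formula $v-u=\tau\,(A^{B_v})^{-1}\mathbb{1}_d$ then yields $\|v-u\|\leqslant \tau\sqrt{d}/\rho_{\min}(A^{B_v})$. You instead work locally at the projection point: $w=x-\pi_{\cD_\tau}(x)$ lies in the normal cone of $\cD_\tau$, conic Carath\'eodory writes $w=\sum_{i\in I}\mu_i a^i$ over a linearly independent active set, the estimate $\langle a^i,w\rangle\leqslant\tau$ (feasibility of $x$ in $\cD$ plus activity of $i$ for $\cD_\tau$) gives $\|w\|^2\leqslant\tau\sqrt{d}\,\|\mu\|_2$, and the singular-value bound $\|w\|\geqslant\rho_{\min}(A^I)\|\mu\|_2$ closes the loop; your row-deletion monotonicity and basis-extension steps relate $\rho_{\min}(A^I)$ to $\rho_{\min}(\cD)$ with the inequalities pointing the right way. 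What your approach buys: it never requires the shifted intersection point $u$ to belong to $\cD_\tau$ --- an assumption the paper's step $\|v-\pi_{\cD_\tau}(v)\|\leqslant\|v-u\|$ silently makes, and which can fail (e.g., at a degenerate vertex where more than $d$ constraints meet, $u$ may violate another shifted constraint); nor does it need the (true but unjustified) claim that the maximum of the distance function is attained at a vertex, which really rests on convexity of $x\mapsto\operatorname{dist}(x,\cD_\tau)$. What the paper's approach buys is a shorter, fully explicit computation with a transparent geometric picture of parallel shifted faces. Your treatment of the second claim (reducing general $x\in\mathbb{R}^d$ to the case $x\in\cD$ via the triangle inequality through $\pi_{\cD_\tau}(\pi_\cD(x))$) coincides with the paper's.
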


\subsection{Safety Analysis}
The general structure of this part is inspired by \cite{proof1,proof2} which was later used in \cite{scl}. However, the key component that enables us to achieve significantly improved results, is the geometric shrinkage idea. We start by defining the confidence ellipsoid.
Recall that at iteration $t$, Algorithm~\ref{alg} estimates the feasible set $\cD=\{x\in\mathbb{R}^d: Ax-b\leqslant\mathbb{0}_m\}$ by $\hat{\cD}_t:=\{x\in\mathbb{R}^d:\hat{A}_tx-\hat{b}_t\leqslant\mathbb{0}_m\}$ in which $\hat{\beta}_t = [\hat{A}_t, \hat{b}_t]^{\top}$ is the least squares estimate of  $\beta=[A,b]^{\top}$ based on the collective NFO queries. Using high-dimensional Chebyshev-type inequalities for this estimation, a confidence ellipsoid $\mathcal{E}_{t}({\zeta})$ can be constructed such that $\mathbb{P}\{\beta \in \mathcal{E}_{t}({\zeta})\} \geqslant 1-\zeta$. Considering the decomposition $\mathcal{E}_{t}({\zeta})=\prod_{i=1}^{m} \mathcal{E}_{t}^{i}({\zeta}/{m})$, each marginal ellipsoid $\mathcal{E}_{t}^{i}({{\zeta}}/{m})$ corresponds to a confidence set for $\beta^i$ (the $i$-th column of $\beta$) centered at $\hat{\beta}_t^i$  (the $i$-th column of $\hat{\beta}_t$). For our sub-Gaussian noise, $\mathcal{E}_{t}^{i}({{\zeta}}/{m})$ can be written as (see \cite{Dani,scl}),
\begin{align}\label{ellipsoid}
	\mathcal{E}_{t}^{i}\left({{\zeta}}/{m}\right) =\set{z\in\mathbb{R}^{(d+1)\times m}:\quad (\hat{\beta}_{t}^{i}-z)^{\top} \Sigma_{t}^{-1}(\hat{\beta}_{t}^{i}-z) \leq \psi^{-1}(\zeta)^{2}},
\end{align}
where $\Sigma_{t}$ is an estimation of $\operatorname{Cov}(\hat{\beta}_{t}^i)$ and $\psi^{-1}(\zeta)=\tilde{\mathcal{O}}(\sqrt{d}\log(1/\zeta))$. 
Note that each point $\beta_0\in\mathcal{E}_{t}(\zeta)$ represents an estimation of $\cD$. The safety set $S_{t}(\zeta)$ is then defined to be the intersection of all these estimated polytopes, that is, 
\begin{align}\label{safetydef}
S_{t}(\zeta)&:=\bigcap_{\beta_0 \in \mathcal{E}_{t}{(\zeta)}}\left\{x \in  \mathbb{R}^{d}:\quad \beta_0^{\top}\left[\begin{array}{c}{x} \\ {-1}\end{array}\right] \leqslant 0\right\}.
\end{align}
As a result, if $\beta \in \mathcal{E}_{t}(\zeta)$, then $S_{t}(\zeta) \subseteq {\cD}$ by definition. This means that $\beta \in \mathcal{E}_{t}(\zeta)$ and $x \in S_{t}(\zeta)$ together result in $x\in {\cD}$. Hence, feasibility of $x$ can be translated to lying in $S_t(\zeta)$. More precisely, by defining the probabilistic events $\mathcal{J}: \beta \in \bigcap_{t=0}^{T-1} \mathcal{E}_{t}(\zeta)$ and $\mathcal{R}: x_{t} \in S_{t}(\zeta), \forall t\in\{0,\ldots,T-1\}$ and letting $\zeta=\delta/T$, where $\delta\in[0,1]$ is a given confidence parameter, it can simply be shown that
\begin{equation}\label{Safecond}
 \mathbb{P}\{\mathcal{R}\mid \mathcal{J}\}=1\quad \Longrightarrow\quad \mathbb{P}\{x_0,\ldots,x_{T-1} \in \cD\} \geqslant 1-\delta.
 \end{equation}
From \eqref{Safecond} it follows that to guarantee safety, it suffices to ensure $\mathbb{P}\{\mathcal{R}\mid \mathcal{J}\}=1$. To this aim, an algebraic form of $S_t(\zeta)$ is obtained, that is,
	\begin{align}\label{Safeset}
	S_t(\zeta) = \set{x\in\mathbb{R}^d:\,\,\kappa(\zeta)^2\left(\frac{1}{N_{t}}+\left(x-\bar{x}_{t}\right)^{\top} Q_{t}\left(x-\bar{x}_{t}\right)\right) \leqslant \min _{i\in[m]} \hat{\epsilon}_{t}^{i}(x)^{2}},
	\end{align}
where $N_t=n_0+\ldots+n_t$ is the number of NFO queries up to $t$, $\hat{\epsilon}_{t}^{i}(x):=\hat{b}_{t}^{i}-\left\langle\hat{a}_{t}^{i}, x\right\rangle$ is the residual term for the $i$-th constraint. Moreover, $Q_t^{-1}:= \sum_{i=1}^{N_{t}}(x_{(i)}-\bar{x}_{t})(x_{(i)}-\bar{x}_{t})^{\top}$, where $x_{(1)},\ldots,x_{(N_t)}$ denotes all the NFO query points up to $t$ and $\bar{x}_{t}$ is their average, and $\kappa(\zeta)=\tilde{\OM}(\sqrt{\log(1/\zeta)})$.

The analysis then proceeds as follows: Given $\mathcal{J}$, to ensure $\mathbb{P}\{\mathcal{R}\mid \mathcal{J}\}=1$, we want $x_{0},\ldots,x_{T-1} \in S_{t}(\zeta)$ for $\zeta=\delta/m$. To this aim, $x_{0},\ldots,x_{T-1}$ must satisfy \eqref{Safeset}. To ensure that, we first upper-bound the left-hand side of \eqref{Safeset} by ${\kappa(\zeta)^{2}}(1+{d\Lambda^2}/{ r_0^{2}})/{N_t}$ and then lower-bound the right-hand side of \eqref{Safeset} by $h^2(\{\eta_t\},\{N_t\})$, following the notation $\{N_t\}=(N_0,\ldots,N_t)$. Finally, it suffices to determine $\{\eta_t\}$ and  $\{N_t\}$ such that ${\kappa(\zeta)^{2}}(1+{d\Lambda^2}/{ r_0^{2}})/{N_t}\leqslant h^2(\{\eta_t\},\{N_t\})$. 

We obtain improved sample complexity results because we find a sharp lower bound on $\min _{i\in[m]} \hat{\epsilon}_{t}^{i}(x_t)$, i.e., $h(\{\eta_t\},\{N_t\})$ which leads to an optimal value for $T$ (SFO complexity) while we keep $\set{N_t}$ (NFO complexity) decent. This is where the {\bf geometric shrinkage technique} plays its key role. To lower-bound the residual of $x$ over $\hat{\cD}_t$, i.e., $\hat{\epsilon}_{t}^{i}(x)$, we first relate it to its residual over $\cD$, that is, we show $\hat{\epsilon}_{t}^{i}(x_t) \geqslant  \epsilon^{i}(x_t)-\tilde{\OM}(\sqrt{d})/{\sqrt{N_t}}$, where ${\epsilon}^{i}(x_t):={b}^{i}-\left\langle{a}^{i}, x_t\right\rangle$ and thus, as the next step, we seek to lower-bound ${\epsilon}^{i}(x_t)$. Consider the Frank-Wolfe direction $\hat{v}_{t-1}$ and the optimization step $x_t=x_{t-1}+\eta_{t-1}(\hat{v}_{t-1}-x_{t-1})$ from Algorithm~\ref{alg}. Now by replacing $x_t$ from this expression in ${\epsilon}^{i}(x_t)$, we can write  ${\epsilon}^{i}(x_t)=(1-\eta_{t-1})(b^{i}-\langle a^{i}, x_{t-1}\rangle)+\eta_{t-1}(b^{i}-\langle a^{i}, \hat{v}_{t-1}\rangle)$. As the next step, add and subtract from $\hat{v}_{t-1}$, its projection on $\cD_{\tau}$ to get
\begin{align}\label{term}
\epsilon^{i}(x_{t})=(1-\eta_{t-1}) \epsilon^{i}(x_{t-1})+\eta_{t-1}(b^{i}-\langle a^{i}, \pi_{\cD_{\tau}}(\hat{v}_{t-1}) \rangle)-\eta_{t-1}\langle a^{i}, \hat{v}_{t-1}-\pi_{\cD_{\tau}}(\hat{v}_{t-1}) \rangle.
\end{align}
To lower-bound this expression, first note that $b^{i}-\langle a^{i}, \pi_{\cD_{\tau}}(\hat{v}_{t-1}) \rangle\geqslant \tau$ by definition. In addition, 
\begin{align*}
\left\|\hat{v}_{t-1}-\pi_{\cD_{\tau}} \left(\hat{v}_{t-1}\right)\right\| \leqslant \left\|\hat{v}_{t-1}-\pi_{\cD} \left(\hat{v}_{t-1}\right)\right\| +\alpha_{\cD}\tau\leqslant \frac{C_1}{\sqrt{N_{t-1}}}+\alpha_{\cD}\tau,
\end{align*}
where the first inequality follows from Lemma~\ref{alpha00} and since $\hat{v}_{t}\in\hat\cD_t$, the second inequality follows from the fact that we can bound the distance between any point in $\hat\cD_t$ and its projection on $\cD$ with $C_1/\sqrt{N_t}$, where $C_1=\tilde{\OM}(d)$. Lower-bounding the second and third terms on the right-hand side of \eqref{term} using geometric shrinkage technique, results in a recursive inequality for ${\epsilon}^{i}(x_t)$ in terms of ${\epsilon}^{i}(x_{t-1})$. By applying this recursion repeatedly until $t=0$, we obtain a formula for $h(\{\eta_t\},\{N_t\})$.

\subsection{Convergence Analysis} 
Next, we aim to find a sufficient condition on $T$ in terms of $\{\eta_t\}$ for a large enough $\{N_t\}$ such that \eqref{FWgap} holds.  Having defined $v_{t}:=\arg \min_{v\in \cD}\, \langle \nabla f(x_t), v\rangle$ and recalling $\hat{v}_{t}:=\arg \min_{v\in \hat{\cD}_t}\, \langle g_t, v\rangle$ from Algorithm~\ref{alg}, we start from the following inequality:
\begin{align}
\label{vd} V_{\cD}\left(x_{t}, f\right)=\left\langle\nabla f\left(x_{t}\right), x_{t}-v_{t}\right\rangle\leqslant\left\|\nabla f\left(x_{t}\right)-g_{t}\right\| \Lambda+\left\langle g_{t}, x_{t}-v_{t}\right\rangle.
\end{align}
To achieve \eqref{FWgap}, we need to upper-bound $\langle g_{t}, x_{t}-v_{t}\rangle$ on the right-hand side of \eqref{vd}. Since $v_t$ is not known, we instead upper-bound $\langle \nabla f(x_t), x_{t}-\hat{v}_{t}\rangle$ based on $L$-Lipschitz continuity of $\nabla f$ while controlling the difference between $\langle g_{t}, x_{t}-v_{t}\rangle$ and $\langle \nabla f(x_t), x_{t}-\hat{v}_{t}\rangle$ using the fact that the distance between a point of $\cD$ and its projection on $\hat{\cD}_t$ can be bounded by $\tilde{\OM}(d)/\sqrt{N_t}$. This results in
\begin{align*}
V_{\cD}(x_{t}, f)\leqslant\frac{f(x_{t})-f(x_{t+1})}{\eta_t} +\|\nabla f(x_{t})-g_{t}\|\left(\frac{C_1}{\sqrt{N_{t}}}+2\Lambda\right)
+\|g_t\|\frac{C_1}{\sqrt{N_{t}}}+\frac{L \eta_{t}}{2}\left(\frac{C_1}{\sqrt{N_{t}}}+\Lambda\right)^2.
\end{align*} 
The first term on the right-hand side is related to the optimization step, the second term is related to the gradient estimation error, and the third and forth terms can be controlled by $N_t$ and $\eta_t$. To fulfil \eqref{FWgap}, we require a large enough $T$ and small enough $\eta_t$ to make these four terms sufficiently small.

\subsection{Main Result} Next, we provide our main theoretical results (exact constants are in the Appendix). 
\begin{theorem}\label{maintheorem2}
	Consider Problem~\eqref{P} under Assumptions~\ref{f}-\ref{NFO}. Suppose $\epsilon >0$ is given and let $\eta_t=\rho_t=(t+2)^{-2/3}$, $n_t=\tilde{\mathcal{O}}(d^2(t+1)^{1/3}\log(1/\delta))$, and 
$	T= \tilde{\mathcal{O}}(\epsilon^{-3})$. Then, with probability at least $1-\delta$, in Algorithm~\ref{alg}, all the iterates are safe, i.e., $x_0,\ldots,x_{T-1}\in\cD$, all the query points lie within $r_0$-vicinity of $\cD$, and $
		\mathbb{E}_{[t_0]}\mathbb{E}_{0:t-1}[V_{\cD}\left(x_{t}, f\right)] \leqslant \epsilon$,
	where the expectation $\mathbb{E}_{0:t-1}$ is taken over all the randomness in Algorithm~\ref{alg} through iterations $0$ to $t-1$ and $\mathbb{E}_{[t_0]}$ is taken over choosing $t_0$ from $\set{0,\ldots,T-1}$ uniformly at random.
\end{theorem}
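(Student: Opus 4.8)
The plan is to merge the safety and convergence analyses around the single schedule $\eta_t=\rho_t=(t+2)^{-2/3}$, and to check that the prescribed $n_t$—equivalently $N_t=\sum_{s=0}^t n_s=\tilde{\OM}(d^2(t+1)^{4/3}\log(1/\delta))$—at once certifies feasibility and forces the averaged Frank-Wolfe gap below $\epsilon$. First I would set $\zeta=\delta/T$ and introduce the event $\mathcal{J}:\ \beta\in\bigcap_{t=0}^{T-1}\mathcal{E}_t(\zeta)$; a union bound over the $T$ confidence ellipsoids gives $\mathbb{P}\{\mathcal{J}\}\geq 1-\delta$, so by \eqref{Safecond} it suffices to prove $\mathbb{P}\{\mathcal{R}\mid\mathcal{J}\}=1$, i.e. that on $\mathcal{J}$ every iterate lies in $S_t(\zeta)$. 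The vicinity requirement is then immediate, since {\scshape CollectData} places each NFO query at distance exactly $r_0$ from the (feasible) iterate $x_t$; thus once the iterates are shown safe, all query points automatically lie in the $r_0$-vicinity of $\cD$, and it suffices to focus on the safety of the iterates.

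For the safety half I would discharge the algebraic membership condition \eqref{Safeset}. Its left-hand side is bounded above by $\kappa(\zeta)^2(1+d\Lambda^2/r_0^2)/N_t$ using the fixed $r_0$-spread of the query points, while its right-hand side is bounded below by unrolling the residual recursion \eqref{term}. This is where \textbf{geometric shrinkage} (Definition~\ref{d2}) is essential: inserting $\pi_{\cD_\tau}(\hat v_{t-1})$ into $x_t=x_{t-1}+\eta_{t-1}(\hat v_{t-1}-x_{t-1})$ injects the positive margin $b^i-\langle a^i,\pi_{\cD_\tau}(\hat v_{t-1})\rangle\geq\tau$, while Lemma~\ref{alpha00} controls the perturbation $\|\hat v_{t-1}-\pi_{\cD_\tau}(\hat v_{t-1})\|\leq C_1/\sqrt{N_{t-1}}+\alpha_\cD\tau$. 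Iterating the resulting inequality from $t$ down to $0$ yields $h(\{\eta_t\},\{N_t\})$, and I would verify that with the stated $\eta_t$ and $N_t$ the safety inequality $\kappa(\zeta)^2(1+d\Lambda^2/r_0^2)/N_t\leq h^2(\{\eta_t\},\{N_t\})$ holds for every $t$—this is precisely the requirement that pins down $n_t=\tilde{\OM}(d^2(t+1)^{1/3}\log(1/\delta))$.

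For convergence I would work conditioned on $\mathcal{J}$ (over which the estimation bounds behind \eqref{vd} hold) and start from the per-iteration estimate
\[
V_{\cD}(x_t,f)\leq\frac{f(x_t)-f(x_{t+1})}{\eta_t}+\|\nabla f(x_t)-g_t\|\Big(\frac{C_1}{\sqrt{N_t}}+2\Lambda\Big)+\|g_t\|\frac{C_1}{\sqrt{N_t}}+\frac{L\eta_t}{2}\Big(\frac{C_1}{\sqrt{N_t}}+\Lambda\Big)^2,
\]
take total expectation over the SFO randomness, and average over the uniform index $t_0$, reducing the target to $\tfrac1T\sum_{t=0}^{T-1}\mathbb{E}[V_{\cD}(x_t,f)]\leq\epsilon$. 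The optimization term is handled by summation by parts: since $1/\eta_t$ is increasing and $f$ is bounded on the compact set $\cD$ (Assumptions~\ref{f} and~\ref{D} give $|f|\leq F$ for some $F$), one gets $\sum_t(f(x_t)-f(x_{t+1}))/\eta_t\leq 2F/\eta_{T-1}=\OM(T^{2/3})$, an $\OM(T^{-1/3})$ average. The gradient-error term is controlled by a STORM variance estimate for line~9: writing $\varepsilon_t=g_t-\nabla f(x_t)$, unbiasedness together with the $L_0$-Lipschitz and $\sigma_0$-bounded properties of Assumption~\ref{G} and $\|x_t-x_{t-1}\|\leq\eta_{t-1}(\Lambda+C_1/\sqrt{N_{t-1}})$ yield the recursion $\mathbb{E}\|\varepsilon_t\|^2\leq(1-\rho_t)^2\mathbb{E}\|\varepsilon_{t-1}\|^2+2\rho_t^2\sigma_0^2+2L_0^2\mathbb{E}\|x_t-x_{t-1}\|^2$, whose solution is $\mathbb{E}\|\varepsilon_t\|^2=\OM((t+2)^{-2/3})$; by Jensen, $\sum_t\mathbb{E}\|\varepsilon_t\|=\OM(T^{2/3})$, again an $\OM(T^{-1/3})$ average. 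The last two terms are $\OM(T^{-2/3})$ because $\|g_t\|$ is bounded and $\sum_t\eta_t=\OM(T^{1/3})$. Summing, $\tfrac1T\sum_t\mathbb{E}[V_{\cD}(x_t,f)]=\tilde{\OM}(T^{-1/3})$, so $T=\tilde{\OM}(\epsilon^{-3})$ makes this at most $\epsilon$.

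The main obstacle I expect is the safety recursion under the \emph{aggressive} step size $\eta_t=(t+2)^{-2/3}$: non-convexity forces $\eta_t$ larger than the $\OM(1/\sqrt t)$ of \cite{scl}, and a naive treatment of \eqref{term} would blow up $N_t$ and hence the NFO complexity. The delicate point is to choose the shrinkage $\tau$ and the growth of $\{N_t\}$ so that the per-step margin of order $\eta_{t-1}\tau$, net of the projection error of order $\eta_{t-1}(\alpha_\cD\tau+C_1/\sqrt{N_{t-1}})$, accumulates through the $(1-\eta_{t-1})$-weighted recursion—starting from the strictly feasible $x_0$—into a residual lower bound $h=\tilde{\Omega}(1/\sqrt{N_t})$ strong enough for \eqref{Safeset}, yet loose enough that $N_t$ need grow only like $d^2t^{4/3}$ (i.e. $n_t=\tilde{\OM}(d^2t^{1/3})$). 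Getting this balance exactly right is where essentially all the difficulty lies; once the schedule is fixed, the convergence half is a fairly standard STORM-plus-telescoping argument.
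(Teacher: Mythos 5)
Your proposal is correct, and on the safety side it coincides with the paper's own proof: the same reduction of safety to $\mathbb{P}\{\mathcal{R}\mid\mathcal{J}\}=1$ via \eqref{Safecond}, the same upper bound $\kappa^2(1+d\Lambda^2/r_0^2)/N_t$ on the left-hand side of \eqref{Safeset}, and the same geometric-shrinkage recursion \eqref{term} unrolled into $h(\set{\eta_t},\set{N_t})$ (Proposition~\ref{nk}), verified for $\eta_t=(t+2)^{-2/3}$ and $N_t=\tilde{\OM}(d^2(t+1)^{4/3}\log(1/\delta))$; the convergence skeleton (the per-iteration bound of Proposition~\ref{conv} plus averaging over $t_0$) is also the paper's. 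Where you genuinely diverge is in how two terms of that bound are controlled. For the gradient error, the paper invokes the high-probability STORM bound of Lemma~\ref{GSL} and inserts it into the expected-gap estimate, whereas you run the in-expectation STORM variance recursion $\mathbb{E}\|\varepsilon_t\|^2\leqslant(1-\rho_t)^2\mathbb{E}\|\varepsilon_{t-1}\|^2+\OM(\rho_t^2+\eta_{t-1}^2)$ and apply Jensen to get $\mathbb{E}\|\varepsilon_t\|=\OM((t+2)^{-1/3})$. Your route is arguably the more faithful one for the theorem's in-expectation claim: a bound holding with probability $1-\delta$ does not by itself bound an expectation (the paper's step is salvageable only because Assumption~\ref{G} gives bounded support, which it leaves implicit), and your version avoids importing the extra $\log(1/\delta)$ factor into $T$. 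For the telescoping term, the paper asserts $\frac{1}{T}\sum_t(t+2)^{2/3}\bigl(\mathbb{E}[f(x_t)]-\mathbb{E}[f(x_{t+1})]\bigr)\leqslant(T+1)^{2/3}\bigl(f(x_0)-f(x^*)\bigr)/T$, which Abel summation with increasing weights only yields given a uniform bound on $f(x_t)-f(x^*)$; your explicit use of $|f|\leqslant F$ on the compact $\cD$ and summation by parts (giving $2F/\eta_{T-1}=\OM(T^{2/3})$) makes this step airtight at the cost of a slightly worse constant. Both routes land on the same $n_t$ and $T=\tilde{\OM}(\epsilon^{-3})$; the one caveat you share with the paper is the informal conditioning of the SFO expectation on the NFO event $\mathcal{J}$.
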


Based on Theorem~\ref{maintheorem2}, Reliable-FW guarantees all iterates are safe at least with probability $1-\delta$ and finds an $\epsilon$-FOSP in expectation with a total of $\tilde{\OM}(d^2\log(1/\delta)/\epsilon^4)$ NFO and $\tilde{\OM}(1/\epsilon^3)$ SFO queries. Next we study the case that exact gradient $\nabla f$ is available.  
\begin{theorem}\label{maindet2}
	Consider Problem~\eqref{P} under Assumptions~\ref{f}, \ref{D}, and \ref{NFO} and assume we have access to $\nabla f$. Suppose $\epsilon >0$ is given and let $\eta_t=(t+2)^{-1/2}$, $\rho_t=1$, $n_t=\tilde{\mathcal{O}}(d^2\log(1/\delta))$, 
	$	T= \mathcal{O}(\epsilon^{-2})$. Then, with probability at least $1-\delta$, in Algorithm~\ref{alg}, all the iterates are safe, i.e., $x_0,\ldots,x_{T-1}\in\cD$, all the query points lie within $r_0$-vicinity of $\cD$, and 
	$\mathbb{E}_{[t_0]}\mathbb{E}_{0:t-1}[V_{\cD}\left(x_{t}, f\right)] \leqslant \epsilon$,
	where $\mathbb{E}_{0:t-1}$ and $\mathbb{E}_{[t_0]}$ are specified in Theorem~\ref{maintheorem2}.
\end{theorem}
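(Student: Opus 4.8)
The plan is to treat Theorem~\ref{maindet2} as the specialization of the general analysis to the case $\rho_t=1$. Observe first that with $\rho_t=1$ the recursive momentum update in line~\ref{r} of Algorithm~\ref{alg} collapses to $g_t=G_t(x_t)=\nabla f(x_t)$, so the gradient estimate is exact and the gradient-error term $\|\nabla f(x_t)-g_t\|$ vanishes identically. I would then establish the claim in two parts, \emph{safety} and \emph{convergence}, exactly as in the outline preceding the statement, and finally combine them through a union bound over the $T$ iterations.

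For the convergence part, I would start from the per-iteration inequality already derived in Section~\ref{A},
\begin{align*}
V_{\cD}(x_t,f)\leqslant \frac{f(x_t)-f(x_{t+1})}{\eta_t}+\|\nabla f(x_t)-g_t\|\left(\frac{C_1}{\sqrt{N_t}}+2\Lambda\right)+\|g_t\|\frac{C_1}{\sqrt{N_t}}+\frac{L\eta_t}{2}\left(\frac{C_1}{\sqrt{N_t}}+\Lambda\right)^2,
\end{align*}
drop the second term (which is zero here) and bound $\|g_t\|=\|\nabla f(x_t)\|\leqslant M$ using Assumption~\ref{f}. Summing over $t=0,\ldots,T-1$, the telescoping term is controlled by Abel summation: since $1/\eta_t=(t+2)^{1/2}$ is increasing and $f$ is bounded on the compact set $\cD$, one gets $\sum_t (f(x_t)-f(x_{t+1}))/\eta_t=\OM(\sqrt{T})$. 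With $\eta_t=(t+2)^{-1/2}$ and $N_t=\tilde{\OM}(d^2(t+1)\log(1/\delta))$, the two remaining sums are also $\tilde{\OM}(\sqrt{T})$, the dominant contribution of the last term being $\tfrac{L\Lambda^2}{2}\sum_t\eta_t$. Dividing by $T$ and using that $t_0$ is drawn uniformly, so that $\mathbb{E}_{[t_0]}[V_{\cD}(x_{t_0},f)]=\tfrac{1}{T}\sum_{t=0}^{T-1}V_{\cD}(x_t,f)$, I obtain $\mathbb{E}_{[t_0]}\mathbb{E}_{0:t-1}[V_{\cD}(x_t,f)]=\tilde{\OM}(1/\sqrt{T})$, and setting this $\leqslant\epsilon$ yields the stated $T=\OM(\epsilon^{-2})$.

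For the safety part, I would follow the confidence-ellipsoid framework. Taking $\zeta=\delta/T$ and using $\mathbb{P}\{\beta\in\mathcal{E}_t(\zeta)\}\geqslant 1-\zeta$ together with a union bound certifies the event $\mathcal{J}$ with probability at least $1-\delta$, so by \eqref{Safecond} it suffices to prove $\mathbb{P}\{\mathcal{R}\mid\mathcal{J}\}=1$, i.e.\ $x_t\in S_t(\zeta)$ for all $t$, by induction with base case the given $x_0\in\cD$. For the inductive step I reduce the membership $x_t\in S_t(\zeta)$ to the algebraic condition \eqref{Safeset}, upper-bound its left-hand side by $\kappa(\zeta)^2(1+d\Lambda^2/r_0^2)/N_t$ using that the query points sit at distance $r_0$ along $\pm e_j$, and lower-bound the right-hand side via the geometric-shrinkage recursion \eqref{term}: choosing the shrinkage parameter $\tau$ to match the estimation error $\tilde{\OM}(\sqrt{d})/\sqrt{N_{t-1}}$ and unrolling $\epsilon^i(x_t)\geqslant(1-\eta_{t-1})\epsilon^i(x_{t-1})+\eta_{t-1}\bigl(\tau-\|a^i\|(C_1/\sqrt{N_{t-1}}+\alpha_{\cD}\tau)\bigr)$ down to $t=0$ produces the lower bound $h(\{\eta_t\},\{N_t\})$. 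It then remains to verify $\kappa(\zeta)^2(1+d\Lambda^2/r_0^2)/N_t\leqslant h^2(\{\eta_t\},\{N_t\})$, which, since $\kappa(\zeta)^2=\tilde{\OM}(\log(1/\delta))$ and $C_1=\tilde{\OM}(d)$, is guaranteed by the constant-per-iteration choice $n_t=\tilde{\OM}(d^2\log(1/\delta))$. That all query points lie in the $r_0$-vicinity of $\cD$ holds by construction of {\scshape CollectData} given $x_t\in\cD$.

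The main obstacle is the geometric-shrinkage step and its coupling to the convergence rate. The difficulty is that optimal SFO complexity forces the comparatively large step $\eta_t=(t+2)^{-1/2}$ (rather than $\OM(1/t)$), and under such a step the naive residual bound of \cite{scl} would demand an $N_t$ growing faster than linearly, inflating the NFO complexity. Choosing $\tau$ correctly in \eqref{term} and solving the resulting recursion so that $\epsilon^i(x_t)$ stays bounded below on the order of $1/\sqrt{N_t}$, thereby matching the left-hand side of \eqref{Safeset}, is precisely what allows a merely linearly growing $N_t$ (equivalently, a constant $n_t$) to certify safety. Confirming that one and the same schedule $\{\eta_t\},\{N_t\}$ simultaneously satisfies the convergence condition ($T=\OM(\epsilon^{-2})$) and the safety condition is the crux of the argument.
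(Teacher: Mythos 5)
Your proposal is correct and takes essentially the same route as the paper's proof: safety via the geometric-shrinkage recursion of Proposition~\ref{nk} with linearly growing $N_t$ (constant $n_t=C_2=\tilde{\OM}(d^2\log(1/\delta))$), exploiting the fact that both sides of the condition \eqref{Nt2} then decay like $1/(t+1)$, and convergence via Proposition~\ref{conv} with $g_t=\nabla f(x_t)$, $\|g_t\|\leqslant M$, and the $\OM(\sqrt{T})$ sums yielding $T=\OM(\epsilon^{-2})$. The only deviations are cosmetic: your explicit Abel-summation treatment of the weighted telescoping term (the paper telescopes directly with the constant $f(x_0)-f(x^*)$, whereas Abel summation gives a constant involving $\max_{x\in\cD}f(x)$, which is the more careful statement), and your phrasing of ``choosing $\tau$ to match the estimation error,'' which inverts but is equivalent to the paper's logic of fixing $\tau$ with $\cD_\tau\neq\emptyset$ and enlarging $n_t$ until $C_1L_A/\sqrt{N_t}\leqslant\tau/4$.
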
 
Theorem~\ref{maindet2} shows Reliable-FW (with gradients) guarantees all iterates are safe at least with probability $1-\delta$ and finds an $\epsilon$-FOSP in expectation with a total of $\tilde{\OM}(d^2\log(1/\delta)/\epsilon^2)$ NFO and $\OM(1/\epsilon^2)$ SFO queries. Next, we consider a setting in which the objective function is convex with a stochastic gradient estimator.
\begin{theorem}\label{convexstoch3}
	Consider Problem~\eqref{P} under Assumptions~\ref{f}-\ref{NFO} and assume $f$ is convex. Suppose $\epsilon >0$ is given and let $\eta_t=\rho_t=(t+2)^{-1}$, $n_t=\tilde{\mathcal{O}}(d^2(t+1)\log(1/\delta))$, and $T= \tilde{\mathcal{O}}(\epsilon^{-2})$.
	Then, with probability at least $1-\delta$, in Algorithm~\ref{alg}, all the iterates are safe, i.e., $x_0,\ldots,x_{T-1}\in\cD$, all the query points lie within $r_0$-vicinity of $\cD$, and  $f(x_{T-1})-f(x^*)\leqslant \epsilon$.
\end{theorem}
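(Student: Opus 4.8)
The plan is to exhibit a single event of probability at least $1-\delta$ on which all three conclusions (safety of every iterate, the $r_0$-vicinity of every query point, and the last-iterate suboptimality bound) hold \emph{simultaneously}. To that end I would split the confidence budget into two events: a \emph{feasibility event} $\mathcal{J}:\beta\in\bigcap_{t=0}^{T-1}\mathcal{E}_t(\delta/(2T))$ governed by the NFO noise, and a \emph{gradient-concentration event} $\mathcal{E}$ governed by the SFO noise, each arranged so that $\mathbb{P}\{\mathcal{J}^c\}\leqslant\delta/2$ and $\mathbb{P}\{\mathcal{E}^c\}\leqslant\delta/2$. A union bound then gives $\mathbb{P}\{\mathcal{J}\cap\mathcal{E}\}\geqslant 1-\delta$ (no independence of the two noise sources is needed), and the whole argument is carried out \emph{pathwise} on $\mathcal{J}\cap\mathcal{E}$. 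This is precisely what upgrades an in-expectation guarantee to the high-probability, joint statement that the theorem requires.

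Safety and the vicinity condition follow from the safety analysis essentially verbatim. On $\mathcal{J}$ one has $S_t(\delta/(2T))\subseteq\cD$, so it suffices to certify $x_t\in S_t$ for every $t$, which by \eqref{Safeset} reduces to $\kappa^2(1+d\Lambda^2/r_0^2)/N_t\leqslant h^2(\{\eta_t\},\{N_t\})$. The key point for the \emph{joint} claim is that the geometric-shrinkage lower bound on the residual $\epsilon^i(x_t)$ derived from \eqref{term} holds for \emph{any} $\hat v_t\in\hat\cD_t$ and never references the SFO noise; hence safety holds on $\mathcal{J}$ regardless of the gradient randomness. Plugging $\eta_t=(t+2)^{-1}$ (so that $N_t=\sum_{s\le t}n_s=\tilde{\OM}(d^2(t+1)^2\log(1/\delta))$) into the sufficient condition shows that $n_t=\tilde{\OM}(d^2(t+1)\log(1/\delta))$ suffices. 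The vicinity condition is immediate since {\scshape CollectData} places each query at distance $r_0$ from the safe point $x_t\in\cD$.

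The heart of the proof is the pathwise control of the STORM error $\epsilon_t:=g_t-\nabla f(x_t)$. Choosing the filtration $\mathcal{F}_{t-1}$ so that $x_t$ and all NFO data through step $t$ are measurable \emph{before} $\xi_t$ is drawn, line~9 gives $\epsilon_t=(1-\rho_t)\epsilon_{t-1}+\delta_t$ with $\delta_t:=\rho_t\zeta_t+(1-\rho_t)(\zeta_t-\zeta_t')$, where $\zeta_t:=G_t(x_t)-\nabla f(x_t)$ and $\zeta_t':=G_t(x_{t-1})-\nabla f(x_{t-1})$. Then $\mathbb{E}[\delta_t\mid\mathcal{F}_{t-1}]=0$, while Assumption~\ref{G} (bounded support and Lipschitzness) together with $\|x_t-x_{t-1}\|\leqslant\eta_{t-1}\Lambda$ yield the \emph{deterministic} bound $\|\delta_t\|\leqslant\rho_t\sigma_0+(1-\rho_t)(L_0+L)\eta_{t-1}\Lambda=\OM(1/t)$. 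Unrolling with $\rho_t=(t+2)^{-1}$ gives $\epsilon_t=\frac{1}{t+2}\sum_{s=1}^{t}(s+2)\delta_s+\OM(1/t)$, where $\{(s+2)\delta_s\}_s$ is a vector martingale-difference sequence with increments bounded by an absolute constant. A vector Azuma--Hoeffding (or Freedman) inequality, union-bounded over $t\in\{0,\dots,T-1\}$, then defines $\mathcal{E}$, on which $\|\epsilon_t\|\leqslant\tilde{\OM}(1/\sqrt{t})$ for every $t$ with probability at least $1-\delta/2$. This is exactly where the bounded-support assumption is indispensable: without almost-sure bounds on $\zeta_t$ the reweighted increments would not be bounded and only second-moment control would be available.

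On $\mathcal{J}\cap\mathcal{E}$ the convergence becomes a deterministic recursion. Combining $L$-smoothness, optimality of $\hat v_t$ over $\hat\cD_t$, convexity ($\langle\nabla f(x_t),x^*-x_t\rangle\leqslant f(x^*)-f(x_t)$), and the $\mathcal{J}$-guaranteed estimate $\|x^*-\pi_{\hat\cD_t}(x^*)\|\leqslant C_1/\sqrt{N_t}$ with $C_1=\tilde{\OM}(d)$, I would obtain for $h_t:=f(x_t)-f(x^*)$
\begin{equation*}
h_{t+1}\leqslant(1-\eta_t)h_t+\eta_t\|\epsilon_t\|\Big(\Lambda+\frac{C_1}{\sqrt{N_t}}\Big)+\eta_t(M+\sigma_0)\frac{C_1}{\sqrt{N_t}}+\frac{L\eta_t^2}{2}\Big(\Lambda+\frac{C_1}{\sqrt{N_t}}\Big)^2.
\end{equation*}
On $\mathcal{E}$ we substitute $\|\epsilon_t\|\leqslant\tilde{\OM}(1/\sqrt{t})$, and $N_t=\tilde{\OM}(d^2t^2\log(1/\delta))$ gives $C_1/\sqrt{N_t}=\tilde{\OM}(1/t)$, so the dominant perturbation is $c_t=\tilde{\OM}(t^{-3/2})$. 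With $\eta_t=(t+2)^{-1}$ the homogeneous factor telescopes, $\prod_{s=0}^{T-2}(1-\eta_s)=1/T$ and $\prod_{s=t+1}^{T-2}(1-\eta_s)=(t+2)/T$, whence
\begin{equation*}
h_{T-1}\leqslant\frac{1}{T}\Big(h_0+\sum_{t=0}^{T-2}(t+2)\,c_t\Big),\qquad \sum_{t=0}^{T-2}(t+2)\,c_t=\tilde{\OM}(\sqrt{T}),
\end{equation*}
so $h_{T-1}=\tilde{\OM}(1/\sqrt{T})$ holds \emph{pathwise} on $\mathcal{J}\cap\mathcal{E}$. Taking $T=\tilde{\OM}(\epsilon^{-2})$ forces $h_{T-1}\leqslant\epsilon$ on the whole event, which combined with the first two paragraphs yields all three conclusions with probability at least $1-\delta$.

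The main obstacle I anticipate is the third paragraph: converting the STORM error into a uniform-in-$t$ almost-sure bound with only logarithmic dependence on $\delta$. The delicate points are (i) arranging the filtration so that $\zeta_t,\zeta_t'$ are genuine conditional-mean-zero increments even though $x_t$ depends on \emph{both} noise sources; (ii) obtaining the clean deterministic bound $\|\delta_t\|=\OM(1/t)$ so that after the $(s+2)$ reweighting the martingale increments are $\OM(1)$; and (iii) selecting a vector concentration inequality whose union bound over $t$ costs only a $\log(T/\delta)$ factor that $\tilde{\OM}$ absorbs. Everything else — safety via geometric shrinkage, and the convergence recursion — is then purely deterministic on $\mathcal{J}\cap\mathcal{E}$.
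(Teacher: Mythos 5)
Your proposal is correct and follows essentially the same route as the paper's proof of this theorem: safety via Proposition~\ref{nk} (geometric shrinkage) with $N_t=\Theta(d^2(t+1)^2\log(1/\delta))$, then, conditioned on the confidence event, the Frank--Wolfe recursion $h_{t+1}\leqslant(1-\eta_t)h_t+\tilde{\OM}(t^{-3/2})$ with $\eta_t=(t+2)^{-1}$, closed by the high-probability STORM error bound $\|g_t-\nabla f(x_t)\|\leqslant\tilde{\OM}(1/\sqrt{t})$ (your unrolling argument yields exactly the paper's induction bound $f(x_t)-f(x^*)\leqslant 2C_8/\sqrt{t+2}$). The only differences are cosmetic and in your favor: you re-derive the STORM concentration via a martingale/Azuma argument where the paper simply invokes Lemma~\ref{GSL} from \cite{xie2019stochastic}, and your explicit $\delta/2$ split between the feasibility event $\mathcal{J}$ and the gradient event $\mathcal{E}$, together with the union bound over $t$, makes rigorous a piece of probability bookkeeping that the paper's proof leaves implicit (it applies the per-$t$ statement of Lemma~\ref{GSL} at every step of the induction without an explicit union bound or budget split).
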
 
Based on Theorem~\ref{convexstoch3}, when $f$ is convex, Reliable-FW guarantees all iterates are safe at least with probability $1-\delta$ and finds an $\epsilon$-suboptimal solution with a total of $\tilde{\OM}(d^2\log(1/\delta)/\epsilon^4)$ NFO and $\tilde{\OM}(1/\epsilon^2)$ SFO queries. Finally, we study the case in which $f$ is convex and $\nabla f$ is known.
 \begin{theorem}\label{convext}
	Consider Problem~\eqref{P} under Assumptions~\ref{f}, \ref{D}, and \ref{NFO}. Moreover, assume $f$ is convex and we have access to $\nabla f$. Suppose $\epsilon >0$ is given and let $\eta_t=2(t+2)^{-1}$, $\rho_t=1$, $n_t=\tilde{\mathcal{O}}(d^2(t+1)\log(1/\delta))$, and $T= \mathcal{O}(\epsilon^{-1})$.
	Then, with probability at least $1-\delta$, in Algorithm~\ref{alg}, all the iterates are safe, i.e., $x_0,\ldots,x_{T-1}\in\cD$, all the query points lie within $r_0$-vicinity of $\cD$, and  $f(x_{T-1})-f(x^*)\leqslant \epsilon$.
\end{theorem}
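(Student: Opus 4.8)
The plan is to prove the two assertions of Theorem~\ref{convext}---safety of all iterates (together with the vicinity condition) and the suboptimality bound $f(x_{T-1})-f(x^*)\le\epsilon$---separately, exactly along the decomposition of Section~\ref{A}. The vicinity claim is immediate and requires no work: {\scshape CollectData} places every query point at distance exactly $r_0$ from $x_t$ along $\pm e_j$, so once we certify $x_t\in\cD$, each query point automatically lies in the $r_0$-vicinity of $\cD$. Everything therefore reduces to a convergence argument and a safety argument, both carried out on the high-probability event $\mathcal{J}$ (that $\beta$ lies in every confidence ellipsoid), which has probability at least $1-\delta$ once $\zeta=\delta/(mT)$ is chosen and the $C_1/\sqrt{N_t}=\tilde{\OM}(d)/\sqrt{N_t}$ projection bounds hold. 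The two parts interact only through the shared schedule $\eta_t=2(t+2)^{-1}$ and $N_t=\sum_{s\le t}n_s=\tilde{\OM}(d^2t^2\log(1/\delta))$.

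For convergence I would first note that the deterministic regime $\rho_t=1$ forces $g_t=G_t(x_t)=\nabla f(x_t)$, so the gradient-estimation term in the displayed bound of Section~\ref{A} vanishes entirely. Writing $h_t:=f(x_t)-f(x^*)$ and applying $L$-smoothness to $x_{t+1}=x_t+\eta_t(\hat v_t-x_t)$, I would upper-bound $\langle\nabla f(x_t),\hat v_t-x_t\rangle$ by comparing $\hat v_t$ to $\pi_{\hat\cD_t}(x^*)$: since $\hat v_t$ minimizes $\langle\nabla f(x_t),\cdot\rangle$ over $\hat\cD_t$ and $\|x^*-\pi_{\hat\cD_t}(x^*)\|\le C_1/\sqrt{N_t}$ on $\mathcal{J}$, convexity gives $\langle\nabla f(x_t),\hat v_t-x_t\rangle\le-h_t+MC_1/\sqrt{N_t}$. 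This yields the contraction
\begin{equation*}
h_{t+1}\le(1-\eta_t)h_t+\eta_t\frac{MC_1}{\sqrt{N_t}}+\frac{L\eta_t^2}{2}\Bigl(\Lambda+\frac{C_1}{\sqrt{N_t}}\Bigr)^2.
\end{equation*}
With $\eta_t=2/(t+2)$ and $N_t=\tilde{\OM}(d^2t^2)$, both error terms are $\tilde{\OM}(t^{-2})$, so this collapses to the classical Frank--Wolfe recursion $h_{t+1}\le(1-\eta_t)h_t+\tfrac12 C\eta_t^2$; a one-line induction then gives $h_t\le 2C/(t+2)=\OM(1/t)$, and $T=\OM(1/\epsilon)$ forces $h_{T-1}\le\epsilon$.

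For safety I would follow the confidence-ellipsoid reduction: on $\mathcal{J}$ it suffices to certify $x_t\in S_t(\zeta)$ for every $t$, which by \eqref{Safeset} amounts to showing $\min_i\hat\epsilon_t^i(x_t)\ge\kappa(\zeta)\sqrt{(1+d\Lambda^2/r_0^2)/N_t}=\tilde{\OM}(\sqrt d)/\sqrt{N_t}$. Using $\hat\epsilon_t^i(x_t)\ge\epsilon^i(x_t)-\tilde{\OM}(\sqrt d)/\sqrt{N_t}$, the task becomes a lower bound on the true margin $\epsilon^i(x_t)$, and here the geometric shrinkage identity \eqref{term} is the engine: substituting $\pi_{\cD_\tau}(\hat v_{t-1})$ turns the middle term into a guaranteed margin $\eta_{t-1}\tau$, while Lemma~\ref{alpha00} together with $\|\hat v_{t-1}-\pi_\cD(\hat v_{t-1})\|\le C_1/\sqrt{N_{t-1}}$ bounds the displacement term by $\eta_{t-1}\bigl(C_1/\sqrt{N_{t-1}}+\alpha_\cD\tau\bigr)$. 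This produces a recursion for $\epsilon^i(x_t)$ in terms of $\epsilon^i(x_{t-1})$; choosing $\tau$ to balance the margin gain against the $\alpha_\cD\tau$ loss and unrolling under $\eta_t=2/(t+2)$---whose partial products $\prod_s(1-\eta_s)$ telescope cleanly---delivers the explicit lower bound $h(\{\eta_t\},\{N_t\})$. I would finish by checking that with $n_t=\tilde{\OM}(d^2(t+1)\log(1/\delta))$ this bound clears the $\tilde{\OM}(\sqrt d)/\sqrt{N_t}$ threshold at every $t$, and taking a union bound over $t$ and the $m$ constraints for the overall $1-\delta$ guarantee.

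The hard part will be the safety recursion, not the convergence (which is textbook once the exact gradient removes the noise term). The delicate point is that the aggressive step size $\eta_t=2/(t+2)$---far larger than $1/\sqrt t$, and in particular $\eta_0=1$ so that $x_1=\hat v_0$ jumps to an estimated vertex---lets the iterate rush toward the boundary, threatening to collapse the margin $\epsilon^i(x_t)$. The geometric shrinkage must supply just enough buffer through $\tau$ to absorb both this drift and the projection loss $\alpha_\cD\tau$, and the balance has to be tight enough that $N_t$ need only grow like $d^2t^2$---rather than the $d^3$ per iteration implicit in \cite{scl}---to keep $\epsilon^i(x_t)=\Omega(\tilde{\OM}(\sqrt d)/\sqrt{N_t})$. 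Pinning down the optimal $\tau$ and the constant in the unrolled margin so that it matches the safety threshold with the claimed $d^2$ dependence is where the real effort lies.
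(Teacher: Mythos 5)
Your proposal is correct and follows essentially the same route as the paper: the same safety/convergence decomposition, the same confidence-ellipsoid reduction with the geometric-shrinkage recursion unrolled under $\eta_t=2/(t+2)$ to meet the $\tilde{\OM}(\sqrt{d})/\sqrt{N_t}$ threshold (Proposition~\ref{nk}), and the same Frank--Wolfe contraction with $\OM(1/t^2)$ error terms closed by induction to get $f(x_{T-1})-f(x^*)=\OM(1/T)$. The only cosmetic differences are that you compare $\hat{v}_t$ directly to $\pi_{\hat{\cD}_t}(x^*)$ where the paper routes through $v_t$ and $\pi_{\hat{\cD}_t}(v_t)$ (equivalent, via convexity in the same place), and that the balance between the margin gain $\eta\tau$ and the loss $\eta L_A\alpha_{\cD}\tau$ is not obtained by tuning $\tau$ but is built in by the paper's normalization $L_A=1/(2\alpha_{\cD})$, with $\tau$ any fixed constant making $\cD_\tau\neq\emptyset$.
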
 
According to Theorem~\ref{convext}, when $f$ is convex, Reliable-FW (with gradients) guarantees all iterates are safe at least with probability $1-\delta$ and finds an $\epsilon$-suboptimal solution with a total of $\tilde{\OM}(d^2\log(1/\delta)/\epsilon^2)$ NFO and $\OM(1/\epsilon)$ SFO queries. 
\section{Experiments}\label{exp}

\begin{figure}[t]
	\centering
	\begin{minipage}{0.32\textwidth}
		\includegraphics[width=1\textwidth]{fig1}
		\caption*{(a)}
	\end{minipage}
	\begin{minipage}{0.32\textwidth}
		\includegraphics[width=1\textwidth]{fig2}
		\caption*{(b)}
	\end{minipage}
	\begin{minipage}{0.32\textwidth}
		\includegraphics[width=1\textwidth]{fig3}
		\caption*{(c)}
	\end{minipage}
	\caption{Safety, convergence, and comparison of Reliable-FW versus s0-LBM ans 0-LBM.}
	\label{t}
\end{figure}

We consider a similar non-convex optimization problem  used in \cite{LB} which models a cutting machine from \cite{num}. Consider the following problem
\begin{align*}
	\min_{(x,y)\in\mathbb{R}^2}\quad \frac{22}{xy}\left(50+\frac{40}{h_1(x,y)}\right),\qquad 
	\text{s.t.} \quad h_2(x,y)\leqslant 0,\,\,100\leqslant x\leqslant 200,\,\,0.08\leqslant y\leqslant 0.16,
\end{align*}
where, $h_1(x,y) = 127.5365-0.84629x-144.21 y+0.001703 x^{2}+0.3656 xy$ and $h_2(x,y)=0.0844-0.010035x+7.0877y$. We set $\sigma_0=0.001$ and $r_0 = 0.01$. For 0-LBM and s0-LBM, we used $\eta_t = 0.1, L = 7, M = 5$, and $\delta = 0.01$. We used $x_0 = [150,\,0.09]^{\top}$ and $\sigma = 0.01$ for Figure~\ref{t} (a) and (b) and $x_0 = [130, 0.09]^{\top}$ and $\sigma=0.06$ for Figure~\ref{t} (c).

Figure~\ref{t} (a) shows function-value in terms of NFO calls for Reliable-FW versus s0-LBM. Note that s0-LBM uses zeroth-order objective function queries opposite to first-order queries in Reliable-FW. Hence, for a fair comparison and to ensure that the performance difference is not affected by the choice of gradient estimator, we apply the (stochastic) gradient estimator of s0L-BM to Reliable-FW and compare only those function calls that have not been used for a gradient approximation.\\
In Figure~\ref{t} (b), function value with respect to objective function calls for Reliable-FW vs 0-LBM have been plotted. Similar to the previous experiment, we use the same gradient estimation for both methods. 
Note that since the Frank-Wolfe step in Reliable-FW is linear optimization of an stochastic vector over a stochastic set, our update is inherently random which result in some oscillation of function-value specially after reaching a local optimum which can also be seen in Figure~\ref{t} (a) and (b).
Figure~\ref{t} (c) illustrates the convergence and safety (feasibility) of iterates of Reliable-FW (marked with "$\times$"), where the original feasible set is depicted in solid and its corresponding estimations during the algorithm are shown in dashed lines.

\section*{Conclusion}
We proposed Reliable-FW to tackle non-convex learning under uncertain constraints while ensuring the safety of iterates. We incorporated a novel analysis, specifically, a mathematical technique called the geometric shrinkage, to obtain outperforming feasibility and gradient oracle complexities. Finally, we numerically verified the performance of Reliable-FW.

\section*{Acknowledgement}
The research of Mohammad Fereydounian is supported by NSF awards CIF-CRII-1755707 and CIF-1910056.  Zebang Shen is supported by NSF award CPS-1837253. Amin Karbasi is partially supported by NSF (IIS-1845032), ONR (N00014-19-1-2406), and AFOSR (FA9550-18-1-0160). Hamed Hassani is supported by the NSF HDR TRIPODS award 1934876, NSF award CPS-1837253, NSF award CIF-1910056, and NSF CAREER award CIF 1943064, and  Air Force Office of Scientific Research Young Investigator Program (AFOSR-YIP) under award FA9550-20-1-0111.

\newpage
\appendixwithtoc
\section{Proof of Lemma~\ref{alpha00}}\label{GS}
Recall from Section~\ref{A} the definition of an active point $v$ and the related concepts including its set of active constraints $B_v$, $A^{B_v}$, the set of active points $\operatorname{Act}(\cD)$, and $\rho_{m i n}({\cD})$. Before we start the proof of Lemma~\ref{alpha00}, we mention some observations regarding these concepts. 
Letting $b^{B_v}=[b^{i_1},\ldots,b^{i_d}]^{\top}\in\mathbb{R}^d$, we have $A^{B_v}v=b^{B_v}$. Based on the fact that $A^{B_v} \in \mathbb{R}^{d \times d}$ is invertible, we can write $v=(A^{B_v})^{-1} b^{B_v}$. Moreover, note that an active point of ${\cD}$ may or may not belong to ${\cD}$. Indeed, vertices of ${\cD}$ are those active points of ${\cD}$ which belong to ${\cD}$ and hence $\operatorname{Vert}({\cD})\subseteq \operatorname{Act}({\cD})$. Finally, it is important to note that the compactness of ${\cD}$ results in $\rho_{\min}({\cD})>0$. The proof of Lemma~\ref{alpha00} can now be presented as follows.

\begin{proof}
	Since the faces of $\cD_{\tau}$ are parallel to the faces of $\cD$, the maximum distance between a point $x\in \cD$ and its projection $\pi_{\cD_{\tau}} (x)\in \cD_{\tau}$ obviously occurs at some vertex $v\in \operatorname{Vert}(\cD)$. In other words, we have $\arg \max_{x\in \cD} \left\|x-\pi_{\cD_{\tau}} (x)\right\| \in \operatorname{Vert}(\cD)$.  Note that $B_v$ consists of $d$ independent hyper-plains that intersect at $v$. Now consider their shifted versions in $\cD_{\tau}$. These shifted hyper-plains are also independent. Denote their intersection by $u$. Note that  $B_v=B_u$. Therefore, for any $v\in \operatorname{Vert}(\cD)$, it follows that
	\begin{align*}
	&A^{B_v}v-b^{B_v} = 0 \quad \Rightarrow \quad v = \left(A^{B_v}\right)^{-1}b^{B_v},\\
	&A^{B_u}u-b^{B_u} +\tau\mathbb{1}_{\abs{B_u}}= 0 \quad \Rightarrow \quad u = \left(A^{B_u}\right)^{-1}\left(b^{B_u}-\tau\mathbb{1}_{\abs{B_u}}\right).
	\end{align*}
	By subtracting two previous lines and noting that $B_v=B_u$, for any $v\in\operatorname{Vert}(\cD)$, we conclude that
	\begin{align*}
	\left\|v-\pi_{\cD_{\tau}} (v)\right\| &\leqslant \left\|v-u\right\|= \tau \left\|\left(A^{B_v}\right)^{-1}\mathbb{1}_{\abs{B_v}}\right\| \leqslant \tau \,\sqrt{d} \left\|\left(A^{B_v}\right)^{-1}\right\|= \frac{\tau \,\sqrt{d}}{\rho_{\min}\left(A^{B_v}\right)}\leqslant \frac{\tau \,\sqrt{d}}{\rho_{\min}\left(\cD\right)},
	\end{align*}
	where in the last inequality, we used the fact that $\operatorname{Vert}(\cD)\subseteq \operatorname{Act}(\cD)$. Hence, for all $x\in\cD$, $\|x-\pi_{\cD_{\tau}} (x)\| \leqslant \alpha_{\cD} \tau$, where $\alpha_{\cD}=\sqrt{d}/\rho_{\min}(\cD)$.
	
	Moreover, note that $\pi_{\cD_{\tau}} (\pi_{\cD}(x))\in \cD_{\tau}$. Thus, for every $x\in\mathbb{R}^d$, we have $\|x-\pi_{\cD_{\tau}} (x)\|  \leqslant \|x-\pi_{\cD_{\tau}} (\pi_{\cD}(x))\|$ based on the definition of projection on $\cD_{\tau}$. Therefore, for every $x\in\mathbb{R}^d$,
	\begin{equation}\label{e1}
		 \|x-\pi_{\cD_{\tau}} (x)\|  \leqslant \|x-\pi_{\cD_{\tau}} (\pi_{\cD}(x))\|\leqslant \|x-\pi_{\cD} (x)\|+ \|\pi_{\cD} (x)-\pi_{\cD_{\tau}} (\pi_{\cD}(x))\|.
	\end{equation}
	Based on the earlier argument, since $\pi_{\cD} (x)\in\cD$, we have $\|\pi_{\cD} (x)-\pi_{\cD_{\tau}} (\pi_{\cD}(x))\|\leqslant \alpha_\cD\tau$. This together with \eqref{e1} results in $\|x-\pi_{\cD_{\tau}} (x)\|\leqslant \|x-\pi_{\cD} (x)\|+\alpha_\cD\tau$.
\end{proof}
\begin{remark}
	Note that $\alpha_{\cD}$ is a parameter of the polytope ${\cD}$ and does not depend on $\tau$. Figure~\ref{alpha_shr} illustrates the geometrical meaning of $\alpha_{\cD}$ for a simple polytope (here a triangle) in two dimensions. As it can be seen, the faces of ${\cD}_{\tau}$ are parallel to the faces of ${\cD}$ with a distance less than $\tau$ but not necessarily equal to $\tau$. 
\end{remark}
\section{Problem Normalization} For the ease of analysis, we consider a normalization in which $\tilde{A}={(2\alpha_{\cD} L'_A)^{-1}}A$ and $\tilde{b}={(2\alpha_{\cD} L'_A)^{-1}}b$, where $L'_{A}:=\max _{i\in[m]}\|a^{i}\|$. Based on this normalization, ${\cD}=\{x: \tilde{A} x-\tilde{b}\leqslant 0\}$. Denoting $\tilde{A}=[\tilde{a}^1,\ldots,\tilde{a}^m]^{\top}$, we now have $L_{A}:=\max _{i\in[m]}\|\tilde{a}^{i}\|={1}/{(2\alpha_{\cD})}$. For the compatibility with the NFO measurements, the variance $\sigma$ needs to be replaced with $\bar{\sigma}:=(2\alpha_{\cD} L'_A)^{-1}{\sigma}$. For the ease of notation and without loss of generality, we use $A$ and $b$ instead of $\tilde{A}$ and $\tilde{b}$. Hence, from now on, we consider 
\begin{equation}
\label{normalization} 	L_{A}=\max _{i\in[m]}\|a^{i}\|=\frac{1}{2\alpha_{\cD}},\quad \bar{\sigma}=\frac{\sigma}{2\alpha_{\cD} L'_A},
\end{equation}
and measurements can be written as $y(x)=A x-b+\theta$, where $\theta$ is a zero mean $\bar{\sigma}$-sub-Gaussian noise vector.
\section{Safety Analysis}\label{AS}
In this section, we first discuss in detail the task of {\scshape EstimatePolytope} in Algorithm~\ref{alg}, i.e., the least squares estimation of the feasible set. We then construct a confidence ellipsoid for this estimate. Based on this confidence ellipsoid, the safety set will be defined and an algebraic form for such a set will be derived. This algebraic form then helps us to acquire a sufficient condition on NFO complexity to achieve the safety of all the iterates. The general structure of this section is inspired by \cite{proof1,proof2} and was later used in \cite{scl}. Specifically, Lemma~\ref{SafetySet} and \ref{projection} and their proofs are taken from \cite{scl}. However, the key component in achieving the improved results is the geometric shrinkage idea discussed in the proof of Proposition~\ref{nk}. 

\subsection{Polytope Estimation}\label{PE}
Suppose we have $N_{t}$ measurements from the described noisy oracle at query points $x_{(1)} \ldots x_{(N_t)}$. Then for $i\in[N_t]$,
\begin{align}\label{mesurements}
y\left(x_{(i)}\right)^{\top}=x_{(i)}^{\top}A^{\top}-b^{\top}+\theta_{(i)}^{\top}.
\end{align}
Now by concatenating the quantities, we define
\begin{align*}
Y_{t}&:=\left[\begin{array}{c}{       {y\left(x_{(1)}\right)}^{\top}   } \\   {\vdots}   \\ {   {y\left(x_{(N_t)}\right)}^{\top}         }\end{array}\right] =: \left[{{y}^{1}_t} ,{\ldots}, {{y}^{m}_t}\right] \in \mathbb{R}^{N_{t}\times m}, \quad X_{t}=\left[\begin{array}{c}{x_{(1)}^{\top}} \\ {\vdots} \\ {x_{\left(N_{t}\right)}^{\top}}\end{array}\right] \in \mathbb{R}^{N_{t}\times d}.
\end{align*}
Also,
\begin{align*}
H_{t}:=\left[\begin{array}{c}{       {\theta_{(1)}}^{\top}   } \\   {\vdots}   \\ {   {\theta_{(N_t)}}^{\top}         }\end{array}\right] =: \left[{\theta}^{1}_t,{\ldots}, {{\theta}^{m}_t}\right] \in \mathbb{R}^{N_{t}\times m},
\end{align*}
together with
\begin{align*}
\beta=\left[\begin{array}{c}{A^{\top}} \\ {b^{\top}}\end{array}\right]=:  \left[{{\beta}^{1}} , {\ldots} , {{\beta}^{m}}\right]\in \mathbb{R}^{(d+1)\times m}.
\end{align*}
Concatenation of equalities in \eqref{mesurements} now can be written as
\begin{align}\label{measurements-concat}
Y_t = X_t\,A^{\top} - \mathbb{1}\,b^{\top} + H_t.
\end{align}
In order to write  \eqref{measurements-concat} column-wise, let
\begin{align*}
A=\left[\begin{array}{c}{(a^{1})^{\top}} \\ {\vdots} \\ {(a^{m})^{\top}}\end{array}\right], \quad b=\left[\begin{array}{c}{b^{1}} \\ {\vdots} \\ {b^{m}}\end{array}\right],
\end{align*}
where for $a^{i}\in\mathbb{R}^{d}$ and $b^{i}\in\mathbb{R}$ for $i\in[m]$. As a result, we get
\begin{align*}
y_t^{i}&=X_{t} \, a^{i}-b^{i}\, \mathbb{1}+\theta_t^{i}=\left[{X_{t}} , {-\mathbb{1}}\right]\left[\begin{array}{l}{a^{i}} \\ {b^{i}}\end{array}\right]+\theta^{i}_t=\bar{X}_{t}\beta^{i}+\theta^{i}_t,
\end{align*}
where $\bar{X}_{t}:=\left[{X_{t}}, {-\mathbb{1}}\right]$. Briefly, we have
\begin{align}\label{LSproblem}
y_t^{i}=\bar{X}_{t}\beta^{i}+\theta^{i}_t, \,\quad \forall i\in[m].
\end{align}
 The least squares estimate of $\beta^i$ from \eqref{LSproblem} is
\begin{align}\label{beta}
\hat{\beta}_{t}^i=\left[\bar{X}_{t}^{\top} \bar{X}_{t}\right]^{-1} \bar{X}_{t}^{\top} y_t^{i},\,\quad \forall i\in[m].
\end{align}
We use the following notation 
\begin{align*}
\hat{\beta}_t= \left[{{\hat{\beta}_t}^{1}}, {\ldots} , {{\hat{\beta}_t}^{m}}\right]=\left[\begin{array}{c}{\hat{A}_t^{\top}} \\ {\hat{b}_t^{\top}}\end{array}\right]=\left[\begin{array}{lcr}{\hat{a}_t^{1}}&\ldots&{\hat{a}_t^{m}} \\ {\hat{b}_t^{1}}&{\ldots}&{\hat{b}_t^{m}}\end{array}\right].
\end{align*}
Therefore, based on the estimation in \eqref{beta}, the estimated polytope will be
\begin{align}\label{Dt}
\hat{\cD}_t=\left\{x \in \mathbb{R}^{d}: \quad\hat{A}_tx-\hat{b}_t \leqslant 0\right\},\quad\text{ where}\quad \left[\hat{A}_t,\hat{b}_t\right]^{\top}=\hat{\beta}_t.
\end{align}

\subsection{Confidence Ellipsoids and Safety Sets} 
Using the notation of Section~\ref{PE}, the goal in this part is to construct a confidence ellipsoid $\mathcal{E}_{t}({\zeta})$ such that $\mathbb{P}\{\beta \in \mathcal{E}_{t}({\zeta})\} \geqslant 1-\zeta$. Based on \cite{Dani,scl}, such a confidence set can be written as
\begin{align*}
\mathcal{E}_{t}({\zeta}):=\prod_{i=1}^{m} \mathcal{E}_{t}^{i}\left(\frac{{\zeta}}{m}\right),
\end{align*}
where each marginal ellipsoid $\mathcal{E}_{t}^{i}({{\zeta}}/{m})$ corresponds to a confidence set for $\beta^i$ centered at $\hat{\beta}_t^i$. For our sub-Gaussian noise, these marginal ellipsoids can be written as
\begin{align*}
\mathcal{E}_{t}^{i}\left({{\zeta}}/{m}\right) =\set{z\in\mathbb{R}^{(d+1)\times m}:\quad (\hat{\beta}_{t}^{i}-z)^{\top} \Sigma_{t}^{-1}(\hat{\beta}_{t}^{i}-z) \leq \psi^{-1}(\zeta)^{2}},
\end{align*}
where $\Sigma_{t}=\bar{\sigma}^2(\bar{X}_t^{\top}\bar{X}_t)^{-1}$ is an estimation of $\operatorname{Cov}(\hat{\beta}_{t}^i)$. Moreover, if $N_t\geqslant e^{1/16}\zeta$, then under the sub-Gaussian noise, $\psi^{-1}(.)$ can be chosen as 
\begin{align}\label{si}
\psi^{-1}(\zeta)=\max \left\{\sqrt{128 d \log N_{t} \log \left(\frac{N_{t}^{2}}{\zeta}\right)}, \frac{8}{3} \log \frac{N_{t}^{2}}{\zeta}\right\}.
\end{align}
Having this construction, it follows that $\mathbb{P}\{\beta^{i} \in \mathcal{E}_{t}^{i}({\zeta}/{m})\} \geqslant 1-{\zeta}/{m}$. As a result, we have
\begin{align}
	\label{19}\mathbb{P}\left\{\beta \in \mathcal{E}_{t}(\zeta)\right\}  \geqslant 1-\zeta, \quad \forall \zeta \in[0,1].
\end{align}
Note that each point $\beta_0\in\mathcal{E}_{t}(\zeta)$ represents an estimation of the polytope  $\cD$. The safety set $S_{t}(\zeta)$ is then defined to be the intersection of all these estimated polytopes, that is, 
\begin{align*}
\vspace{-.1cm}
 S_{t}(\zeta)&:=\bigcap_{\beta_0 \in \mathcal{E}_{t}{(\zeta)}}\left\{x \in  \mathbb{R}^{d}:\quad \beta_0^{\top}\left[\begin{array}{c}{x} \\ {-1}\end{array}\right] \leqslant 0\right\}.
 \vspace{-.1cm}
\end{align*}
Note that if $\beta \in \mathcal{E}_{t}(\zeta)$, then $S_{t}(\zeta) \subseteq {\cD}$. This means that $\beta \in \mathcal{E}_{t}(\zeta)$ and $x \in S_{t}(\zeta)$ together result in $x\in {\cD}$. Hence,
\begin{equation}
\label{prob}\mathbb{P}\left\{x \in {\cD} \,\mid \, x \in S_{t}(\zeta), \,\beta \in \mathcal{E}_{t}(\zeta)\right\}=1.
\end{equation}
Based on \eqref{prob}, the safety of the iterates can be translated to lying in $S_{t}(\zeta)$. To this aim, for a given confidence parameter $\delta \in[0,1]$, let $\zeta={\delta}/{T}$ and define the following probabilistic events:
\begin{align}\label{jr}
\mathcal{J}: \,\beta \in \bigcap_{t=0}^{T-1} \mathcal{E}_{t}(\zeta),\quad\quad\quad \mathcal{R}:\, x_t \in S_{t}(\zeta),\,\,\forall t\in\set{0,\ldots,T-1}. 
\end{align}
Then assuming $\mathbb{P}\{\mathcal{R}\mid \mathcal{J}\}=1$, it follows that
\begin{align}
\nonumber\mathbb{P}\left\{x_{0},\ldots,x_{T-1}\in \cD\right\}&\geqslant\mathbb{P}\left\{x_{0},\ldots,x_{T-1}\in \cD\,\mid\, \mathcal{R},\mathcal{J}\right\}\mathbb{P}\left\{\mathcal{R}, \mathcal{J}\right\} \\
\label{jj}&\overset{\eqref{prob}}{=}\mathbb{P}\left\{\mathcal{R}, \mathcal{J}\right\}=\mathbb{P}\left\{\mathcal{J}\right\} \mathbb{P}\left\{\mathcal{R} \mid \mathcal{J}\right\}=\mathbb{P}\left\{\mathcal{J}\right\}\\
\nonumber&\geqslant 1-\sum_{t=0}^{T-1} \mathbb{P}\left\{\beta \notin \mathcal{E}_{t}(\zeta)\right\}\overset{\eqref{19}}{\geqslant} 1-T \zeta=1-\delta.
\end{align}
Hence,
\begin{equation}\label{Safecond2}
\mathbb{P}\{\mathcal{R}\mid \mathcal{J}\}=1\quad \Longrightarrow\quad \mathbb{P}\{x_0,\ldots,x_{T-1} \in \cD\} \geqslant 1-\delta.
\end{equation}

\subsection{Intermediate Lemmas}
From \eqref{Safecond2} it follows that to guarantee safety, it suffices to ensure $\mathbb{P}\{\mathcal{R}\mid \mathcal{J}\}=1$. To this aim, some intermediate steps will be taken. First, an algebraic form for $S_{t}(\zeta)$ will be computed in Lemma~\ref{SafetySet}. Next, an upper-bound for an essential component of this algebraic form will be obtained in Lemma~\ref{RtBound}. Having these, the distance between a point in $\hat{\cD}_t$ and its projection on $\cD$ will be upper-bounded in Lemma~\ref{projection}. Finally, based on the analysis provided in this section, the final safety result will be obtained in Section~\ref{FSR}. We start with the following lemma from \cite{scl}.

\begin{lemma}\label{SafetySet}
	Let $Q_t^{-1}:= \sum_{i=1}^{N_{t}}(x_{(i)}-\bar{x}_{t})(x_{(i)}-\bar{x}_{t})^{\top}$, where $x_{(1)},\ldots,x_{(N_t)}$ are all the query points of Algorithm~\ref{alg} up to iteration $t$ with $\bar{x}_{t}$ denoting their average. Further, let $\hat{\epsilon}_{t}^{i}(x):=\hat{b}_{t}^{i}-\langle\hat{a}_{t}^{i}, x\rangle$ and $\kappa=\kappa(\zeta):=\bar{ \sigma}\psi^{-1}({\zeta}/{m})$, where $\bar{\sigma}$ and $\psi^{-1}(\cdot)$ are defined in \eqref{normalization} and \eqref{si}, respectively. Then under Assumption~\ref{NFO}, we have
	\begin{align*}
	S_t(\zeta) = \set{x\in\mathbb{R}^d:\,\,\kappa^2\left(\frac{1}{N_{t}}+\left(x-\bar{x}_{t}\right)^{\top} Q_{t}\left(x-\bar{x}_{t}\right)\right) \leqslant \min _{i\in[m]} \hat{\epsilon}_{t}^{i}(x)^{2}}.
	\end{align*}
\end{lemma}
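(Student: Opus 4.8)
The plan is to unwind the geometric definition of $S_t(\zeta)$ into an explicit ellipsoidal support-function computation, and then to reduce the resulting quadratic form by a Schur-complement calculation. Throughout, abbreviate $w:=[x^\top,-1]^\top\in\mathbb{R}^{d+1}$. By definition $x\in S_t(\zeta)$ iff $\beta_0^\top w\leqslant \mathbb{0}_m$ for every $\beta_0\in\mathcal{E}_t(\zeta)$, i.e.\ iff $(\beta_0^i)^\top w\leqslant 0$ for every $i\in[m]$ and every $\beta_0\in\mathcal{E}_t(\zeta)$. Because $\mathcal{E}_t(\zeta)=\prod_{i=1}^m\mathcal{E}_t^i(\zeta/m)$ is a product set, the $i$-th column $\beta_0^i$ may be varied independently over $\mathcal{E}_t^i(\zeta/m)$, so this is equivalent to the $m$ decoupled conditions $\sup_{z\in\mathcal{E}_t^i(\zeta/m)} z^\top w\leqslant 0$.

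First I would compute, for each $i$, the support function of the ellipsoid $\mathcal{E}_t^i(\zeta/m)=\{z:(\hat\beta_t^i-z)^\top\Sigma_t^{-1}(\hat\beta_t^i-z)\leqslant\psi^{-1}(\zeta/m)^2\}$ in the direction $w$. Substituting $u=z-\hat\beta_t^i$ and maximizing $u^\top w$ subject to $u^\top\Sigma_t^{-1}u\leqslant\psi^{-1}(\zeta/m)^2$ --- a one-line Cauchy--Schwarz argument after the change of variable $v=\Sigma_t^{-1/2}u$ --- gives
\[
\sup_{z\in\mathcal{E}_t^i(\zeta/m)} z^\top w=(\hat\beta_t^i)^\top w+\psi^{-1}(\zeta/m)\sqrt{w^\top\Sigma_t\,w}.
\]
Since $\hat\beta_t^i=[(\hat a_t^i)^\top,\hat b_t^i]^\top$, the center term is $(\hat\beta_t^i)^\top w=\langle\hat a_t^i,x\rangle-\hat b_t^i=-\hat\epsilon_t^i(x)$, and using $\Sigma_t=\bar\sigma^2(\bar X_t^\top\bar X_t)^{-1}$ together with $\kappa=\bar\sigma\,\psi^{-1}(\zeta/m)$, the $m$ conditions above become
\[
\kappa\sqrt{w^\top(\bar X_t^\top\bar X_t)^{-1}w}\leqslant\hat\epsilon_t^i(x)\ \ (\forall i\in[m]),\qquad\text{i.e.}\qquad \kappa\sqrt{w^\top(\bar X_t^\top\bar X_t)^{-1}w}\leqslant\min_{i\in[m]}\hat\epsilon_t^i(x).
\]

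The remaining work is to square this inequality and to evaluate the quadratic form. Squaring demands a sign check, which I expect to be the one delicate point: the left-hand side is nonnegative, so the condition already forces $\hat\epsilon_t^i(x)\geqslant 0$ for all $i$ (equivalently $x\in\hat\cD_t$, consistent with $S_t(\zeta)\subseteq\hat\cD_t$ since $\hat\beta_t\in\mathcal{E}_t(\zeta)$); on this region squaring is an equivalence and produces $\kappa^2\,w^\top(\bar X_t^\top\bar X_t)^{-1}w\leqslant\min_{i}\hat\epsilon_t^i(x)^2$. The main computational step is the quadratic form itself. Writing $\bar X_t=[X_t,-\mathbb{1}_{N_t}]$ yields the block matrix
\[
\bar X_t^\top\bar X_t=\begin{bmatrix}\textstyle\sum_i x_{(i)}x_{(i)}^\top & -N_t\bar x_t\\[2pt] -N_t\bar x_t^\top & N_t\end{bmatrix},
\]
whose Schur complement with respect to the bottom-right entry is exactly $\sum_i x_{(i)}x_{(i)}^\top-N_t\bar x_t\bar x_t^\top=\sum_i(x_{(i)}-\bar x_t)(x_{(i)}-\bar x_t)^\top=Q_t^{-1}$. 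Substituting the block-inverse formula into $w^\top(\bar X_t^\top\bar X_t)^{-1}w$ with $w=[x^\top,-1]^\top$ and simplifying (the cross terms recombine into a square centered at $\bar x_t$) gives
\[
w^\top(\bar X_t^\top\bar X_t)^{-1}w=\frac{1}{N_t}+(x-\bar x_t)^\top Q_t(x-\bar x_t),
\]
and inserting this into the squared condition reproduces exactly the claimed algebraic description of $S_t(\zeta)$. The main obstacle is thus the block-inverse/Schur-complement bookkeeping that converts the raw ellipsoidal quadratic form into the centered form with the additive $1/N_t$ offset; the other genuinely careful point is the sign argument that legitimizes the squaring step.
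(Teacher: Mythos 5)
Your proposal is correct and follows essentially the same route as the paper's own proof: decouple the constraint over the product ellipsoid $\mathcal{E}_t(\zeta)=\prod_i\mathcal{E}_t^i(\zeta/m)$ into per-column conditions, compute the support function of each marginal ellipsoid (the paper does this via an explicit re-parametrization lemma rather than your Cauchy--Schwarz argument, but it is the same one-line computation), and evaluate $w^\top(\bar{X}_t^\top\bar{X}_t)^{-1}w$ by block inversion, where the paper's matrix-inversion and Sherman--Morrison--Woodbury lemmas carry out exactly the Schur-complement bookkeeping you describe, yielding the centered form $\frac{1}{N_t}+(x-\bar{x}_t)^\top Q_t(x-\bar{x}_t)$. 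If anything, you are more careful than the paper on the one delicate point you flag: the paper squares its inequality with no sign discussion at all, whereas you note that the unsquared condition forces $\hat{\epsilon}_t^i(x)\geqslant 0$ (i.e.\ $x\in\hat{\cD}_t$) before squaring.
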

\begin{proof}
	We start with re-parametrizing $\mathcal{E}_{t}^{i}$.  Note that due to Lemma~\ref{ellipse}, we can rewrite \eqref{ellipsoid} as
	\begin{equation}\label{confiset}
	\mathcal{E}_{t}^{i}\left(\frac{\zeta}{m}\right)=\left\{\hat{\beta}_{t}^{i}-\psi^{-1}\left(\frac{\zeta}{m}\right) \Sigma_{t}^{1 / 2} u:\quad u\in\mathbb{R}^d,\,\,\|u\| \leqslant 1\right\}.
	\end{equation}
	Using \eqref{confiset}, the safety set can be rewritten as
	\begin{align*}
	S_{t}(\zeta)&=\left\{x \in \mathbb{R}^{d}:\,\, \forall i \in[m]\,\forall \beta^{i} \in \mathcal{E}_{t}^{i}\left(\frac{\zeta}{m}\right):\,\, \beta^{i \top}\left[\begin{array}{c}{x} \\ {-1}\end{array}\right] \leqslant 0 \right\}\\
	&=\left\{x \in \mathbb{R}^{d}: \,\, \forall i \in [m]\,\forall u\in\mathbb{R}^d,\,\,\|u\| \leqslant 1:\quad \left(\hat{\beta}_{t}^{i}-\psi^{-1}\left(\frac{\zeta}{m}\right) \Sigma_{t}^{1 / 2} u\right)^{\top}\left[\begin{array}{c}{x} \\ {-1}\end{array}\right] \leqslant 0 \right\}.
	\end{align*}
	Note that 
	\begin{equation*}
	\left(\hat{\beta}_{t}^{i}-\psi^{-1}\left(\frac{\zeta}{m}\right) \Sigma_{t}^{1 / 2} u\right)^{\top}\left[\begin{array}{c}{x} \\ {-1}\end{array}\right]=\hat{\beta}_{t}^{i \top}\left[\begin{array}{c}{x} \\ {-1}\end{array}\right]-\psi^{-1}\left(\frac{\zeta}{m}\right) u^{\top} \Sigma_{t}^{1 / 2}\left[\begin{array}{c}{x} \\ {-1}\end{array}\right].
	\end{equation*}
	Therefore, for any given $u$ with $\|u\| \leqslant 1$,  we have
	\begin{equation*}
	u^{\top} \Sigma_{t}^{1 / 2}\left[\begin{array}{c}{x} \\ {-1}\end{array}\right] \geqslant-\|u\|\left\|\Sigma_{t}^{1 / 2}\left[\begin{array}{c}{x} \\ {-1} \end{array}\right]\right\| \geqslant-\left\|\Sigma_{t}^{1 / 2}\left[\begin{array}{c}{x} \\ {-1}\end{array}\right]\right\|.
	\end{equation*}
	Hence,
	\begin{equation*}
	\forall u\in\mathbb{R}^d,\,\,\|u\| \leqslant 1:\quad \left(\hat{\beta}_{t}^{i}-\psi^{-1}\left(\frac{\zeta}{m}\right) \Sigma_{t}^{1 / 2} u\right)^{\top}\left[\begin{array}{c}{x} \\ {-1}\end{array}\right] \leqslant 0,
	\end{equation*}
	if and only if
	\begin{equation*}
	\hat{\beta}_{t}^{i \top}\left[\begin{array}{c}{x} \\ {-1}\end{array}\right]+\psi^{-1}\left(\frac{\zeta}{m}\right)\left\|\Sigma_{t}^{1/2}\left[\begin{array}{c}{x} \\ {-1}\end{array}\right]\right\| \leqslant 0.
	\end{equation*}
	As a result, 
	\begin{align}\label{final-safety}
	S_{t}(\zeta)=\left\{x \in \mathbb{R}^{d}: \quad \forall i \in [m]:\quad \hat{\beta}_{t}^{i \top}\left[\begin{array}{c}{x} \\ {-1}\end{array}\right]+\psi^{-1}\left(\frac{\zeta}{m}\right)\left\|\Sigma_{t}^{1/2}\left[\begin{array}{c}{x} \\ {-1}\end{array}\right]\right\| \leqslant 0 \right\}.
	\end{align}
	It now remains to compute and replace $\Sigma_{t}$ in \eqref{final-safety}. It follows that
	\begin{align*}
	\bar{X}_{t}^{\top} \bar{X}_{t}=\left[\begin{array}{c}{X_{t}^{\top}} \\ {-\mathbbm{1}^{\top}}\end{array}\right]\left[{X_{t}} , {-\mathbbm{1}}\right]=\left[\begin{array}{cc}{X_{t}^{\top} X_{t}} & {-X_{t}^{\top} \mathbbm{1}} \\ {-\mathbbm{1}^{\top} X_{t}} & {N_{t}}\end{array}\right].
	\end{align*}
	Define
	\begin{align*}
	\left[\begin{array}{ll}{O_{11}} & {O_{12}} \\ {O_{21}} & {O_{22}}\end{array}\right]:= \left(\bar{X}_{t}^{\top} \bar{X}_{t}\right)^{-1}.
	\end{align*}
	In order to compute the elemets of this block matrix, define $\bar{x}_{t}=(x_{(1)}+\cdots+x_{\left(N_{t}\right)})/N_t={X_{t}^{\top} \mathbbm{1}}/{N_{t}}$ and note that due to Lemma~\ref{matrix-inversion}, we have
	\begin{align}
	\nonumber O_{11} &=\left(X_{t}^{\top} X_{t}-\left(-X_{t}^{\top} \mathbbm{1}\right) \frac{1}{N_{t}}\left(-\mathbbm{1}^{\top} X_{t}\right)\right)^{-1}=\left(X_{t}^{\top} X_{t}-N_{t} \bar{x}_{t} \bar{x}_{t}^{\top}\right)^{-1}=: Q_t, \\ O_{22} \nonumber&=\left(N_{t}-\left(-\mathbbm{1}^{\top} X_{t}\right)\left(X_{t}^{\top} X_{t}\right)^{-\mathbbm{1}}\left(-X_{t}^{\top} \mathbbm{1}\right)\right)^{-1}=\left(N_{t}-N_{t}^{2} \bar{x}_{t}^{\top}\left(X_{t}^{\top} X_{t}\right)^{-1} \bar{x}_{t}\right)^{-1} \\ 
	\label{1}&=N_{t}^{-1}+N_{t}^{-1} N_{t}^{2} \bar{x}_{t}^{\top}\left(X_{t}^{\top} X_{t}-\bar{x}_{t} N_{t}^{-1} N_{t}^{2} \bar{x}_{t}^{\top}\right)^{-1} \bar{x}_{t} N_{t}^{-1} \\
	\nonumber&=\frac{1}{N_{t}}+\bar{x}_{t}^{\top} Q_t \bar{x}_{t},\\
	\nonumber O_{12} &=-Q_t\left(-X_{t}^{\top} \mathbbm{1}\right) \frac{1}{N_{t}}=Q_t \bar{x}_{t},\\
	\nonumber O_{21} &=-\frac{1}{N_{t}}\left(-\mathbbm{1}^{\top} X_{t}\right) Q_t=\bar{x}_{t}^{\top} Q_t,
	\end{align}
	where \eqref{1} holds due to Lemma~\ref{matrix-inversion2}. Moreover, we have
	\begin{align}
	\nonumber Q_t&=\left(X_{t}^{\top} X_{t}-N_{t} \bar{x}_{t} \bar{x}_{t}^{\top}\right)^{-1}=\left(\left[x_{(1)}, \cdots, x_{\left(N_{t}\right)}\right]\left[\begin{array}{c}{x_{(1)}^{\top}} \\ {\vdots} \\ {x_{\left(N_t\right)}^{\top}}\end{array}\right]-N_{t} \bar{x}_{t} \bar{x}_{t}^{\top}\right)^{-1}\\
	\label{Rt}&=\left(\sum_{i=1}^{N_{t}} x_{(i)} x_{(i)}^{\top}-N_{t} \bar{x}_{t}{\bar{x}_{t}}^{\top}\right)^{-1}=\left(\sum_{i=1}^{N_{t}}\left(x_{(i)}-\bar{x}_{t}\right)\left(x_{(i)}-\bar{x}_{t}\right)^{\top}\right)^{-1}.
	\end{align}
	Putting things together leads to
	\begin{align}\label{sigma}
	\Sigma_{t}=\bar{\sigma}^{2}\left(\bar{X}_{t}^{\top} \bar{X}_{t}\right)^{-1} = \bar{\sigma}^{2}\left[\begin{array}{cc}{Q_t} & {Q_t \bar{x}_{t}} \\ {\bar{x}_{t}^{\top} Q_t} & {\frac{1}{N_{t}}+\bar{x}_{t}^{\top} Q_t \bar{x}_{t}}\end{array}\right].
	\end{align}
	Having the algebraic form of $\Sigma_t$ in \eqref{sigma}, we now derive an algebraic form for $S_t(\zeta)$. Note that based on \eqref{final-safety}, $x \in S_{t}(\zeta)$ if and only if 
	\begin{align}\label{3}
	\forall i \in[m]: \quad \psi^{-1}\left(\frac{\zeta}{m}\right)\left\|\Sigma_{t}^{1 / 2}\left[\begin{array}{c}{x} \\ {-1}\end{array}\right]\right\| \leqslant-\hat{\beta}_{t}^{i \top}\left[\begin{array}{c}{x} \\ {-1}\end{array}\right]=\hat{b}_{t}^{i}-\left\langle\hat{a}_{t}^{i}, x\right\rangle.
	\end{align}
	Recall that $\hat{\epsilon}_{t}^{i}(x)=\hat{b}_{t}^{i}-\left\langle\hat{a}_{t}^{i}, x\right\rangle$ and $\kappa=\bar{\sigma} \psi^{-1}({\zeta}/{m})$. Therefore,
	\begin{align}
	\nonumber x \in S_{t}(\zeta)
	\quad&\Leftrightarrow\quad \forall i \in[m]: \quad \psi^{-1} \left(\frac{\zeta}{m}\right)^{2}\left[\begin{array}{c}{x} \\ {-1}\end{array}\right]^{\top} \Sigma_{t}\left[\begin{array}{c}{x} \\ {-1}\end{array}\right] \leqslant \hat{\epsilon}_{t}^{i}(x)^{2}\\
	\nonumber&\Leftrightarrow\quad \forall i \in[m]: \quad\bar{\sigma}^{2} \psi^{-1}\left(\frac{\zeta}{m}\right)^{2}\left(\frac{1}{N_{\mathrm{t}}}+\left(x-\bar{x}_{t}\right)^{\top} Q_t\left(x-\bar{x}_{t}\right)\right) \leqslant \hat{\epsilon}_{t}^{i}(x)^{2}\\
	\nonumber&\Leftrightarrow\quad \kappa^2\left(\frac{1}{N_{t}}+\left(x-\bar{x}_{t}\right)^{\top} Q_t\left(x-\bar{x}_{t}\right)\right) \leqslant \min _{i\in[m]} \hat{\epsilon}_{t}^{i}(x)^{2}.
	\end{align}
	Hence,
	\begin{align*}
	S_{t}(\zeta)=\left\{x \in \mathbb{R}^{d}: \quad \kappa^2\left(\frac{1}{N_{t}}+\left(x-\bar{x}_{t}\right)^{\top} Q_t\left(x-\bar{x}_{t}\right)\right) \leqslant \min _{i\in[m]} \hat{\epsilon}_{t}^{i}(x)^{2}\right\}.
	\end{align*}
\end{proof}
Prior to our next step, consider the matrix inequality notation $\succcurlyeq$ based on the Loewner order, meaning that $A \succcurlyeq B$ if and only if $A-B$ is a positive semi-definite matrix. Having this, the next lemma, provides an upper-bound on $\|Q_t\|$.
\begin{lemma}\label{RtBound}
	Consider Subroutine~\ref{datacollalg}. Then for $Q_t$ defined in Lemma~\ref{SafetySet}, we have $\|Q_t\|\leqslant {d}/{(N_{t}\, r_0^{2})}$.
\end{lemma}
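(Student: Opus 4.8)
The plan is to control the operator norm $\|Q_t\|$ by lower-bounding the smallest eigenvalue of its inverse. Since $Q_t^{-1}=\sum_{i=1}^{N_t}(x_{(i)}-\bar{x}_t)(x_{(i)}-\bar{x}_t)^{\top}$ is symmetric positive definite, we have $\|Q_t\|=1/\rho_{\min}(Q_t^{-1})$, so it suffices to establish the Loewner lower bound $Q_t^{-1}\succcurlyeq (N_t r_0^2/d)\,I$; inverting this relation immediately yields $\|Q_t\|\leqslant d/(N_t r_0^2)$. The whole argument rests on the highly symmetric structure of the query points generated by Subroutine~\ref{datacollalg}: at each iteration $s\in\{0,\ldots,t\}$ the $n_s$ points consist of the $2d$ displacements $x_s\pm r_0 e_j$, $j\in[d]$, each repeated $n_s/(2d)$ times.

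First I would group the query points by the iteration that produced them and observe that, by the $\pm$ symmetry, the empirical mean of iteration $s$'s points is exactly $x_s$. The within-block scatter about this center is then isotropic: for each direction $j$ the two displacements contribute $2\cdot\tfrac{n_s}{2d}\,r_0^2 e_j e_j^{\top}$, and summing over $j\in[d]$ gives $\sum_{\text{iter }s}(x_{(i)}-x_s)(x_{(i)}-x_s)^{\top}=\tfrac{n_s r_0^2}{d}\,I$ exactly. Next I would apply the parallel-axis (variance decomposition) identity block by block: writing $x_{(i)}-\bar{x}_t=(x_{(i)}-x_s)+(x_s-\bar{x}_t)$ for the points of iteration $s$, the cross terms vanish precisely because $\sum_{\text{iter }s}(x_{(i)}-x_s)=0$, leaving $\sum_{\text{iter }s}(x_{(i)}-\bar{x}_t)(x_{(i)}-\bar{x}_t)^{\top}=\tfrac{n_s r_0^2}{d}I+n_s(x_s-\bar{x}_t)(x_s-\bar{x}_t)^{\top}$.

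Summing this identity over $s=0,\ldots,t$ and using $N_t=\sum_s n_s$ gives $Q_t^{-1}=\tfrac{r_0^2 N_t}{d}I+\sum_s n_s(x_s-\bar{x}_t)(x_s-\bar{x}_t)^{\top}$. The between-iteration term is a sum of positive semidefinite rank-one matrices, so dropping it yields $Q_t^{-1}\succcurlyeq \tfrac{N_t r_0^2}{d}I$, which is exactly what is needed. The only genuinely delicate point — and the step I would flag as the crux — is the decomposition of $Q_t^{-1}$ as a sum of \emph{per-iteration} scatter matrices taken about the \emph{local} centers $x_s$ rather than about the global mean $\bar{x}_t$; centering each block about its own mean is what kills the cross terms within the block and converts the problem into the trivial computation of the isotropic within-block scatter. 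Once this reduction is in place, the remaining algebra is routine.
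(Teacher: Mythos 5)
Your proof is correct and follows essentially the same route as the paper: both establish $Q_t^{-1}\succcurlyeq (N_t r_0^2/d)\,I$ by recentering each block of query points about its local center $x_s$, noting the cross terms vanish since the block mean equals $x_s$, computing the isotropic within-block scatter, and dropping the positive semidefinite between-block term. The only cosmetic difference is that you group points by entire iteration $s$ while the paper groups by each round of $2d$ points within an iteration, which changes nothing in the argument.
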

\begin{proof}
	By substituting in $Q_t$ the indexing introduced by Subroutine~\ref{datacollalg}, we can write
	\begin{align}
	\nonumber Q_t^{-1}&=\sum_{i=1}^{N_{t}}\left(x_{(i)}-\bar{x}_{t}\right)\left(x_{(i)}-\bar{x}_{t}\right)^{\top}\\
	\nonumber&=\sum_{s=0}^{t} \sum_{k=0}^{\frac{n_{s}}{2d}-1} \sum_{\ell=N_{s-1}+2 d k+1}^{N_{s-1}+2 dk+2 d}\left(x_{(\ell)}-\bar{x}_{t}\,\pm \,x_s \right)\left(x_{(\ell)}-\bar{x}_{t}\,\pm \,x_s \right)^{\top}\\
	\label{6}&=\sum_{s=0}^{t} \sum_{k=0}^{\frac{n_s}{2d}-1}\left(\left[\sum_{\ell=N_{s-1} +2 d k+1}^{N_{s-1}+2 d k+2 d}\left(x_{(\ell)}-x_{s}\right)\left(x_{(\ell)}-x_{s}\right)^{\top}\right]+2 d\underbrace{\left(x_{s}-\bar{x}_{t}\right)\left(x_{s}-\bar{x}_{t}\right)^{\top}}_{\succcurlyeq 0}\right)\\
	\label{7}&\succcurlyeq\sum_{s=0}^{t} \sum_{k=0}^{\frac{n_{s}}{2d}-1} \sum_{j=1}^{d}2r_0^2 \, e_je_j^{\top}\\ 
	\label{8}&=\sum_{s=0}^{t} \sum_{k=0}^{\frac{n_{s}}{2d}-1} 2r_0^{2} \,I =\sum_{s=0}^{t}\frac{n_{s}}{2 d} \,2r_0^{2}\, I=\frac{N_{t}\,r_0^{2}}{d}\, I,
	\end{align}
	where step \eqref{6} holds due to the fact that
	\begin{align*}
	x_{s}=\frac{1}{2 d} \sum_{\ell=N_{s-1}+2 d k+1}^{N_{s-1}+2 d k+2 d} x_{(\ell)}.
	\end{align*}
	Further, step \eqref{7} follows from the fact that for fixed $k$ and $d$, we have
	\begin{align*}
	\left(x_{(\ell)}-x_{s}\right)\left(x_{(\ell)}-x_{s}\right)^{\top} = r_0^{2}\, e_{j} e_{j}^{\top},\quad \text{when}\quad j\in\set{\ell-N_{s-1}-2 d k,\ell-N_{s-1}-2 d k-d}.
	\end{align*}
	Having \eqref{8}, we conclude that 
	\begin{align*}
	\rho_{\min }\left(Q_t^{-1}\right) \geqslant \frac{N_{t}}{ d}\, r_0^{2}\quad \Rightarrow \quad \left\|Q_t\right\|=\rho_{\max }\left(Q_t\right)=\frac{1}{\rho_{\min }\left(Q_t^{-1}\right)} \leqslant \frac{d}{N_{t}\, r_0^{2}}.
	\end{align*}
\end{proof}

Next, we restate the following lemma from \cite{scl} with a slight modification to bound the distance between a point of $\cD$ and its projection on $\hat{\cD}_t$ and vice versa.
\begin{lemma}\label{projection}
	Consider Problem~\eqref{P} under Assumptions~\ref{D} and \ref{NFO}. Moreover, let $\beta \in \mathcal{E}_{t}(\zeta)$ and $N_{t} \geqslant {C_1^2}/{\left(1+\Gamma\right)^2}$, where 
	\begin{align}\label{Cd}
	C_1 :=\frac{2\kappa\,d\left(1+\Gamma\right)}{\rho_{\min }\left(\cD\right)}\, \sqrt{\frac{1+\Gamma^{2}}{r_0^{2}}+1}.
	\end{align}
	 Then 
	\begin{align}
	\label{22}\forall x \in  \hat{\cD}_{t}:& \quad \left\|x-\pi_{\cD} (x)\right\| \leqslant \frac{C_1}{\sqrt{N_{t}}}, \\
	\label{23}\forall x \in \cD:&\quad \left\|x-\pi_{\hat{\cD}_t} (x)\right\| \leqslant \frac{C_1}{\sqrt{N_{t}}},
	\end{align}
	
\end{lemma}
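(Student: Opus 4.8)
The plan is to turn the statistical closeness of the estimated constraints to the true ones (encoded by $\beta\in\mathcal{E}_t(\zeta)$) into a \emph{pointwise} bound on the gap between estimated and true constraint residuals, and then feed that bound into a Hoffman-type inequality converting bounded constraint violation into bounded distance to the polytope. First I would establish the residual bound. Writing $\epsilon^i(x)=b^i-\langle a^i,x\rangle=\beta^{i\top}[x;-1]$ and $\hat\epsilon_t^i(x)=\hat\beta_t^{i\top}[x;-1]$, their difference is $(\hat\beta_t^i-\beta^i)^\top[x;-1]$. Applying Cauchy--Schwarz in the $\Sigma_t^{-1}$ geometry and using $\beta^i\in\mathcal{E}_t^i(\zeta/m)$, which gives $\|\Sigma_t^{-1/2}(\hat\beta_t^i-\beta^i)\|\le\psi^{-1}(\zeta/m)$, yields
\[
|\hat\epsilon_t^i(x)-\epsilon^i(x)|\le \psi^{-1}(\zeta/m)\,\bar\sigma\sqrt{[x;-1]^\top(\bar X_t^\top\bar X_t)^{-1}[x;-1]}.
\]
Substituting the block form of $\Sigma_t$ computed inside the proof of Lemma~\ref{SafetySet} and then $\|Q_t\|\le d/(N_t r_0^2)$ from Lemma~\ref{RtBound}, this collapses to the clean estimate
\[
|\hat\epsilon_t^i(x)-\epsilon^i(x)|\ \le\ \frac{\kappa}{\sqrt{N_t}}\sqrt{1+\frac{d\,\|x-\bar x_t\|^2}{r_0^2}}\ =:\ \gamma(x),\qquad i\in[m].
\]

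Second I would prove a deterministic lemma, which is the ``dual'' of the shrinkage estimate in Lemma~\ref{alpha00}: if every constraint of $\cD$ is violated at $x$ by at most $\gamma$, i.e.\ $\langle a^i,x\rangle-b^i\le\gamma$ for all $i$, then $\|x-\pi_\cD(x)\|\le\alpha_\cD\gamma$ with $\alpha_\cD=\sqrt d/\rho_{\min}(\cD)$. Setting $p=\pi_\cD(x)$, optimality of the projection places $x-p$ in the normal cone of $\cD$ at $p$, so $x-p=(A^B)^\top\lambda$ with $\lambda\ge0$ supported on a set $B$ of at most $d$ linearly independent active constraints at $p$. Zero-padding $\lambda$ and extending $B$ to an active point shows $\|x-p\|=\|(A^B)^\top\lambda\|\ge\rho_{\min}(\cD)\|\lambda\|$, and then $\|x-p\|^2=\sum_{i\in B}\lambda_i(\langle a^i,x\rangle-b^i)\le\gamma\|\lambda\|_1\le\gamma\sqrt d\,\|\lambda\|\le(\gamma\sqrt d/\rho_{\min}(\cD))\|x-p\|$, giving the claim.

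Third I would assemble the two inequalities. For \eqref{22}, $x\in\hat\cD_t$ means $\hat\epsilon_t^i(x)\ge0$, hence the true violation satisfies $\langle a^i,x\rangle-b^i=-\epsilon^i(x)\le\gamma(x)$ for all $i$, and the deterministic lemma yields $\|x-\pi_\cD(x)\|\le\alpha_\cD\,\gamma(x)$. For \eqref{23}, $x\in\cD$ gives $\epsilon^i(x)\ge0$, so $\hat\epsilon_t^i(x)\ge-\gamma(x)$, and the same argument applied to $\hat\cD_t$ gives $\|x-\pi_{\hat\cD_t}(x)\|\le\alpha_{\hat\cD_t}\,\gamma(x)$ with $\alpha_{\hat\cD_t}=\sqrt d/\rho_{\min}(\hat\cD_t)$. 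It then remains to dominate $\gamma(x)$ by $C_1/\sqrt{N_t}$: using $\|x-\bar x_t\|\le(1+\Gamma)\sqrt{1+\Gamma^2}$ (all relevant points lie in, or within $r_0$ of, the ball $B(0,\Gamma)$) together with $\rho_{\min}(\hat\cD_t)\ge\rho_{\min}(\cD)/2$ reproduces the stated constant; the latter bound on $\rho_{\min}(\hat\cD_t)$ is exactly what supplies the factor $2$ in the definition of $C_1$, while the remaining slack (a $\sqrt d$ versus $d$, and the $(1+\Gamma)$ factor) makes $C_1$ a valid common bound for \eqref{22} as well.

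I expect the main obstacle to be boundedness and self-reference, concentrated in \eqref{22}: a priori a point $x\in\hat\cD_t$ need not be bounded, so the factor $\|x-\bar x_t\|$ appearing in $\gamma(x)$ must itself be controlled before the chain above can be closed. I would resolve this by reducing to vertices of $\hat\cD_t$, since the convex map $x\mapsto\operatorname{dist}(x,\cD)$ attains its maximum over the polytope at a vertex, writing a vertex as $\hat v=(\hat A_t^B)^{-1}\hat b_t^B$ and invoking matrix-perturbation stability: each estimated row obeys $\|\hat a_t^i-a^i\|=O(\kappa/\sqrt{N_t})$, and the hypothesis $N_t\ge C_1^2/(1+\Gamma)^2$ forces this perturbation below $\rho_{\min}(\cD)/2$, which simultaneously guarantees $\rho_{\min}(\hat A_t^B)\ge\rho_{\min}(\cD)/2$ (hence $\rho_{\min}(\hat\cD_t)\ge\rho_{\min}(\cD)/2$, used above) and bounds $\|\hat v\|=O(\Gamma)$. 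This is the step where the lower bound on $N_t$ and the geometric-shrinkage viewpoint carry the weight of the argument.
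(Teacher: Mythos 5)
Your proposal takes a genuinely different route from the paper, and for \eqref{22} it is sound. The paper never forms a Hoffman-type inequality: it bounds $\|\hat\beta_t^i-\beta^i\|\leqslant \kappa C_0/\sqrt{N_t}$, runs a matrix-perturbation argument (Lemmas~\ref{matrix-inversion3} and \ref{matrix-norm}) on each active point $u=(A^{B})^{-1}b^{B}$ to show that the corresponding intersection of estimated constraints satisfies $\|\hat u_t-u\|\leqslant C_1/\sqrt{N_t}$, and then concludes both directions at once by a convex-combination-of-vertices argument. Your decomposition instead combines a pointwise residual bound $|\hat\epsilon_t^i(x)-\epsilon^i(x)|\leqslant\gamma(x)$ (the sharp, ellipsoid-geometry version of the paper's inequality \eqref{w} inside Proposition~\ref{nk}, obtained from Lemma~\ref{RtBound} exactly as in the paper) with a normal-cone/Carath\'eodory conversion of constraint violation into distance. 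That conversion lemma is a genuine strengthening of Lemma~\ref{alpha00} (take all violations equal to $\tau$), and it makes the two inclusions symmetric corollaries of one principle; the price, as you note, is that boundedness of $\hat\cD_t$ drags the paper's vertex-perturbation machinery back in, so your proof contains the paper's as a subroutine, plus some constant slack that only closes in the natural regime where $r_0$ is small relative to $\Gamma$.

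The genuine gap is in \eqref{23}: the inference ``$\rho_{\min}(\hat A_t^{B})\geqslant\rho_{\min}(\cD)/2$ for each $B$, hence $\rho_{\min}(\hat\cD_t)\geqslant\rho_{\min}(\cD)/2$'' is false in general, because $\rho_{\min}(\cD)$ is a minimum only over index sets whose \emph{true} rows are linearly independent, while perturbation can create active points of $\hat\cD_t$ out of index sets whose true rows are dependent. Concretely, let $\cD=[-1,1]^2$, so the rows are $\pm e_1,\pm e_2$ and $\rho_{\min}(\cD)=1$, and consider an admissible realization with $\hat a_t^1=(1,\varepsilon)$, $\hat a_t^2=(-1,\varepsilon)$ for small $\varepsilon>0$ (any error magnitude up to $\kappa C_0/\sqrt{N_t}$ is consistent with $\beta\in\mathcal{E}_t(\zeta)$, and the error direction is unconstrained). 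The pair $\{1,2\}$ is now linearly independent for $\hat A_t$, its intersection $(0,1/\varepsilon)$ is an active point of $\hat\cD_t$, and $\rho_{\min}\bigl(\hat A_t^{\{1,2\}}\bigr)=\sqrt{2}\,\varepsilon$, so $\alpha_{\hat\cD_t}$ blows up and your Hoffman bound for $\hat\cD_t$ becomes vacuous — note this is the bread-and-butter case of a box, not an exotic configuration. What you would actually need is that such ill-conditioned supports cannot occur at $\pi_{\hat\cD_t}(x)$ for $x\in\cD$ (the offending active points lie far outside the $\Gamma$-ball), and that localization requires an argument not supplied by the hypothesis $N_t\geqslant C_1^2/(1+\Gamma)^2$. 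In fairness, the paper's own convex-combination step silently assumes a mirror-image combinatorial stability (compactness of $\hat\cD_t$, and that its vertices come from index sets in $\operatorname{Act}(\cD)$), so it is not more rigorous at this point — but it never asserts your quantitative inequality, which is the step that is outright false. The cheapest repair is asymmetric: keep your Hoffman argument for \eqref{22}, where only the well-defined constant $\alpha_\cD$ enters, and prove \eqref{23} the paper's way, by matching each vertex of $\cD$ to the intersection of the corresponding estimated constraints.
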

\begin{proof}
	Using Lemma~\ref{RtBound} and \eqref{sigma}, we can bound $\left\|\Sigma_{t}^{1 / 2}\right\|$ as follows.
	\begin{align}
	\nonumber\left\|\Sigma_{t}^{1 / 2}\right\|&=\left\|\Sigma_{t}\right\|^{1 / 2}=\bar{\sigma}\left\|\left[\begin{array}{cc}{Q_t} & {Q_t \bar{x}_{t}} \\ {\bar{x}_{t}^{\top} Q_t} & {\frac{1}{N_{t}}+\bar{x}_{t}^{\top}  Q_t \bar{x}_{t}}\end{array}\right]\right\|^{1 / 2}\\
	& = \bar{\sigma}\left\|\left[\begin{array}{l}{I} \\ {\bar{x}_{t}^{\top}}\end{array}\right] Q_t\left[I,\,\bar{x}_{t}\right]+\left[\begin{array}{ll}{0} & {0} \\ {0} & {\frac{1}{N_{t}}}\end{array}\right]\right\|^{1 / 2}\\
	\label{11}&\leqslant\bar{\sigma} \sqrt{\left\|\left[I,\,\bar{x}_{t}\right]\right\|^{2}\left\|Q_t\right\|+\frac{1}{N_{t}}}=\bar{\sigma} \sqrt{\left(1+\left\|\bar{x}_{t}\right\|^{2}\right)\left\|Q_t\right\|+\frac{1}{N_{t}}}\\
	\nonumber&\leqslant\bar{ \sigma} \sqrt{\left(1+\Gamma^{2}\right) \frac{d}{N_{t} r_0^{2}}+\frac{1}{N_{t}}}=\bar{\sigma} \sqrt{\frac{d}{N_{t}}} \sqrt{\frac{1+\Gamma^{2}}{r_0^{2}}+\frac{1}{d}}\leqslant \bar{ \sigma} \sqrt{\frac{d}{N_{t}}} \sqrt{\frac{1+\Gamma^{2}}{r_0^{2}}+1},
	\end{align}
	where the equality in \eqref{11} holds due to Lemma~\ref{9}. Therefore, we showed
	\begin{align}\label{sigmahalf}
	\left\|\Sigma_{t}^{1 / 2}\right\| \leqslant \frac{\bar{ \sigma}\,C_0}{\sqrt{N_t}},
	\end{align}
	where
	\begin{equation}\label{C0}
	C_0: =	 \sqrt{{d} \left(\frac{1+\Gamma^{2}}{r_0^{2}}+1\right)}.
	\end{equation}
	At this stage, we aim to bound the distance between vertices of $\cD$ and their corresponding estimation on $\hat{\cD}_t$. 
	Consider $u\in \operatorname{Act}(\cD)$ and let $B=B_u=\set{i_1,\ldots,i_d}$. Also, let $\hat{A}_t^B$ and $\hat{b}_t^B$ respectively be the corresponding estimations of $A^B$ and $b^B$. Moreover, by $\hat{u}_t$ denote the intersection of the constraints of $\hat{\cD}_t$ with indices in $B$. In other words,
	\begin{align*}
	\left[{\hat{A}_t^{B}}, {\hat{b}_t^{B}}\right] = \left[{{\hat{\beta}_t}^{i_1}} , {\ldots} , {{\hat{\beta}_t}^{i_d}}\right]^{\top}.
	\end{align*}
	Further, define $\gamma_{t}=\hat{b}_{t}^{B}-b^{B}$ and  $R_{t}=\hat{A}_{t}^{B}-A^{B}$.
	\begin{align}
	\nonumber\hat{u}_{t}-u &=\left(\hat{A}_{t}^{B}\right)^{-1} \hat{b}_{t}^{B}-\left(A^{B}\right)^{-1}b^{B} \\
	\nonumber &=\left(A^B+R_{t}\right)^{-1}\left(b^{B}+\gamma_{t}\right)-\left(A^{B}\right)^{-1} b^{B} \\ 
	\label{13}&=\left(\left(A^{B}\right)^{-1}-\left(I+\left(A^{B}\right)^{-1}R_{t}\right)\left(A^{B}\right)^{-1}R_{t} \left(A^{B}\right)^{-1}\right)\left(b^{B}+\gamma_{t}\right)-\left(A^{B}\right)^{-1}b^{B} \\ 
	\nonumber&=\left(A^{B}\right)^{-1} \gamma_{t}-\left(I+\left(A^{B}\right)^{-1}R_{t}\right) \left(A^{B}\right)^{-1}R_t\left(u+\left(A^{B}\right)^{-1}\gamma_t\right),
	\end{align}
	where \eqref{13} holds due to Lemma~\ref{matrix-inversion3}. Therefore, 
	\begin{align}\label{diff}
	\left\|\hat{u}_{t}-u\right\| \leqslant\left\|\left(A^{B}\right)^{-1} \gamma_{t}\right\|+\left\|\left(I+\left(A^{B}\right)^{-1} R_{t}\right)^{-1}\right\|\left\|\left(A^{B}\right)^{-1}R_{t}\right\|\left\|u+\left(A^{B}\right)^{-1} \gamma_{t}\right\|.
	\end{align}
	We now bound the terms appearing on the right-hand side of \eqref{diff}. Note that if $\beta \in \mathcal{E}_{t}(\zeta)$, then $\forall i \in[m]: \,\,\beta^{i} \in\mathcal{E}_{t}^{i}\left({\zeta}/{m}\right)$. As a result, 
	\begin{align}\label{betadiff}
	\sqrt{\left\|\hat{a}_{t}^{i}-a^{i}\right\|^{2}+\left|\hat{b}_{t}^{i}-b^{i}\right|^{2}}=\left\|\hat{\beta}_{t}^{i}-\beta^{i}\right\| \overset{\eqref{confiset}}{\leqslant} \psi^{-1}\left(\frac{\zeta}{m}\right)\left\|\Sigma_{t}^{1 / 2}\right\|\overset{\eqref{sigmahalf}}{\leqslant} \kappa \frac{C_0}{\sqrt{N_t}}.
	\end{align}
	It then immediately follows that 
	\begin{align*}
	\left\|\hat{a}_{t}^{i}-a^{i}\right\| \leqslant \kappa \frac{C_0}{\sqrt{N_t}},\quad\left|\hat{b}_{t}^{i}-b^{i}\right| &\leqslant  \kappa \frac{C_0}{\sqrt{N_t}}.
	\end{align*}
	Moreover,
	\begin{align*}
	&\left\|R_{t}\right\| \leqslant\left\|R_{t}\right\|_{F} =\sqrt{\sum_{i \in B}\left\|\hat{a}_{t}^{i}-a^{i}\right\|^{2}} \leqslant \sqrt{d} \,\psi^{-1}\left(\frac{\zeta}{m}\right)\left\|\Sigma_{t}^{1 / 2}\right\|  \leqslant  \kappa\sqrt{d}\, \frac{C_0}{\sqrt{N_t}},\\
	&\left\|\gamma_{t}\right\|=\sqrt{\sum_{i \in B}\left|\hat{b}_{t}^{i}-b^{i}\right|} \leqslant \sqrt{d} \,\psi^{-1}\left(\frac{\zeta}{m}\right)\left\|\Sigma_{t}^{1 / 2}\right\|  \leqslant \kappa\sqrt{d}\, \frac{C_0}{\sqrt{N_t}},\\
	&\left\| \left(A^{B}\right)^{-1} \right\|=\rho_{\max }\left(\left(A^{B}\right)^{-1}\right)=\frac{1}{\rho_{\min }\left(A^{B}\right)} \leqslant \frac{1}{\rho_{\min }\left(\cD\right)}.
	\end{align*}
	These equalities lead to
	\begin{align*}
	&\left\|\left(A^{B}\right)^{-1} \gamma_{t}\right\| \leqslant\left\|\left(A^{B}\right)^{-1} \right\|\left\|\gamma_{t}\right\| \leqslant
	\frac{\kappa\sqrt{d}}{\rho_{\min }\left(\cD\right)}\, \frac{C_0}{\sqrt{N_t}},\\
	&\left\|\left(A^{B}\right)^{-1} R_{t}\right\| \leqslant\left\|\left(A^{B}\right)^{-1} \right\|\left\|R_{t}\right\| \leqslant
	\frac{\kappa\sqrt{d}}{\rho_{\min }\left(\cD\right)}\, \frac{C_0}{\sqrt{N_t}}.
	\end{align*}
	Now letting
	\begin{equation*}
	U := \frac{\kappa\sqrt{d}\,C_0}{\rho_{\min }\left(\cD\right)}=\frac{\kappa\, d}{\rho_{\min }\left(\cD\right)} \left(\frac{1+\Gamma^{2}}{r_0^{2}}+1\right),
	\end{equation*}
	results in
	\begin{align}
	\label{14}\left\|\left(A^{B}\right)^{-1} \gamma_{t}\right\|\leqslant \frac{U}{\sqrt{N_{t}}}, \quad \left\|\left(A^{B}\right)^{-1} R_{t}\right\| \leqslant \frac{U}{\sqrt{N_{t}}}.
	\end{align}
	Assuming $N_{t} \geqslant 4U^{2}$ now gives $ {U}/{\sqrt{N_{t}}} \leqslant {1}/{2}$. As a result, due to Lemma~\ref{matrix-norm}, it follows that
	\begin{align}
	\label{15}\left\|\left(I+A^{B} R_{t}\right)^{-1}\right\| \leqslant \frac{1}{1-\left\|A^{B} R_{t}\right\|} \leqslant \frac{1}{1-1 / 2}=2.
	\end{align}
	Moreover,
	\begin{align}
	\label{16}\left\|u+\left(A^{B}\right)^{-1} \gamma_{t}\right\| \leqslant	\left\|u\right\|+\left\|\left(A^{B}\right)^{-1} \gamma_{t}\right\| \leqslant\Gamma+ \frac{U}{\sqrt{N_{t}}}  \leqslant\Gamma+ \frac 12.
	\end{align}
	We now can substitute \eqref{14}, \eqref{15}, and \eqref{16} into \eqref{diff} to obtain
	\begin{align}\label{w8}
	\left\|\hat{u}_{t}-u\right\| \leqslant \frac{U}{\sqrt{N_{t}}}+2 \frac{U}{\sqrt{N_{t}}}\left(\Gamma+\frac 12\right)=\frac{2 U}{\sqrt{N_{t}}}\left(\Gamma+1\right)=\frac{C_1}{\sqrt{N_{t}}},
	\end{align}
	where the last equality holds because
	\begin{align*}
	C_1 = 2U\left(\Gamma+1\right) =  \frac{2\kappa\,\sqrt{d}\,C_0\left(1+\Gamma\right)}{\rho_{\min }\left(\cD\right)}\, =\frac{2\kappa\,d\left(1+\Gamma\right)}{\rho_{\min }\left(\cD\right)}\, \sqrt{\frac{1+\Gamma^{2}}{r_0^{2}}+1}.
	\end{align*}
	Recall the assumption $N_t\geqslant 4U^2= C_1^2/(1+\Gamma)^2$ and note that it coincides with the condition mentioned in the statement. Finally, having \eqref{w8} and noting that any point in a convex polytope can be written as a convex combination of the polytope's vertices, it follows that
	\begin{align*}
	&\forall x \in  \hat{\cD}_{t}: \quad \left\|x-\pi_{\cD} (x)\right\| \leqslant \frac{C_1}{\sqrt{N_{t}}}, \\
	&\forall x \in \cD:\quad \left\|x-\pi_{\hat{\cD}_t} (x)\right\| \leqslant \frac{C_1}{\sqrt{N_{t}}}.
	\end{align*}
\end{proof}

\subsection{Final Safety Results} \label{FSR}
For the ease of statement of our main safety result, we recall some previously used constants in the following definition together with introducing the constant $\epsilon_0$.
\begin{definition}\label{defc}
	Given the confidence parameter $\delta\in(0,1)$, define
	\begin{equation*}
	\epsilon_{0}:=\min _{i\in[m]} b^i-\left\langle a^i,x_0\right\rangle,\quad L_A:=\frac{\rho_{\min}(\cD)}{2\sqrt{d}},\quad\kappa := \frac{\sigma\,L_A\,\psi^{-1}\left({\delta}/{m} \right)}{\max_{i\in[m]}\|a^i\|},
	\end{equation*}
	\begin{equation*}
	C_0: =	 \sqrt{{d} \left(\frac{1+\Gamma^{2}}{r_0^{2}}+1\right)}, \quad C_1 :=\frac{\kappa\,\left(1+\Gamma\right)C_0}{L_A}.
	\end{equation*}
\end{definition}
Next, we state our final safety result in the following proposition.
\begin{proposition}\label{nk}
	Consider Problem~\eqref{P} under Assumptions~\ref{D} and \ref{NFO} and the constants in Definition~\ref{defc}. Further, suppose $\tau>0$ is such that $\cD_{\tau}\neq \emptyset$ and let $N_t=n_0+\ldots+n_t$. Then with probability at least $1-\delta$, all the iterates of Algorithm~\ref{alg} are safe, i.e., $x_0,\ldots,x_{T-1}\in\cD$, if for all $t\in\{1,\ldots,T-1\}$, $N_{t} \geqslant {C_1^2}/{(1+\Gamma)^2}$ and $\{N_t\}$ and $\{\eta_t\}$ satisfy the following inequality:
	\begin{align}\label{Nt2}
	\frac{\kappa^{2}}{N_t}\left(1+\frac{d\Lambda^2}{ r_0^{2}} \right)\leqslant h^2\left(\set{\eta_t},\set{N_t}\right),
	\end{align}
	where
	\begin{align}\label{w32}
	h\left(\set{\eta_t},\set{N_t}\right):=\epsilon_0\, \prod_{j=0}^{t-1}\left(1-\eta_{j}\right) +\sum_{k=0}^{t-1}\left(\frac{\tau}{2}-\frac{C_1 L_{A}}{\sqrt{N_{k}}}\right) \eta_{k} \prod_{j=k+1}^{t-1}\left(1-\eta_{j}\right) -\kappa\frac{C_0}{\sqrt{N_t}}.
	\end{align}
\end{proposition}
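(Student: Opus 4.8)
The plan is to reduce the safety claim to a purely algebraic membership test under the event $\mathcal{J}$, and then invoke the implication \eqref{Safecond2}. By \eqref{Safecond2} it suffices to prove $\mathbb{P}\{\mathcal{R}\mid\mathcal{J}\}=1$, i.e. that conditioned on $\beta\in\bigcap_{t}\mathcal{E}_t(\zeta)$ every iterate obeys the characterization of $S_t(\zeta)$ from Lemma~\ref{SafetySet},
\[
\kappa^{2}\Big(\tfrac{1}{N_t}+(x_t-\bar{x}_t)^{\top}Q_t(x_t-\bar{x}_t)\Big)\leqslant \min_{i\in[m]}\hat{\epsilon}_t^{i}(x_t)^2 .
\]
I would establish this by upper bounding the left-hand side, lower bounding the right-hand side, and matching the two through hypothesis \eqref{Nt2}.

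For the left-hand side I would first note that $\bar{x}_t$ is a convex combination of $x_0,\dots,x_t$: within the $2d$ queries around each $x_s$ the symmetric pairs $x_s\pm r_0 e_j$ cancel, so their mean is exactly $x_s$ and $\bar{x}_t=\sum_{s}(n_s/N_t)x_s$. Consequently, once all iterates are known to be feasible we get $\bar{x}_t\in\cD$ and $\|x_t-\bar{x}_t\|\leqslant\Lambda$; combined with the operator-norm bound $\|Q_t\|\leqslant d/(N_t r_0^2)$ of Lemma~\ref{RtBound} this gives $\kappa^{2}(1/N_t+(x_t-\bar{x}_t)^{\top}Q_t(x_t-\bar{x}_t))\leqslant \tfrac{\kappa^2}{N_t}(1+d\Lambda^2/r_0^2)$, exactly the left member of \eqref{Nt2}.

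The heart of the argument, and the place where geometric shrinkage enters, is the lower bound on $\min_i\hat{\epsilon}_t^{i}(x_t)$. I would first pass from the estimated to the true residual via \eqref{betadiff}, giving $\hat{\epsilon}_t^{i}(x_t)\geqslant \epsilon^{i}(x_t)-\kappa C_0/\sqrt{N_t}$, and then recurse on $\epsilon^{i}(x_t)=b^i-\langle a^i,x_t\rangle$ through the Frank-Wolfe update. Writing $x_t=x_{t-1}+\eta_{t-1}(\hat{v}_{t-1}-x_{t-1})$ and inserting $\pm\,\pi_{\cD_\tau}(\hat{v}_{t-1})$ as in \eqref{term}, the decisive estimates are $b^i-\langle a^i,\pi_{\cD_\tau}(\hat{v}_{t-1})\rangle\geqslant\tau$ by definition of the $\tau$-shrunk polytope, and $\|\hat{v}_{t-1}-\pi_{\cD_\tau}(\hat{v}_{t-1})\|\leqslant C_1/\sqrt{N_{t-1}}+\alpha_\cD\tau$ by combining Lemma~\ref{projection} (using $\hat{v}_{t-1}\in\hat{\cD}_{t-1}$) with Lemma~\ref{alpha00}. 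The crucial cancellation is the normalization identity $L_A\alpha_\cD=\tfrac12$, which turns the shrinkage penalty $L_A\alpha_\cD\tau$ into precisely $\tau/2$, leaving the net per-step margin $b^i-\langle a^i,\hat{v}_{t-1}\rangle\geqslant \tfrac{\tau}{2}-C_1L_A/\sqrt{N_{t-1}}$ and hence $\epsilon^{i}(x_t)\geqslant(1-\eta_{t-1})\epsilon^{i}(x_{t-1})+\eta_{t-1}(\tfrac{\tau}{2}-C_1L_A/\sqrt{N_{t-1}})$. Unrolling down to $\epsilon^{i}(x_0)\geqslant\epsilon_0$, using $1-\eta_j\geqslant0$ so that lower bounds propagate through nonnegative weights, reproduces the function $h(\{\eta_t\},\{N_t\})$ of \eqref{w32} after subtracting the $\kappa C_0/\sqrt{N_t}$ correction; thus $\min_i\hat{\epsilon}_t^{i}(x_t)\geqslant h$. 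The very same computation yields $\epsilon^{i}(x_t)\geqslant h$ for all $i$, and since $h\geqslant0$ in the stated regime this certifies $x_t\in\cD$, which retroactively licenses the feasibility of $\bar{x}_t$ used in the left-hand-side bound.

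Finally I would combine the two estimates: under \eqref{Nt2} the left-hand side is at most $h^2$, which (as $h\geqslant0$) is at most $\min_i\hat{\epsilon}_t^{i}(x_t)^2$, so $x_t\in S_t(\zeta)$ for every $t$; hence $\mathbb{P}\{\mathcal{R}\mid\mathcal{J}\}=1$ and \eqref{Safecond2} delivers feasibility of all iterates with probability at least $1-\delta$. I expect the main obstacle to be the shrinkage step itself: with $\tau=0$ one obtains only the vacuous margin $-C_1L_A/\sqrt{N_{t-1}}$, so the entire gain comes from engineering $\tau$ together with the identity $L_A\alpha_\cD=\tfrac12$ so that a strictly positive increment $\tau/2$ survives at each iteration while the estimation error $C_1/\sqrt{N_{t-1}}$ stays controllable. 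Getting this balance right is exactly what allows the larger stepsizes, and hence the improved NFO complexity, without ever leaving $\cD$.
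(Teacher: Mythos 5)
Your proof is correct and follows essentially the same route as the paper's: reduction to $\mathbb{P}\{\mathcal{R}\mid\mathcal{J}\}=1$ via \eqref{Safecond2}, the upper bound $\frac{\kappa^2}{N_t}(1+d\Lambda^2/r_0^2)$ on the left side of the characterization in Lemma~\ref{SafetySet} via Lemma~\ref{RtBound}, and the geometric-shrinkage recursion for $\epsilon^i(x_t)$ built from \eqref{betadiff}, Lemma~\ref{alpha00}, Lemma~\ref{projection}, and the normalization $\alpha_{\cD}L_A=\tfrac12$, unrolled to produce $h(\{\eta_t\},\{N_t\})$. The extra points you make explicit beyond the paper's write-up (that $\bar{x}_t$ is a convex combination of the iterates because the symmetric queries cancel, that $1-\eta_j\geqslant 0$ is needed to propagate the recursion, and that $h\geqslant 0$ is needed to square the lower bound) are correct refinements of the same argument rather than a different approach.
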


\begin{proof}
	From \eqref{Safecond2}, it follows that to ensure the safety of the iterates with probability at least $1-\delta$, it suffices to ensure $\mathbb{P}\{\mathcal{R}\mid \mathcal{J}\}=1$. To this end, we assume that $\mathcal{J}$ holds and seek to find a sufficient condition such that $\mathcal{R}$ holds. To find such a sufficient condition, note that $\mathcal{R}$ holds if and only if $x_t$ satisfy the inequality in Lemma~\ref{SafetySet}. To achieve this we use Lemma~\ref{projection} along the way. Note that having $\mathcal{J}$ together with the assumption  $N_{t} \geqslant {C_1^2}/{(1+\Gamma)^2}$ allows us to use Lemma~\ref{projection} for all $t\in\{0,\ldots,T-1\}$. Letting $\epsilon^{i}(x)={b}^{i}-\left\langle{a}^{i}, x\right\rangle$, the proof starts by relating $\hat{\epsilon}_{t}^{i}(x)$ in Lemma~\ref{SafetySet} to $\epsilon^{i}(x)$ as follows:  
	\begin{align}
	\nonumber\hat{\epsilon}_{t}^{i}(x)&=\hat{b}_{t}^{i}-\left\langle\hat{a}_{t}^{i}, x\right\rangle=-\left\langle \hat{\beta}_{t}^{i},\left[\begin{array}{c}{x} \\ {-1}\end{array}\right]\right\rangle=-\left\langle\beta^{i} ,\left[\begin{array}{c}{x} \\ {-1}\end{array}\right]\right\rangle+\left\langle\beta^{i}-\hat{\beta}_{t}^{i},\left[\begin{array}{c}{x} \\ {-1}\end{array}\right]\right\rangle\\
	\nonumber&=b^{i}-\left\langle a^{i}, x\right\rangle+\left\langle\beta^{i}-\hat{\beta}_{t}^{i},\left[\begin{array}{c}{x} \\ {-1}\end{array}\right]\right\rangle\geqslant \epsilon^{i}(x)-\left\|\beta^{i}-\hat{\beta}_{t}^{i}\right\|\left\|\left[\begin{array}{c}{x} \\ {-1}\end{array}\right]\right\| \\
	\label{w}&\overset{\eqref{betadiff}}{\geqslant} \epsilon^{i}(x)-\kappa \frac{C_0}{\sqrt{N_t}}\,\left\|\left[\begin{array}{c}{x} \\ {-1}\end{array}\right]\right\| \geqslant  \epsilon^{i}(x)-\kappa\frac{C_0}{\sqrt{N_t}}.
	\end{align}
	Therefore, in order to lower-bound $\hat{\epsilon}_{t}^{i}(x)$, it suffices to lower-bound ${\epsilon}^{i}(x)$. To this aim, note that
	\begin{align}
	\nonumber\epsilon^{i}\left(x_{t}\right)&=b^{i}-\left\langle a^{i}, x_{t}\right\rangle= b^{i}-\left\langle a^{i}, x_{t-1}+\eta_{t-1}\left(\hat{v}_{t-1}-x_{t-1}\right)\right\rangle\\
	\nonumber&=\left(1-\eta_{t-1}\right)\left(b^{i}-\left\langle a^{i}, x_{t-1}\right\rangle\right)+\eta_{t-1}\left(b^{i}-\left\langle a^{i}, \hat{v}_{t-1}\right\rangle\right)\\
	\nonumber&=\left(1-\eta_{t-1}\right) \epsilon^{i}\left(x_{t-1}\right)+\eta_{t-1}\left(b^{i}-\left\langle a^{i}, \hat{v}_{t-1} \,\pm \pi_{\cD_{\tau}}\left(\hat{v}_{t-1}\right)  \right\rangle\right)\\
	\nonumber&=\left(1-\eta_{t-1}\right) \epsilon^{i}\left(x_{t-1}\right)+\eta_{t-1}\left(b^{i}-\left\langle a^{i}, \pi_{\cD_{\tau}}\left(\hat{v}_{t-1}\right) \right\rangle\right)-\eta_{t-1}\left\langle a^{i}, \hat{v}_{t-1}-\pi_{\cD_{\tau}}\left(\hat{v}_{t-1}\right) \right\rangle\\
	\label{26}&\geqslant\left(1-\eta_{t-1}\right) \epsilon^{i}\left(x_{t-1}\right)+\eta_{t-1}\left(b^{i}-\left\langle a^{i}, \pi_{\cD_{\tau}}\left(\hat{v}_{t-1}\right)\right\rangle\right) -\eta_{t-1} L_{A} \left\|\hat{v}_{t-1}-\pi_{\cD_{\tau}}\left(\hat{v}_{t-1}\right)\right\|.
	\end{align}
	Note that $\pi_{\cD_{\tau}}\left(\hat{v}_{t-1}\right)\in \cD_{\tau}$. Therefore, from Definition~\ref{d2}, it follows that 
	\begin{align}\label{27}
	b^{i}-\left\langle a^{i}, \pi_{\cD_{\tau}}\left(\hat{v}_{t-1}\right)\right\rangle \geqslant \tau.
	\end{align}
	Further from Lemma~\ref{alpha00}, for any $x\in\mathbb{R}^d$, we can write
	\begin{align*}
	\left\|x-\pi_{\cD_{\tau}} \left(x\right)\right\| \leqslant \left\|x-\pi_{\cD} \left(x\right)\right\| +\alpha_\cD\tau.
	\end{align*}
	By replacing $x=\hat{v}_{t-1}$ into the previous line, it follows that
	\begin{align}\label{28}
	\left\|\hat{v}_{t-1}-\pi_{\cD_{\tau}} \left(\hat{v}_{t-1}\right)\right\| \leqslant \left\|\hat{v}_{t-1}-\pi_{\cD} \left(\hat{v}_{t-1}\right)\right\| +\alpha_{\cD}\tau\overset{\eqref{22}}{\leqslant} \frac{C_1}{\sqrt{N_{t-1}}}+\alpha_{\cD}\tau.
	\end{align}
	Replacing the bounds in \eqref{27} and \eqref{28} into \eqref{26}, then results in
	\begin{align}
	\nonumber\epsilon^{i}\left(x_{t}\right) &\geqslant\left(1-\eta_{t-1}\right) \epsilon^{i}\left(x_{t-1}\right)+\tau\,\eta_{t-1}\left(1-\alpha_{\cD} L_A\right) -\eta_{t-1} L_{A} \frac{C_1}{\sqrt{N_{t-1}}}\\
	\label{30}&=\left(1-\eta_{t-1}\right) \epsilon^{i}\left(x_{t-1}\right)+\frac{\tau}{2}\,\eta_{t-1} -\eta_{t-1} L_{A} \frac{C_1}{\sqrt{N_{t-1}}},
	\end{align}
	where the last equality is due to the fact that $L_A=1/2\alpha_{\cD}$. Note that \eqref{30} introduces a recursion in terms of $t$. Continuing this recursion leads to an explicit bound for $\epsilon^{i}\left(x_{t}\right)$ as follows
	\begin{align}\label{w2}
	\epsilon^{i}\left(x_{t}\right)\geqslant \epsilon^{i}\left(x_{0}\right)\, \prod_{j=0}^{t-1}\left(1-\eta_{j}\right)+\frac{\tau}{2} \sum_{k=0}^{t-1} \eta_{k} \prod_{j=k+1}^{t-1}\left(1-\eta_{j}\right)-C_1 L_{A} \sum_{k=0}^{t-1}\frac{\eta_{k}}{\sqrt{N_{k}}} \prod_{j=k+1}^{t-1}\left(1-\eta_{j}\right).
	\end{align}
	As the next step, by replacing \eqref{w2} into \eqref{w} and letting $\epsilon_{0}:=\min _{i\in[m]} \epsilon^{i}\left(x_{0}\right)$, it follows that
	\begin{align}\label{32}
	\min _{i\in[m]} \hat{\epsilon}_t^{i}\left(x_{t}\right)\geqslant h\left(\set{\eta_t},\set{N_t}\right),
	\end{align}
	where 
	\begin{align}\label{w3}
	h\left(\set{\eta_t},\set{N_t}\right):=\epsilon_0\, \prod_{j=0}^{t-1}\left(1-\eta_{j}\right) +\sum_{k=0}^{t-1}\left(\frac{\tau}{2}-\frac{C_1 L_{A}}{\sqrt{N_{k}}}\right) \eta_{k} \prod_{j=k+1}^{t-1}\left(1-\eta_{j}\right) -\kappa\frac{C_0}{\sqrt{N_t}}.
	\end{align}
	Recall that we seek to find a sufficient condition such that $\mathcal{R}$ holds, i.e., $x_t\in S_t(\zeta)$ for all $t$. Now consider the inequality in the algebraic expression of $S_t(\zeta)$ in Lemma~\ref{SafetySet}. The expression \eqref{32} gives a lower-bound for the right-hand side of this inequality. An upper-bound for the left-hand side of this inequality can also be found as follows:
	\begin{align}\label{left}
	\kappa^2\left(\frac{1}{N_{t}}+\left(x-\bar{x}_{t}\right)^{\top} Q_t\left(x-\bar{x}_{t}\right)\right) \leqslant \kappa^2\left(\frac{1}{N_{t}}+\left\|Q_t\right\|\left\|x_{t}-\bar{x}_{t}\right\|^{2}\right)\leqslant \frac{\kappa^{2}}{N_t}\left(1+\frac{d\Lambda^2}{ r_0^{2}} \right),
	\end{align}
	where the last inequality follows from Lemma~\ref{RtBound}. Hence, by comparing \eqref{32}, \eqref{left}, and Lemma~~\ref{SafetySet}, in order to ensure $\mathcal{R}$, it suffices to have
	\begin{align*}
	\frac{\kappa^{2}}{N_t}\left(1+\frac{d\Lambda^2}{ r_0^{2}} \right)\leqslant h^2\left(\set{\eta_t},\set{N_t}\right).
	\end{align*}
\end{proof}
\section{Convergence Analysis}\label{AC}
The main result regarding the convergence part of Algorithm~\ref{alg} is presented below. 
\begin{proposition}\label{conv}
		Consider Problem~\eqref{P} under Assumptions~\ref{f}, \ref{D}, and \ref{NFO}, $C_1$ from Definition~\ref{defc}, and let $N_t=n_0+\ldots+n_t$. Suppose at iteration $t$ of Algorithm~\ref{alg}, we have \eqref{22}, \eqref{23}, and $x_t\in\cD$, then
		\begin{align*}
			V_{\cD}(x_{t}, f)\leqslant\frac{f(x_{t})-f(x_{t+1})}{\eta_t} +\|\nabla f(x_{t})-g_{t}\|(\frac{C_1}{\sqrt{N_{t}}}+2\Lambda)+\|g_{t} \|\frac{C_1}{\sqrt{N_{t}}}+\frac{L \eta_{t}}{2}(\frac{C_1}{\sqrt{N_{t}}}+\Lambda)^2.
		\end{align*}
\end{proposition}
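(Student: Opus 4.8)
The plan is to start from the Frank--Wolfe gap and peel off the four error sources one at a time, each producing exactly one term of the claimed bound. Writing $v_t := \arg\min_{v\in\cD}\langle \nabla f(x_t), v\rangle$, so that $V_\cD(x_t,f) = \langle\nabla f(x_t), x_t - v_t\rangle$ by \eqref{FWgap}, I would first split the true gradient as $\nabla f(x_t) = g_t + (\nabla f(x_t)-g_t)$ and apply Cauchy--Schwarz together with the diameter bound $\|x_t - v_t\|\le\Lambda$ (Assumption~\ref{D}) to obtain $V_\cD(x_t,f) \le \|\nabla f(x_t)-g_t\|\,\Lambda + \langle g_t, x_t - v_t\rangle$, which is exactly \eqref{vd}. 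It then remains to control $\langle g_t, x_t - v_t\rangle$, and the difficulty is that $v_t$ lies on the true polytope $\cD$ whereas the algorithm only ever computes $\hat{v}_t = \arg\min_{v\in\hat{\cD}_t}\langle g_t, v\rangle$ over the estimated polytope $\hat{\cD}_t$.

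The key step---and the one I expect to be the main obstacle---is transferring from the unknown $v_t\in\cD$ to the computed $\hat{v}_t\in\hat{\cD}_t$ while losing no more than $\|g_t\|\,C_1/\sqrt{N_t}$. I would project $v_t$ onto $\hat{\cD}_t$, setting $w := \pi_{\hat{\cD}_t}(v_t)\in\hat{\cD}_t$. Since $\hat{v}_t$ minimizes $\langle g_t,\cdot\rangle$ over $\hat{\cD}_t$ and $w$ belongs to that set, we have $\langle g_t, w\rangle \ge \langle g_t, \hat{v}_t\rangle$; and since $v_t\in\cD$, bound \eqref{23} gives $\|w - v_t\|\le C_1/\sqrt{N_t}$. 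Combining these via Cauchy--Schwarz yields $\langle g_t, x_t - v_t\rangle \le \langle g_t, x_t - \hat{v}_t\rangle + \|g_t\|\,C_1/\sqrt{N_t}$, which is the third term of the bound. The subtlety here is the direction of the optimality inequality: it is precisely because $\hat{v}_t$ is a minimizer over the set \emph{containing} $w$ that the substitution proceeds in the correct direction.

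Next I would replace $g_t$ by $\nabla f(x_t)$ inside $\langle g_t, x_t - \hat{v}_t\rangle$, incurring the extra term $\|g_t - \nabla f(x_t)\|\,\|x_t - \hat{v}_t\|$ by Cauchy--Schwarz. To bound $\|x_t - \hat{v}_t\|$ I would route through $\cD$: since $\hat{v}_t\in\hat{\cD}_t$, bound \eqref{22} gives $\|\hat{v}_t - \pi_\cD(\hat{v}_t)\|\le C_1/\sqrt{N_t}$, and as both $\pi_\cD(\hat{v}_t)$ and $x_t$ lie in $\cD$ the diameter bound gives $\|\pi_\cD(\hat{v}_t) - x_t\|\le\Lambda$, so altogether $\|x_t - \hat{v}_t\|\le C_1/\sqrt{N_t}+\Lambda$. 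This same quantity reappears in the smoothness estimate, which keeps the final constants consistent.

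Finally I would invoke the descent lemma for the $L$-smooth objective $f$ (Assumption~\ref{f}) at the update $x_{t+1} = x_t + \eta_t(\hat{v}_t - x_t)$: from $f(x_{t+1})\le f(x_t) + \eta_t\langle\nabla f(x_t),\hat{v}_t - x_t\rangle + \tfrac{L\eta_t^2}{2}\|\hat{v}_t - x_t\|^2$, rearranging and dividing by $\eta_t$ gives $\langle\nabla f(x_t), x_t - \hat{v}_t\rangle \le \tfrac{f(x_t)-f(x_{t+1})}{\eta_t} + \tfrac{L\eta_t}{2}(C_1/\sqrt{N_t}+\Lambda)^2$, supplying the first and fourth terms. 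Assembling all stages and merging the two gradient-error contributions---$\|\nabla f(x_t)-g_t\|\,\Lambda$ from the initial split and $\|\nabla f(x_t)-g_t\|(C_1/\sqrt{N_t}+\Lambda)$ from the $g_t\!\to\!\nabla f(x_t)$ swap---into the single coefficient $C_1/\sqrt{N_t}+2\Lambda$ recovers the stated inequality. The only steps requiring genuine care are the projection/optimality argument of the second paragraph and keeping track of which bound (\eqref{22} versus \eqref{23}) applies to which direction of projection.
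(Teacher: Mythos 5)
Your proposal is correct and follows essentially the same route as the paper's own proof: the same initial split \eqref{vd}, the same use of \eqref{23} with the optimality of $\hat{v}_t$ over $\hat{\cD}_t$ to pass from $v_t$ to $\hat{v}_t$ at cost $\|g_t\|C_1/\sqrt{N_t}$, the same bound $\|x_t-\hat{v}_t\|\leqslant C_1/\sqrt{N_t}+\Lambda$ via \eqref{22}, and the same descent-lemma step. The only difference is cosmetic ordering: you peel off the error terms sequentially, while the paper writes the decomposition \eqref{34} up front and bounds its pieces, which is the same algebra rearranged.
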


\begin{proof}
	Consider
	\begin{align}
	\label{20} v_{t}&:=\arg \min_{v\in \cD}\, \left\langle \nabla f(x_t), v\right\rangle, \\
	\label{21} \hat{v}_{t}&:=\arg \min_{v\in \hat{\cD}_t}\, \left\langle g_t, v\right\rangle.
	\end{align}
	The goal here is to find the number of iterations $T$ in terms of the accuracy $\epsilon$. Note that 
	\begin{align}
	\nonumber V_{\cD}\left(x_{t}, f\right)\overset{\eqref{20}}{=}\left\langle\nabla f\left(x_{t}\right), x_{t}-v_{t}\right\rangle&=\left\langle\nabla f\left(x_{t}\right)-g_{t}, x_{t}-v_{t}\right\rangle+\left\langle g_{t}, x_{t}-v_{t}\right\rangle\\
	\label{42}&\leqslant\left\|\nabla f\left(x_{t}\right)-g_{t}\right\| \Lambda+\left\langle g_{t}, x_{t}-v_{t}\right\rangle.
	\end{align}
	We need to bound the right-hand side of \eqref{42}. To this aim, note that $L$-smoothness of $f$ implies
	\begin{align}
	\nonumber f\left(x_{t+1}\right) &\leqslant f\left(x_{t}\right)+\left\langle\nabla f\left(x_{t}\right), x_{t+1}-x_{t}\right\rangle+\frac{L}{2}\left\|x_{t+1}-x_{t}\right\|^{2}\\
	\label{33}&=f\left(x_{t}\right)+\eta_{t}\left\langle\nabla f\left(x_{t}\right), \hat{v}_{t}-x_{t}\right\rangle+\frac{L \eta_{t}^{2}}{2}\left\|\hat{v}_{t}-x_{t}\right\|^{2}.
	\end{align}
	In order to bound the terms appearing on the right-hand side of \eqref{33}, we can write 
	\begin{align}
	\label{41}\left\|\hat{v}_{t}-x_{t}\right\| \leqslant\left\|\hat{v}_{t}-\pi_{\cD}\left(\hat{v}_{t}\right)\right\|+\left\|\pi_{\cD}\left(\hat{v}_{t}\right)-x_{t}\right\|\overset{\eqref{22}}{ \leqslant}\frac{C_1}{\sqrt{N_{t}}}  +\Lambda.
	\end{align}
	Further, note that
	\begin{align}
	\label{34}\left\langle\nabla f\left(x_{t}\right), \hat{v}_{t}-x_{t}\right\rangle=\left\langle\nabla f\left(x_{t}\right)-g_{t}, \hat{v}_{t}-x_{t}\right\rangle+\left\langle g_{t}, \hat{v}_{t}-v_{t}\right\rangle+\left\langle g_{t}, v_{t}-x_{t}\right\rangle.
	\end{align}
	To bound the right-hand side of \eqref{34}, we have
	\begin{align}
	\label{35}\left\langle\nabla f\left(x_{t}\right)-g_{t}, \hat{v}_{t}-x_{t}\right\rangle&\leqslant\left\|\nabla f\left(x_{t}\right)-g_{t}\right\|\left\|\hat{v}_{t}-x_{t}\right\|\overset{\eqref{41}}{\leqslant}\left\|\nabla f\left(x_{t}\right)-g_{t}\right\|\left(\frac{C_1}{\sqrt{N_{t}}}+\Lambda\right).
	\end{align}
	Further, to bound $\left\langle g_{t}, \hat{v}_{t}-v_{t}\right\rangle$, first note that
	\begin{align}
	\label{25}\left\langle g_{t}, \pi_{\hat{\cD}_t}\left(v_{t}\right) -v_{t}\right\rangle  \leqslant \left\|g_{t}\right\|  \left\|\pi_{\hat{\cD}_t}\left(v_{t}\right) -v_{t}\right\|  \overset{\eqref{23}}{ \leqslant} \left\|g_{t}\right\| \frac{C_1}{\sqrt{N_{t}}}.
	\end{align}
	Hence,
	\begin{align}
	\label{37}\left\langle g_{t}, \hat{v}_{t}\right\rangle\overset{\eqref{21}}{ \leqslant}\left\langle g_{t}, \pi_{\hat{\cD}_t}\left(v_{t}\right) \right\rangle \overset{\eqref{25}}{ \leqslant} \left\langle g_{t}, v_{t}\right\rangle+\left\|g_{t}\right\| \frac{C_1}{\sqrt{N_{t}}} \quad&\Rightarrow\quad \left\langle g_{t}, \hat{v}_{t}-v_{t}\right\rangle \leqslant\left\|g_{t}\right\| \frac{C_1}{\sqrt{N_{t}}}.
	\end{align}
	
	Now replacing \eqref{37} and \eqref{35} into \eqref{34} implies
	\begin{align}
	\label{40}\left\langle\nabla f\left(x_{t}\right), \hat{v}_{t}-x_{t}\right\rangle&\leqslant\left\langle g_{t}, v_{t}-x_{t}\right\rangle+\left\|\nabla f\left(x_{t}\right)-g_{t}\right\|\left(\frac{C_1}{\sqrt{N_{t}}}+\Lambda\right)+\left\|g_{t} \right\| \frac{C_1}{\sqrt{N_{t}}}.
	\end{align}
	Moreover, substituting \eqref{41} and \eqref{40} into \eqref{33} results in
	\begin{align*}
	\langle g_{t}, x_{t}-v_{t}\rangle\leqslant\frac{f(x_{t})-f(x_{t+1})}{\eta_t}+\|\nabla f(x_{t})-g_{t}\|(\frac{C_1}{\sqrt{N_{t}}}+\Lambda)+\|g_{t} \|\frac{C_1}{\sqrt{N_{t}}}+\frac{L \eta_{t}}{2}(\frac{C_1}{\sqrt{N_{t}}}+\Lambda)^2.
	\end{align*}
	Finally, substituting this into \eqref{42} leads to 
	\begin{align*}
	V_{\cD}(x_{t}, f)\leqslant\frac{f(x_{t})-f(x_{t+1})}{\eta_t} +\|\nabla f(x_{t})-g_{t}\|(\frac{C_1}{\sqrt{N_{t}}}+2\Lambda)+\|g_{t} \|\frac{C_1}{\sqrt{N_{t}}}+\frac{L \eta_{t}}{2}(\frac{C_1}{\sqrt{N_{t}}}+\Lambda)^2.
	\end{align*}
\end{proof}
\section{Proofs of The Main Results}\label{FR}
In this section, based on the general results from the safety and convergence analysis, i.e., Propositions~\ref{nk} and \ref{conv}, we obtain sufficient values for the stepsize $\eta_t$, the number of NFO queries per iteration $n_t$, and the total number of iterations $T$ such that all the requirements determined in Section~\ref{PF} are fulfilled. More specifically, previously mentioned Theorems~\ref{maintheorem2} and \ref{maindet2} (in Section~\ref{A}) are precisely presented and proved below. Moreover, our result under a convex objective function is provided in Theorem~\ref{convex}. We start with introducing the following notation.

\paragraph{Notation.} Let $F(x_t)$ be any function of the iterate $x_t$. For $0\leqslant t'<t$, we denote by $\mathbb{E}_{0:t'}[F(x_t)]$, the expectation over all the randomness in Algorithm~\ref{alg} through iterations $0$ to $t'$. Further, $\mathbb{E}_{[t_0]}[F(x_t)]$ denotes the expectation over choosing $t_0$ from $\set{0,\ldots,T-1}$ uniformly at random, that is, $\mathbb{E}_{[t_0]}[F(x_t)]=(F(x_0)+\ldots+F(x_{T-1}))/T$.

Next, we define constants that will be later used in our main theorems. 
\begin{definition}\label{d3}
	Consider the constants in Definition~\ref{defc} and a given constant $\tau>0$. Then define
	\begin{equation}\label{C}
	C_2:=\max\set{\left(\frac{4C_1 L_{A}}{\tau}\right)^2,\,\left(\frac{8\kappa\,C_0}{\tau}\right)^2,\,\frac{64\,\kappa^2}{\tau^2}\left(1+\frac{d \Lambda^{2}}{r_0^{2}}\right),\,  C_1^2}.
	\end{equation}
	\begin{equation}\label{T}
	C_3 := \left(18\sqrt{2}(2\Lambda+1)(\sigma_0+L_0\Lambda)\sqrt{\log\left(\frac{4}{\delta}\right)}\right)^3,\,\, C_4:=\left(9 (M+\sigma_0)+6L(1+\Lambda)^{2}\right)^{\frac 32},
	\end{equation}
	\begin{equation}\label{T2}
	C_5 := \left(4 M+2L\left(1+\Lambda^{2}\right)\right)^2,
	\end{equation}
	\begin{align}
	C_6 := \left(2\max\left\{16\sqrt{2}(1+\Lambda)\left( L_0 \Lambda +  \sigma_0\right)\sqrt{ \log\left(\frac{4}{\delta}\right)},4\sqrt{2}\left(M+\sigma_0\right),L^2(\Lambda+2)\right\}\right)^2.
	\end{align}
\end{definition}

Next, we restate the following lemma from \cite{xie2019stochastic}, in which the proof is also provided.
\begin{lemma}\label{GSL}
	Consider the STORM estimator in Algorithm \ref{alg} with $\eta_t = \rho_t = (t + 2)^{-\alpha}$ for some $\alpha \in (0, 1]$ and let $\delta\in(0,1)$. Then under Assumption~\ref{G}, for any $t \in\{0,\ldots,T-1\}$, with probability at least $1 - \delta$,
	\begin{align}\label{storm}
		\| g_t - \nabla f(x_t) \| \leqslant \frac{2}{(t + 2)^{ \frac{\alpha}{2}} } \left(2 L_0 \Lambda + \frac{3^{\alpha} \sigma_0}{3^{\alpha} - 1}\right)\sqrt{2\, \log\left(\frac{4}{\delta}\right)}.
	\end{align}
\end{lemma}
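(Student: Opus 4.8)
The plan is to track the gradient-estimation error $\epsilon_t := g_t - \nabla f(x_t)$, show it obeys a contractive recursion driven by two conditionally mean-zero (martingale-difference) increments, then unroll the recursion and apply a vector-valued martingale concentration at the fixed horizon $t$. First I would substitute the STORM update from line 9 of Algorithm~\ref{alg} into $\epsilon_t$, and add and subtract $(1-\rho_t)\nabla f(x_{t-1})$ to obtain
\begin{equation*}
\epsilon_t = (1-\rho_t)\,\epsilon_{t-1} + A_t + B_t,
\end{equation*}
where $A_t := \rho_t\bigl(G_t(x_t)-\nabla f(x_t)\bigr)$ and $B_t := (1-\rho_t)\bigl[(G_t(x_t)-\nabla f(x_t)) - (G_t(x_{t-1})-\nabla f(x_{t-1}))\bigr]$, with the convention $\epsilon_0 = G_0(x_0)-\nabla f(x_0)$.

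Next I would establish the two structural facts that drive the concentration. Setting the filtration $\mathcal{F}_t := \sigma(\xi_0,\ldots,\xi_t)$ and noting that $x_t,x_{t-1}$ are $\mathcal{F}_{t-1}$-measurable while $G_t$ uses the fresh draw $\xi_t$, part (i) of Assumption~\ref{G} gives $\mathbb{E}[A_t\mid\mathcal{F}_{t-1}] = \mathbb{E}[B_t\mid\mathcal{F}_{t-1}] = 0$, so both are martingale differences. For their magnitudes, part (iii) of Assumption~\ref{G} yields $\|A_t\|\leqslant \rho_t\sigma_0$; and since $\nabla f=\mathbb{E}_\xi[G(\cdot,\xi)]$ inherits the $L_0$-Lipschitz property from part (ii) by Jensen, the increment $B_t$ obeys $\|B_t\|\leqslant 2L_0(1-\rho_t)\|x_t-x_{t-1}\|\leqslant 2L_0\Lambda\,\eta_{t-1}$, where the last step uses $x_t-x_{t-1}=\eta_{t-1}(\hat v_{t-1}-x_{t-1})$ and Assumption~\ref{D}.

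Unrolling the recursion and writing $\Phi_k^t := \prod_{j=k+1}^t(1-\rho_j)$ expresses the error as a single weighted martingale,
\begin{equation*}
\epsilon_t = \Phi_0^t\,\epsilon_0 + \sum_{k=1}^t \Phi_k^t\,(A_k+B_k),
\end{equation*}
in which, for the fixed horizon $t$, the weights $\Phi_k^t$ are deterministic so the partial sums in $k$ form a genuine martingale with conditionally bounded increments. I would then split the sum into the $A$-contribution (absorbing the $\Phi_0^t\epsilon_0$ term) and the $B$-contribution, and apply a Hilbert-space Azuma--Hoeffding inequality to each, each with failure probability $\delta/2$. A union bound gives the overall $1-\delta$ guarantee, and solving $2e^{-\lambda^2/(2V)}=\delta/2$ for each piece produces the factor $\sqrt{2\log(4/\delta)}$ in \eqref{storm}.

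The crux, and the step I expect to be the main obstacle, is controlling the two predictable quadratic variations $\sum_{k}(\Phi_k^t)^2\rho_k^2$ and $\sum_k (\Phi_k^t)^2\eta_{k-1}^2$ and showing each decays like $(t+2)^{-\alpha}$, so that after taking square roots the stated $(t+2)^{-\alpha/2}$ prefactor emerges. This is where the explicit schedule $\rho_j=\eta_j=(j+2)^{-\alpha}$ is essential: I would bound $\Phi_k^t\leqslant \exp\bigl(-\sum_{j=k+1}^t\rho_j\bigr)$, translate the $\sum\rho_j$ into a polynomial-ratio decay in $(k+2)/(t+2)$, and compare the resulting weighted sum to a geometric series, from which the constant $3^\alpha/(3^\alpha-1)$ multiplying $\sigma_0$ and the $2L_0\Lambda$ coming from the $B$-increments both fall out. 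Inserting these two variance bounds into the concentration inequality and collecting the numerical constants then yields exactly the bound claimed in \eqref{storm}.
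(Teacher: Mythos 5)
The paper never actually proves this lemma: it is imported from \cite{xie2019stochastic} (``in which the proof is also provided''), so there is no in-paper argument to compare against and your reconstruction has to stand on its own. It essentially does, and it follows the standard high-probability STORM analysis. The decomposition $\epsilon_t=(1-\rho_t)\epsilon_{t-1}+A_t+B_t$ is algebraically exact for the update in line 9 of Algorithm~\ref{alg}; $A_t$ and $B_t$ are martingale differences by Assumption~\ref{G}(i), provided you enlarge the filtration to contain the NFO noise as well (your $\mathcal{F}_t=\sigma(\xi_0,\ldots,\xi_t)$ does not make $x_t$ measurable, since $x_t$ depends on $\hat{\cD}_{t-1}$ and hence on the feasibility-oracle randomness; this is cosmetic, as $\xi_t$ is independent of everything earlier); the increment bounds $\|A_t\|\leqslant\rho_t\sigma_0$ and $\|B_t\|\leqslant 2L_0(1-\rho_t)\|x_t-x_{t-1}\|$ follow from parts (iii) and (ii) plus Jensen; and unrolling followed by a Hilbert-space Azuma--Hoeffding (Pinelis) bound on each weighted martingale at level $\delta/2$ produces exactly the $\sqrt{2\log(4/\delta)}$ factor.

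Two points deserve attention. First, a concrete (if small) gap that you share with the paper's own invocation of the lemma: you bound $\|x_t-x_{t-1}\|\leqslant\eta_{t-1}\Lambda$ via Assumption~\ref{D}, but $\hat v_{t-1}$ lies in the estimated polytope $\hat{\cD}_{t-1}$, not in $\cD$, so the diameter bound does not apply directly; in this paper's setting one needs the confidence event and Lemma~\ref{projection} (inequality \eqref{22}, with $N_{t-1}\geqslant C_1^2$) to get $\|\hat v_{t-1}-x_{t-1}\|\leqslant\Lambda+1$, which perturbs the constant $\Lambda$ in \eqref{storm} by an additive term. Second, the step you yourself flag as the crux --- showing $\sum_k(\Phi_k^t)^2\rho_k^2$ and $\sum_k(\Phi_k^t)^2\eta_{k-1}^2$ decay like $(t+2)^{-\alpha}$ with the stated constants --- is left as a sketch rather than carried out. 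The claim is correct (for $\alpha=1$ one has exactly $\Phi_k^t=(k+2)/(t+2)$, and both weighted sums, including the $(\Phi_0^t)^2\sigma_0^2$ term, are at most $9/(4(t+2))$, comfortably inside the stated constants; for $\alpha<1$ the suppression of early terms is even stronger), and your proposed route --- bounding the product by $\exp(-\sum_j\rho_j)$ and comparing to a geometric series whose sum $\sum_{i\geqslant0}3^{-i\alpha}=3^{\alpha}/(3^{\alpha}-1)$ supplies the constant --- is the standard way to finish, but as written this portion is a plan rather than a proof.
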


The following theorem evaluates Algorithm~\ref{alg} in solving Problem~\eqref{P}.
\begin{theo}
	Consider Problem~\eqref{P} under Assumptions~\ref{f}-\ref{NFO}. Suppose $\epsilon >0$ is given and let $\tau>0$ be such that $\cD_{\tau}\neq \emptyset$. Further, let $\eta_t=\rho_t=(t+2)^{-2/3}$, $n_t=2C_2(t+1)^{\frac 13}$, and
	\begin{align}\label{T3}
	T= \max \left\{\frac{216}{\epsilon^3}\left(f\left(x_{0}\right)-f\left(x^{*}\right)\right)^3,\,\frac{C_3}{\epsilon^3},\,\, \frac{C_4}{\epsilon^{\frac 32}} \right\},
	\end{align}
	 where $C_2$, $C_3$, and $C_4$ are given in Definition~\ref{d3}. Then, with probability at least $1-\delta$, in Algorithm~\ref{alg}, all the iterates are safe, i.e., $x_0,\ldots,x_{T-1}\in\cD$, all the query points lie within $r_0$-vicinity of $\cD$, and $\mathbb{E}_{[t_0]}\mathbb{E}_{0:t-1}[V_{\cD}(x_{t}, f)] \leqslant \epsilon$.
\end{theo}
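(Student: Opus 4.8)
The statement bundles three claims: that all query points lie in the $r_0$-vicinity of $\cD$, that all iterates are safe with probability $1-\delta$, and the expected Frank--Wolfe gap bound. The first is immediate from Subroutine~\ref{datacollalg}, since every query point has the form $x_t\pm r_0 e_j$ and hence lies within distance $r_0$ of the (safe) iterate $x_t\in\cD$. The other two reduce to checking the hypotheses of Proposition~\ref{nk} (safety) and to averaging the per-iteration estimate of Proposition~\ref{conv} (convergence), after substituting $\eta_t=\rho_t=(t+2)^{-2/3}$ and $N_t=\sum_{s=0}^t n_s$ with $n_s=2C_2(s+1)^{1/3}$.

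\textbf{Safety.} Since $\sum_{s=0}^t(s+1)^{1/3}\ge\tfrac34(t+1)^{4/3}$, the choice of $n_t$ gives $N_t\ge C_2(t+1)^{4/3}$; because $C_2\ge C_1^2$ this already yields $N_t\ge C_1^2/(1+\Gamma)^2$ and $C_1/\sqrt{N_t}\le(t+1)^{-2/3}$. I would then lower-bound $h(\{\eta_t\},\{N_t\})$ from \eqref{w32} piece by piece using the four quantities inside the maximum defining $C_2$ in Definition~\ref{d3}: the bound $C_2\ge(4C_1L_A/\tau)^2$ forces $C_1L_A/\sqrt{N_k}\le\tau/4$, so each summand factor satisfies $\tfrac{\tau}{2}-C_1L_A/\sqrt{N_k}\ge\tfrac{\tau}{4}$, and $C_2\ge(8\kappa C_0/\tau)^2$ forces $\kappa C_0/\sqrt{N_t}\le\tfrac{\tau}{8}(t+1)^{-2/3}$. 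Combining the telescoping identity $\sum_{k=0}^{t-1}\eta_k\prod_{j=k+1}^{t-1}(1-\eta_j)=1-\prod_{j=0}^{t-1}(1-\eta_j)$ with $\epsilon_0\ge0$ (valid because $x_0\in\cD$) gives
\begin{equation*}
h\left(\{\eta_t\},\{N_t\}\right)\ \ge\ \frac{\tau}{4}\Bigl(1-\prod_{j=0}^{t-1}(1-\eta_j)\Bigr)-\frac{\tau}{8(t+1)^{2/3}}.
\end{equation*}
The crux is the elementary observation $1-\prod_{j=0}^{t-1}(1-\eta_j)\ge\eta_{t-1}=(t+1)^{-2/3}$, which exactly matches the decay of the subtracted term and yields $h\ge\tfrac{\tau}{8}(t+1)^{-2/3}>0$. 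Finally $C_2\ge\tfrac{64\kappa^2}{\tau^2}(1+d\Lambda^2/r_0^2)$ makes the left-hand side of \eqref{Nt2} at most $\tfrac{\tau^2}{64}(t+1)^{-4/3}\le h^2$, so Proposition~\ref{nk} certifies safety with probability at least $1-\delta$.

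\textbf{Convergence.} On the high-probability safety event the hypotheses \eqref{22}, \eqref{23} and $x_t\in\cD$ of Proposition~\ref{conv} hold for every $t$, and I would average its conclusion uniformly over $t\in\{0,\dots,T-1\}$ and take expectations over the (independent) gradient noise. The four resulting terms decay at two rates. For the optimization term, summation by parts (using that $t\mapsto1/\eta_t=(t+2)^{2/3}$ is increasing and that $f$ has bounded range over the compact $\cD$ by Assumption~\ref{f}) bounds $\tfrac1T\sum_t\tfrac{f(x_t)-f(x_{t+1})}{\eta_t}$ by the range of $f$ over $\cD$ times $1/(T\eta_{T-1})=\mathcal{O}(T^{-1/3})$, which drives the $(f(x_0)-f(x^*))^3/\epsilon^3$ scaling. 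For the two gradient-error terms I would insert the STORM bound of Lemma~\ref{GSL} with $\alpha=2/3$, giving $\|\nabla f(x_t)-g_t\|=\tilde{\mathcal{O}}((t+2)^{-1/3})$, so the $2\Lambda$-weighted term is $\mathcal{O}(T^{-1/3})$ (contributing $C_3/\epsilon^3$), while, using $\|g_t\|\le M+\sigma_0$ and $C_1/\sqrt{N_t}\le(t+1)^{-2/3}$, the third term is $\mathcal{O}(T^{-2/3})$; the smoothness term is likewise $\mathcal{O}(T^{-2/3})$ since $\eta_t(C_1/\sqrt{N_t}+\Lambda)^2=\mathcal{O}((t+2)^{-2/3})$, and these two give $C_4/\epsilon^{3/2}$. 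Taking $T$ equal to the stated maximum makes each contribution at most a constant fraction of $\epsilon$, so the averaged expected gap is $\le\epsilon$.

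\textbf{Main obstacle.} I expect the delicate point to be the safety verification, where the geometric-shrinkage lower bound $h$ must outpace the confidence radius at precisely the right rate: the coincidence $1-\prod_{j<t}(1-\eta_j)\ge\eta_{t-1}=(t+1)^{-2/3}$ is exactly what lets the NFO budget $n_t=\tilde{\mathcal{O}}(d^2 t^{1/3}\log(1/\delta))$ stay this small while tolerating the aggressive step size $\eta_t=(t+2)^{-2/3}$ demanded by the non-convex rate. A secondary technical hurdle is the bookkeeping that separates the probability-$(1-\delta)$ safety event (driven by the NFO noise through the event $\mathcal{J}$) from the expectation over the independent gradient noise, and the conversion of the high-probability STORM estimate of Lemma~\ref{GSL} into the expected-gap statement.
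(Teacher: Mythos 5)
Your proposal follows essentially the same route as the paper's proof: safety is reduced to Proposition~\ref{nk} by verifying exactly the four conditions packed into $C_2$ (your telescoping identity $\sum_{k=0}^{t-1}\eta_k\prod_{j=k+1}^{t-1}(1-\eta_j)=1-\prod_{j=0}^{t-1}(1-\eta_j)$ combined with $1-\prod_{j=0}^{t-1}(1-\eta_j)\geqslant\eta_{t-1}$ replaces the paper's explicit evaluation of that sum, but both land on $h\geqslant\tfrac{\tau}{8}(t+1)^{-2/3}$ for $t\geqslant 1$, which is all Proposition~\ref{nk} requires since it only imposes \eqref{Nt2} for $t\in\{1,\ldots,T-1\}$), and convergence is obtained, on the event $\mathcal{J}$, by averaging Proposition~\ref{conv} over $t$, inserting Lemma~\ref{GSL} with $\alpha=2/3$, and splitting into the same three conditions on $T$ that produce \eqref{T3}.

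The one genuine divergence is the weighted telescoping term, and there your treatment is the more careful one. The paper simply asserts
\begin{equation*}
\frac{1}{T}\sum_{t=0}^{T-1}(t+2)^{2/3}\left(\mathbb{E}_{0:t-1}\left[f(x_t)\right]-\mathbb{E}_{0:t}\left[f(x_{t+1})\right]\right)\ \leqslant\ \frac{(T+1)^{2/3}}{T}\left(f(x_0)-f(x^*)\right),
\end{equation*}
but since the weights $(t+2)^{2/3}$ are increasing and the per-step differences need not be nonnegative (non-convex $f$, stochastic gradients, and a linear-minimization step over an estimated polytope), this inequality does not follow from telescoping and is false for general sequences. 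Your Abel-summation bound is the correct repair, but note what it actually delivers: the constant becomes $\max_{x\in\cD}f(x)-f(x^*)$ rather than $f(x_0)-f(x^*)$, so your argument proves the theorem with the first entry of \eqref{T3} enlarged to the cube of the range of $f$ over $\cD$ (the order $\tilde{\OM}(\epsilon^{-3})$ is unaffected); your remark that this "drives the $(f(x_0)-f(x^*))^3/\epsilon^3$ scaling" silently conflates the two quantities, and as written neither your bound nor the paper's step yields \eqref{T3} exactly as stated. Finally, you correctly flag but do not resolve a step the paper also glosses over: Lemma~\ref{GSL} is a high-probability bound, and plugging it into $\mathbb{E}_{0:t}\left[\|\nabla f(x_t)-g_t\|\right]$ requires a union bound over $t$ together with the bounded support in Assumption~\ref{G} to control the complementary event; this costs only logarithmic factors but should be said.
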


\begin{proof}
	Note that if $x_0,\ldots,x_{T-1}\in\cD$, then all the query points lie within $r_0$-vicinity of $\cD$ due to the construction of Subroutine~\ref{datacollalg}. Now to achieve $x_0,\ldots,x_{T-1}\in\cD$ with probability at least $1-\delta$, based on Proposition~\ref{nk}, it suffices to have $N_t\geqslant C_1^2/(1+\Gamma)^2$ and \eqref{Nt2}. For these to hold, we prove that it suffices to choose $\eta_t=\rho_t=(t+2)^{-2/3}$ and $N_t\geqslant C_2(t+1)^{\frac 43}$. To this aim, we let $N_t= C(t+1)^{\frac 43}$ and show that it suffices to have $C=C_2$. As a result, to get $N_t\geqslant C_2(t+1)^{\frac 43}$, it then suffices to choose $n_t=2C_2(t+1)^{\frac 13}$, where $N_t=n_0+\ldots+n_t$. As the first step, note that
	\begin{align}\label{1w7}
	\forall t\in\set{0,\ldots,T-1}:\,N_t=C(t+1)^{\frac 43}\geqslant \frac{C_1^2}{(1+\Gamma)^2} \quad\Longleftrightarrow\quad C \geqslant \frac{C_1^2}{\left(1+\Gamma\right)^2}.
	\end{align}
	Now, let $h_1$ be the second term in the right-hand side of \eqref{w32}, i.e.,
	\begin{align}
	\label{155} h_1\left(\set{\eta_t},\set{N_t}\right):=\sum_{k=0}^{t-1}\left(\frac{\tau}{2}-\frac{C_1 L_{A}}{\sqrt{N_{k}}}\right) \eta_{k} \prod_{j=k+1}^{t-1}\left(1-\eta_{j}\right).
	\end{align}
	To get a sufficient condition on $C$, we now enforce
	\begin{align}\label{1cond1}
	\forall t\in\set{0,\ldots, T-1}:\, \frac{\tau}{2}-\frac{C_1 L_{A}}{\sqrt{N_{t}}} = \frac{\tau}{2}-\frac{C_1 L_{A}}{\sqrt{C\left(t+1\right)^{\frac 43}}} \geqslant \frac{\tau}{4}\,\,\, \Leftrightarrow \,\,\,C \geqslant \left(\frac{4C_1 L_{A}}{\tau}\right)^2.
	\end{align}
	Replacing \eqref{1cond1} and the values of $\eta_t$ and $N_t$ into \eqref{155}, results in
	\begin{align}
	\label{156} h_1\left(\set{\eta_t},\set{N_t}\right)\geqslant\frac{\tau}{4}\frac{2^{\frac 23}}{\left(t+1\right)^{\frac 23}}\left(1-\left(1-\frac{1}{2^{\frac 23}}\right)^{t}\right)\geqslant\frac{\tau}{4\left(t+1\right)^{\frac 23}}.
	\end{align}
	As a result, having \eqref{156} and \eqref{w32}, we can write
	\begin{align}\label{1w5}
	h\left(\set{\eta_t},\set{N_t}\right)\geqslant h_1\left(\set{\eta_t},\set{N_t}\right)-\kappa\frac{C_0}{\sqrt{N_t}}\geqslant\left(\frac{\tau}{4}-\frac{\kappa\,C_0}{\sqrt{C}}\right)\,\frac{1}{\left(t+1\right)^{\frac 23}}.
	\end{align}
	To get another sufficient condition on $C$, we enforce
	\begin{align}\label{1w4}
	\frac{\tau}{4}-\frac{\kappa\, C_0}{\sqrt{C}}\geqslant \frac{\tau}{8} \quad \Longleftrightarrow \quad C\geqslant \left(\frac{8\kappa\,C_0}{\tau}\right)^2.
	\end{align}
	Substitute \eqref{1w4} into \eqref{1w5} to obtain
	\begin{align}\label{1w6}
	h\left(\set{\eta_t},\set{N_t}\right)\geqslant \frac{\tau}{8\left(t+1\right)^{\frac 23}}.
	\end{align}
	Replacing \eqref{1w6} into \eqref{Nt2} leads to another sufficient condition on $C$, that is,
	\begin{align}\label{1cond2}
	C \geqslant \frac{64\,\kappa^2}{\tau^2}\left(1+\frac{d \Lambda^{2}}{r_0^{2}}\right).
	\end{align}
	Note that \eqref{1w7}, \eqref{1cond1}, \eqref{1w4}, and \eqref{1cond2} together guarantee that $N_t\geqslant C_1^2/(1+\Gamma)^2$ and \eqref{Nt2} hold. Therefore, due to Proposition~\ref{nk}, with probability at least $1-\delta$, we have $x_0,\ldots,x_{T-1}\in\cD$. Now the construction of Subroutine~\ref{datacollalg} ensures that all the query points lie within $r_0$-vicinity of $\cD$. 
	
	Next, we show the convergence. Recall the notation in \eqref{jr} and note that under the assumptions of Proposition~\ref{nk}, it ensures that $\mathbb{P}\{\mathcal{R}\mid\mathcal{J}\}=1$ and thus from \eqref{jj}, it follows that having $\mathcal{J}$ results in $x_0,\ldots,x_{T-1}\in\cD$. Knowing that $\mathcal{J}$ holds with probability at least $1-\delta$, to show that convergence holds simultaneously with the safety together with probability at least $1-\delta$, it suffices to show that the convergence holds under having $\mathcal{J}$. To this aim, suppose $\mathcal{J}$ holds. Having $\mathcal{J}$ and \eqref{1w7} together, we conclude from Lemma~\ref{projection} that \eqref{22} and \eqref{23} hold for every $t\in\{0,\ldots,T-1\}$. Therefore, from Proposition~\ref{conv}, for every $t\in\{0,\ldots,T-1\}$, we have  
	\begin{align*}
	V_{\cD}(x_{t}, f)\leqslant\frac{f(x_{t})-f(x_{t+1})}{\eta_t} +\|\nabla f(x_{t})-g_{t}\|(\frac{C_1}{\sqrt{N_{t}}}+2\Lambda)+\|g_{t} \|\frac{C_1}{\sqrt{N_{t}}}+\frac{L \eta_{t}}{2}(\frac{C_1}{\sqrt{N_{t}}}+\Lambda)^2.
	\end{align*}
	To simplify, we enforce the following sufficient condition:
	\begin{align}\label{1cond3}
	\frac{C_1}{\sqrt{C}}\leqslant 1\quad \Longleftrightarrow\quad C\geqslant C_1^2.
	\end{align}
	From Assumption~\ref{G}, it follows that $\|g_t\|\leqslant M+\sigma_0$. By replacing this together with \eqref{1cond3} and the values of $N_t$ and $\eta_t$ in the above expression, we arrive at  
	\begin{align*}
	V_{\cD}\left(x_{t}, f\right) \leqslant ({t+2})^{\frac 23} \left(f\left(x_{t}\right)-f\left(x_{t+1}\right)\right)&+\left\|\nabla f\left(x_{t}\right)-g_{t}\right\|\left(2\Lambda+1\right)\\
	&+\frac{1}{\left({t+1}\right)^{\frac 23}}\left(M+\sigma_0+\frac{L}{2}\left(1+\Lambda\right)^{2}\right).
	\end{align*}
	As the next step, taking the expectation of both sides from $0$ to $t-1$ implies
	\begin{align*}
	\mathbb{E}_{0:t-1}\left[V_{\cD}\left(x_{t}, f\right)\right] &\leqslant ({t+2})^{\frac 23} \left(\mathbb{E}_{0:t-1}\left[f\left(x_{t}\right)\right]-\mathbb{E}_{0:t}\left[f\left(x_{t+1}\right)\right]\right)\\
	&+\mathbb{E}_{0:t}\left[\left\|\nabla f\left(x_{t}\right)-g_{t}\right\|\right]\left(2\Lambda+1\right)+\frac{1}{\left({t+1}\right)^{\frac 23}}\left(M+\sigma_0+\frac{L}{2}\left(1+\Lambda\right)^{2}\right).
	\end{align*}
	Note that Algorithm~\ref{alg} picks $t_0$ uniformly at random from $\set{0,\ldots,T-1}$. Taking the expectation over this $t_0$ then leads to 
	\begin{align}
	\nonumber\mathbb{E}_{[t_0]}\mathbb{E}_{0:t-1}\left[V_{\cD}\left(x_{t}, f\right)\right] &\leqslant\frac{ \left({T+1}\right)^{\frac 23}}{T} \left(f\left(x_0\right)-f\left(x^*\right)\right)\\
	\nonumber&+\frac{2\Lambda+1}{T}\sum_{t=0}^{T-1}\mathbb{E}_{0:t}\left[\left\|\nabla f\left(x_{t}\right)-g_{t}\right\|\right]\\
	\label{ee}&+\frac{1}{T}\sum_{t=0}^{T-1}\frac{1}{\left({t+1}\right)^{\frac 23}}\left(M+\sigma_0+\frac{L}{2}\left(1+\Lambda\right)^{2}\right).
	\end{align}
	The idea for the rest of this proof is to bound the right-hand side of \eqref{ee} by $\epsilon$ and see what sufficient conditions will be enforced on $T$. To this end, it suffices to bound each term by $\epsilon/3$. Applying this bound on the first term, results in
	\begin{align}\label{146}
	\frac{2}{T^{\frac 13}}\left(f\left(x_{0}\right)-f\left(x^{\star}\right)\right) \leqslant \frac{\epsilon}{3}\quad \Longleftrightarrow\quad \frac{216}{\epsilon^3}\left(f\left(x_{0}\right)-f\left(x^{*}\right)\right)^3 \leqslant T.
	\end{align}
	To apply the $\epsilon/3$ bound to the second term, by replacing $\alpha=2/3$ in Lemma~\ref{GSL}, it suffices to have
	\begin{equation*}
	\frac{1}{T}\sum_{t=0}^{T-1} \frac{2(2\Lambda+1)}{(t + 2)^{ \frac{1}{3}} } \left(2 L_0 \Lambda + \frac{3^{\frac 23} \sigma_0}{3^{\frac 23} - 1}\right)\sqrt{2\, \log\left(\frac{4}{\delta}\right)}\leqslant 6(2\Lambda+1)(\sigma_0+L_0\Lambda)\sqrt{2\log\left(\frac{4}{\delta}\right)}\leqslant \frac{\epsilon}{3}.
	\end{equation*}
	Hence, it suffices to have
	\begin{equation}\label{second}
		\frac{C_3}{\epsilon^{3}}\leqslant T.
	\end{equation}
	Moreover, to apply the $\epsilon/3$ bound to the third term, we can write
	\begin{align*}
	\frac{1}{T}\sum_{t=0}^{T-1}\frac{1}{\left({t+1}\right)^{\frac 23}}\left(M+\sigma_0+\frac{L}{2}\left(1+\Lambda\right)^{2}\right)\leqslant\frac{3(M+\sigma_0)+2L\left(1+\Lambda\right)^{2}}{T^{\frac 23}}\leqslant \frac{\epsilon}{3}.
	\end{align*}
	Thus, it suffices to have
	\begin{equation}\label{third}
		\frac{C_4}{\epsilon^{\frac 32}}\leqslant T.
	\end{equation}
	Finally, from \eqref{146}, \eqref{second}, and \eqref{third}, it suffices to choose $T$ as \eqref{T3}. Further, by putting together \eqref{1w7}, \eqref{1cond1}, \eqref{1w4}, \eqref{1cond2}, and \eqref{1cond3}, it suffices to choose $C=C_2$.
\end{proof}

Under having access to the gradient of the objective function, we present the following result:
\begin{theo}
	Consider Problem~\eqref{P} under Assumptions~\ref{f}, \ref{D}, and \ref{NFO} and assume we have access to $\nabla f$. Suppose $\epsilon >0$ is given and let $\tau>0$ be such that $\cD_{\tau}\neq \emptyset$. Further, let $\eta_t=(t+2)^{-1/2}$, $\rho_t=1$, $n_t=C_2$, and
	\begin{align}\label{T4}
	T= \frac{1}{\epsilon^{2}}\max \left\{8\left(f\left(x_{0}\right)-f\left(x^{*}\right)\right)^2, C_5\right\},
	\end{align}
	where $C_2$ and $C_5$ are given in Definition~\ref{d3}. Then, with probability at least $1-\delta$, in Algorithm~\ref{alg}, all the iterates are safe, i.e., $x_0,\ldots,x_{T-1}\in\cD$, all the query points lie within $r_0$-vicinity of $\cD$, and $
	\mathbb{E}_{[t_0]}\mathbb{E}_{0:t-1}[V_{\cD}(x_{t}, f)] \leqslant \epsilon$.
\end{theo}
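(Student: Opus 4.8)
The plan is to specialize the two general-purpose results already in hand---the safety guarantee of Proposition~\ref{nk} and the per-iteration descent bound of Proposition~\ref{conv}---to the choices $\eta_t=(t+2)^{-1/2}$, $\rho_t=1$, and $n_t=C_2$. The decisive simplification comes from $\rho_t=1$ together with access to the exact gradient: the recursive gradient update of Algorithm~\ref{alg} then collapses to $g_t=G_t(x_t)=\nabla f(x_t)$, so the gradient-error term $\|\nabla f(x_t)-g_t\|$ vanishes identically. Consequently Lemma~\ref{GSL} (STORM) is not needed at all, and the whole analysis reduces to a deterministic accounting of the feasible-set estimation error. Throughout I set $N_t=n_0+\cdots+n_t=C(t+1)$ and show that $C=C_2$ suffices.

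For safety I invoke Proposition~\ref{nk}: it is enough to verify $N_t\ge C_1^2/(1+\Gamma)^2$ and the inequality \eqref{Nt2}. Writing $h$ as in \eqref{w32}, I first enforce $\tfrac{\tau}{2}-C_1 L_A/\sqrt{N_k}\ge\tfrac{\tau}{4}$, which holds once $C\ge(4C_1L_A/\tau)^2$. The core computation is the telescoping identity $\sum_{k=0}^{t-1}\eta_k\prod_{j=k+1}^{t-1}(1-\eta_j)=1-\prod_{j=0}^{t-1}(1-\eta_j)$, valid for any schedule; for $\eta_j=(j+2)^{-1/2}$ one has $\prod_{j=0}^{t-1}(1-\eta_j)\le 1-\eta_0=1-2^{-1/2}$, so that $1-\prod_{j=0}^{t-1}(1-\eta_j)\ge 2^{-1/2}\ge 1/\sqrt{t+1}$ for every $t\ge1$. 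This yields $h\ge\big(\tfrac{\tau}{4}-\kappa C_0/\sqrt{C}\big)/\sqrt{t+1}\ge\tfrac{\tau}{8\sqrt{t+1}}$ once $C\ge(8\kappa C_0/\tau)^2$, and substituting into \eqref{Nt2} with $N_t=C(t+1)$ forces the last requirement $C\ge\tfrac{64\kappa^2}{\tau^2}(1+d\Lambda^2/r_0^2)$. Collecting these with $C_1^2$ reproduces exactly the four terms defining $C_2$ in Definition~\ref{d3}, so all iterates are safe with probability at least $1-\delta$; the $r_0$-vicinity claim then follows from the construction of Subroutine~\ref{datacollalg}.

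For convergence I condition on the event $\mathcal{J}$ of \eqref{jr}, under which Lemma~\ref{projection} gives \eqref{22}--\eqref{23} for all $t$ and Proposition~\ref{conv} applies. Setting $\|\nabla f(x_t)-g_t\|=0$, using $\|g_t\|=\|\nabla f(x_t)\|\le M$ from Assumption~\ref{f}, and enforcing $C\ge C_1^2$ so that $C_1/\sqrt{N_t}\le 1/\sqrt{t+1}\le 1$, the bound of Proposition~\ref{conv} reduces to
\[
V_{\cD}(x_t,f)\le (t+2)^{1/2}\big(f(x_t)-f(x_{t+1})\big)+\frac{1}{\sqrt{t+1}}\Big(M+\tfrac{L}{2}(1+\Lambda)^2\Big).
\]
Averaging over $t_0$ drawn uniformly from $\{0,\dots,T-1\}$, the weighted function-value differences telescope (exactly as in the preceding theorem) to at most $\tfrac{(T+1)^{1/2}}{T}(f(x_0)-f(x^*))$, while $\sum_{t=0}^{T-1}(t+1)^{-1/2}\le 2\sqrt{T}$ controls the remaining term by $\tfrac{2}{\sqrt{T}}\big(M+\tfrac{L}{2}(1+\Lambda)^2\big)$. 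Bounding each of the two pieces by $\epsilon/2$ gives the two entries $8(f(x_0)-f(x^*))^2/\epsilon^2$ and $C_5/\epsilon^2$ in \eqref{T4}.

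The step requiring the most care is the $h$ lower bound in the safety part: because the deterministic schedule $\eta_t=(t+2)^{-1/2}$ is far larger than the $\Theta(t^{-1})$ decay a naive safety analysis would tolerate, one must exploit the geometric-shrinkage margin $\tau$ and the exact telescoping of $\prod_j(1-\eta_j)$ to show $h\gtrsim 1/\sqrt{t+1}$ uniformly in $t$, matching the $1/N_t\sim 1/(t+1)$ decay of the left-hand side of \eqref{Nt2}; any slack here would inflate $n_t$ and destroy the optimal $\tilde{\OM}(d^2/\epsilon^2)$ NFO complexity. Finally, since both the feasibility of all iterates and the descent bound of Proposition~\ref{conv} hold on $\mathcal{J}$, which has probability at least $1-\delta$, the safety and convergence conclusions hold jointly with probability at least $1-\delta$.
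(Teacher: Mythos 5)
Your proposal is correct and follows essentially the same route as the paper's own proof: safety via Proposition~\ref{nk} with $N_t=C(t+1)$, arriving at exactly the four conditions on $C$ that make up $C_2$ in Definition~\ref{d3}, and convergence via Proposition~\ref{conv} after observing that $\rho_t=1$ with exact gradients gives $g_t=\nabla f(x_t)$ (so the gradient-error term and Lemma~\ref{GSL} drop out), followed by the same weighted telescoping and the same two constraints producing \eqref{T4}. The only cosmetic difference is in bounding $h_1$: you use the exact identity $\sum_{k=0}^{t-1}\eta_k\prod_{j=k+1}^{t-1}(1-\eta_j)=1-\prod_{j=0}^{t-1}(1-\eta_j)\geqslant \eta_0=2^{-1/2}$, while the paper lower-bounds $\eta_k\geqslant 1/\sqrt{t+1}$ and $1-\eta_j\geqslant 1-2^{-1/2}$ and sums a geometric series; both yield $h_1\geqslant \tau/(4\sqrt{t+1})$ and hence the same requirement on $C$.
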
 

\begin{proof}
	Note that if $x_0,\ldots,x_{T-1}\in\cD$, then all the query points lie within $r_0$-vicinity of $\cD$ due to the construction of Subroutine~\ref{datacollalg}. Now to achieve $x_0,\ldots,x_{T-1}\in\cD$ with probability at least $1-\delta$, based on Proposition~\ref{nk}, it suffices to have $N_t\geqslant C_1^2/(1+\Gamma)^2$ and \eqref{Nt2}. For these to hold, we prove that it suffices to choose $\eta_t=(t+2)^{-1/2}$ and $N_t\geqslant C_2(t+1)$. To this aim, we let $N_t= C(t+1)$ and show that it suffices to have $C=C_2$. As a result, to get $N_t\geqslant C_2(t+1)$, it then suffices to choose $n_t=C_2$, where $N_t=n_0+\ldots+n_t$. As the first step, note that
	\begin{align}\label{1w78}
	\forall t\in\set{0,\ldots,T-1}:\,N_t=C(t+1)\geqslant \frac{C_1^2}{(1+\Gamma)^2} \quad\Longleftrightarrow\quad C \geqslant \frac{C_1^2}{\left(1+\Gamma\right)^2}.
	\end{align}
	Now, let $h_1$ be the second term in the right-hand side of \eqref{w32}, i.e.,
	\begin{align}
	\label{1558} h_1\left(\set{\eta_t},\set{N_t}\right):=\sum_{k=0}^{t-1}\left(\frac{\tau}{2}-\frac{C_1 L_{A}}{\sqrt{N_{k}}}\right) \eta_{k} \prod_{j=k+1}^{t-1}\left(1-\eta_{j}\right).
	\end{align}
	To get a sufficient condition on $C$, we now enforce
	\begin{align}\label{1cond18}
	\forall t\in\set{0,\ldots, T-1}:\, \frac{\tau}{2}-\frac{C_1 L_{A}}{\sqrt{N_{t}}} = \frac{\tau}{2}-\frac{C_1 L_{A}}{\sqrt{C\left(t+1\right)}} \geqslant \frac{\tau}{4}\,\,\, \Leftrightarrow \,\,\,C \geqslant \left(\frac{4C_1 L_{A}}{\tau}\right)^2.
	\end{align}
	Replacing \eqref{1cond18} and the values of $\eta_t$ and $N_t$ into \eqref{1558}, results in
	\begin{align}\label{1568}
	h_1\left(\set{\eta_k},\set{N_k}\right)\geqslant\frac{\tau}{4}\sqrt{\frac{2}{t+1}}\left(1-\left(1-\frac{1}{\sqrt{2}}\right)^{t}\right)\geqslant\frac{\tau}{4\sqrt{t+1}}.
	\end{align}
	As a result, having \eqref{1568} and \eqref{w32}, we can write
	\begin{align}\label{1w58}
	h\left(\set{\eta_t},\set{N_t}\right)\geqslant h_1\left(\set{\eta_t},\set{N_t}\right)-\kappa\frac{C_0}{\sqrt{N_t}}\geqslant\left(\frac{\tau}{4}-\frac{\kappa\,C_0}{\sqrt{C}}\right)\,\frac{1}{\sqrt{t+1}}.
	\end{align}
	To get another sufficient condition on $C$, we enforce
	\begin{align}\label{1w48}
	\frac{\tau}{4}-\frac{\kappa\, C_0}{\sqrt{C}}\geqslant \frac{\tau}{8} \quad \Longleftrightarrow \quad C\geqslant \left(\frac{8\kappa\,C_0}{\tau}\right)^2.
	\end{align}
	Substitute \eqref{1w48} into \eqref{1w58} to obtain
	\begin{align}\label{1w68}
	h\left(\set{\eta_t},\set{N_t}\right)\geqslant \frac{\tau}{8\sqrt{t+1}}.
	\end{align}
	Replacing \eqref{1w68} into \eqref{Nt2} leads to another sufficient condition on $C$, that is,
	\begin{align}\label{1cond28}
	C \geqslant \frac{64\,\kappa^2}{\tau^2}\left(1+\frac{d \Lambda^{2}}{r_0^{2}}\right).
	\end{align}
	Note that \eqref{1w78}, \eqref{1cond18}, \eqref{1w48}, and \eqref{1cond28} together guarantee that $N_t\geqslant C_1^2/(1+\Gamma)^2$ and \eqref{Nt2} hold. Therefore, due to Proposition~\ref{nk}, with probability at least $1-\delta$, we have $x_0,\ldots,x_{T-1}\in\cD$. Now the construction of Subroutine~\ref{datacollalg} ensures that all the query points lie within $r_0$-vicinity of $\cD$. 
	
	Next, we show the convergence. Recall the notation in \eqref{jr} and note that under the assumptions of Proposition~\ref{nk}, it ensures that $\mathbb{P}\{\mathcal{R}\mid\mathcal{J}\}=1$ and thus from \eqref{jj}, it follows that having $\mathcal{J}$ results in $x_0,\ldots,x_{T-1}\in\cD$. Knowing that $\mathcal{J}$ holds with probability at least $1-\delta$, to show that convergence holds simultaneously with the safety together with probability at least $1-\delta$, it suffices to show that the convergence holds under having $\mathcal{J}$. To this aim, suppose $\mathcal{J}$ holds. Having $\mathcal{J}$ and \eqref{1w78} together, we conclude from Lemma~\ref{projection} that \eqref{22} and \eqref{23} hold for every $t\in\{0,\ldots,T-1\}$. Therefore, from Proposition~\ref{conv}, for every $t\in\{0,\ldots,T-1\}$, we have  
	\begin{align*}
	V_{\cD}(x_{t}, f)\leqslant\frac{f(x_{t})-f(x_{t+1})}{\eta_t} +\|\nabla f(x_t) \|\frac{C_1}{\sqrt{N_{t}}}+\frac{L \eta_{t}}{2}(\frac{C_1}{\sqrt{N_{t}}}+\Lambda)^2.
	\end{align*}
	To simplify, we enforce the following sufficient condition:
	\begin{align}\label{1cond38}
	\frac{C_1}{\sqrt{C}}\leqslant 1\quad \Longleftrightarrow\quad C\geqslant C_1^2.
	\end{align}
	By replacing $\|\nabla f(x_t)\|\leqslant M$, the condition \eqref{1cond38}, and the values of $N_t$ and $\eta_t$ in the above expression, we arrive at  
	\begin{align*}
	V_{\cD}\left(x_{t}, f\right) \leqslant \sqrt{t+2} \left(f\left(x_{t}\right)-f\left(x_{t+1}\right)\right)+\frac{1}{\sqrt{t+1}}\left(M+\frac{L}{2}\left(1+\Lambda\right)^{2}\right).
	\end{align*}
	As the next step, taking the expectation of both sides from $0$ to $t-1$ implies
	\begin{align*}
	\mathbb{E}_{0:t-1}\left[V_{\cD}\left(x_{t}, f\right)\right] &\leqslant \sqrt{t+2} \left(\mathbb{E}_{0:t-1}\left[f\left(x_{t}\right)\right]-\mathbb{E}_{0:t}\left[f\left(x_{t+1}\right)\right]\right)+\frac{1}{\sqrt{t+1}}\left(M+\frac{L}{2}\left(1+\Lambda\right)^{2}\right).
	\end{align*}
	Note that Algorithm~\ref{alg} picks $t_0$ uniformly at random from $\set{0,\ldots,T-1}$. Taking the expectation over this $t_0$ then leads to 
	\begin{align}
	\nonumber\mathbb{E}_{[t_0]}\mathbb{E}_{0:t-1}\left[V_{\cD}\left(x_{t}, f\right)\right] &\leqslant\frac{\sqrt{T+1}}{T} \left(f\left(x_0\right)-f\left(x^*\right)\right)\\
	\label{ee8}&+\frac{1}{T}\sum_{t=0}^{T-1}\frac{1}{\sqrt{t+1}}\left(M+\frac{L}{2}\left(1+\Lambda\right)^{2}\right).
	\end{align}
	The idea for the rest of this proof is to bound the right-hand side of \eqref{ee8} by $\epsilon$ and see what sufficient conditions will be enforced on $T$. To this end, it suffices to bound each term by $\epsilon/2$. Applying this bound on the first term, results in
	\begin{align}\label{1468}
	\frac{\sqrt{2}}{\sqrt{T}}\left(f\left(x_{0}\right)-f\left(x^{\star}\right)\right) \leqslant \frac{\epsilon}{2}\quad \Longleftrightarrow\quad \frac{8}{\epsilon^2}\left(f\left(x_{0}\right)-f\left(x^{*}\right)\right)^2 \leqslant T.
	\end{align}
	Moreover, to apply the $\epsilon/2$ bound to the second term, we can write
	\begin{align*}
	\frac{1}{T}\sum_{t=0}^{T-1}\frac{1}{\sqrt{t+1}}\left(M+\frac{L}{2}\left(1+\Lambda\right)^{2}\right)\leqslant\frac{2M+L\left(1+\Lambda\right)^{2}}{\sqrt{T}}\leqslant \frac{\epsilon}{2}.
	\end{align*}
	Thus, it suffices to have
	\begin{equation}\label{third8}
	\frac{C_5}{\epsilon^2}\leqslant T.
	\end{equation}
	Finally, from \eqref{1468} and \eqref{third8}, it suffices to choose $T$ as \eqref{T4}. Further, by putting together \eqref{1w78}, \eqref{1cond18}, \eqref{1w48}, \eqref{1cond28}, and \eqref{1cond38}, it suffices to choose $C=C_2$.
\end{proof}

The following theorem considers the problem under a convex objective function with a stochastic gradient estimator.
\begin{theo}\label{convexstoch}
	Consider Problem~\eqref{P} under Assumptions~\ref{f}-\ref{NFO} and assume $f$ is convex. Suppose $\epsilon >0$ is given and let $\tau>0$ be such that $\cD_{\tau}\neq \emptyset$. Further, let $\eta_t=\rho_t=(t+2)^{-1}$, $n_t=2C_2(t+1)$, and
	\begin{align}\label{T51}
	T= \frac{1}{\epsilon^2}\max\{4(f(x_0)-f(x^*))^2,C_6\}
	\end{align}
	where $C_2$ is given in Definition~\ref{d3}. Then, with probability at least $1-\delta$, in Algorithm~\ref{alg}, all the iterates are safe, i.e., $x_0,\ldots,x_{T-1}\in\cD$, all the query points lie within $r_0$-vicinity of $\cD$, and  $f(x_{T-1})-f(x^*)\leqslant \epsilon$.
\end{theo}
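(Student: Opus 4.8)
The plan is to follow the same two-part template as the preceding theorems: certify feasibility of every iterate through Proposition~\ref{nk}, then establish the convex rate through Proposition~\ref{conv} combined with convexity. With $\eta_t=\rho_t=(t+2)^{-1}$ and $n_t=2C_2(t+1)$ the cumulative budget is $N_t=2C_2\sum_{s=0}^{t}(s+1)=C_2(t+1)(t+2)\geq C_2(t+1)^2$, so I would set $N_t=C(t+1)^2$ and show that the single choice $C=C_2$ simultaneously meets every sufficient condition, exactly as in the $\alpha=2/3$ and $\alpha=1/2$ proofs.

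\textbf{Safety.} Feeding $\eta_t=(t+2)^{-1}$ and $N_t=C(t+1)^2$ into the margin $h(\{\eta_t\},\{N_t\})$ of \eqref{w32}, the one genuinely new computation is the telescoping weight: since $1-\eta_j=\tfrac{j+1}{j+2}$, one gets $\prod_{j=k+1}^{t-1}(1-\eta_j)=\tfrac{k+2}{t+1}$ and hence $\eta_k\prod_{j=k+1}^{t-1}(1-\eta_j)=\tfrac{1}{t+1}$ for every $k$. After enforcing $\tfrac{\tau}{2}-\tfrac{C_1L_A}{\sqrt{N_k}}\geq\tfrac{\tau}{4}$ (which needs $C\geq(4C_1L_A/\tau)^2$), this gives $h_1\geq\tfrac{\tau}{4}\cdot\tfrac{t}{t+1}\geq\tfrac{\tau}{4(t+1)}$ for $t\geq1$; subtracting $\kappa C_0/\sqrt{N_t}=\tfrac{\kappa C_0}{\sqrt{C}(t+1)}$ and enforcing $C\geq(8\kappa C_0/\tau)^2$ collapses the margin to the clean form $h\geq\tfrac{\tau}{8(t+1)}$, whose $(t+1)^{-1}$ decay is exactly matched by $N_t\sim(t+1)^2$. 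Plugging this into \eqref{Nt2} cancels the $(t+1)^2$ factors and yields $C\geq\tfrac{64\kappa^2}{\tau^2}(1+\tfrac{d\Lambda^2}{r_0^2})$; together with $C\geq C_1^2/(1+\Gamma)^2$ these are precisely the terms defining $C_2$, so $C=C_2$ certifies $x_0,\dots,x_{T-1}\in\cD$ with probability at least $1-\delta$, and Subroutine~\ref{datacollalg} then places all query points in the $r_0$-vicinity of $\cD$.

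\textbf{Convergence.} I would work on the safety event $\mathcal{J}$ of \eqref{jr} together with the event on which the STORM bound of Lemma~\ref{GSL} holds, so that \eqref{22}, \eqref{23} and $x_t\in\cD$ are available and Proposition~\ref{conv} applies at every $t$. Writing $h_t:=f(x_t)-f(x^*)$, the step that upgrades the Frank--Wolfe gap to a suboptimality bound is convexity: $V_\cD(x_t,f)=\max_{v\in\cD}\langle\nabla f(x_t),x_t-v\rangle\geq\langle\nabla f(x_t),x_t-x^*\rangle\geq h_t$. Substituting this lower bound into Proposition~\ref{conv} and multiplying the resulting inequality by $\eta_t$ gives the recursion
\begin{align*}
h_{t+1}\leq(1-\eta_t)h_t+\eta_t E_t+\frac{L\eta_t^2}{2}\Big(\frac{C_1}{\sqrt{N_t}}+\Lambda\Big)^2,\quad E_t:=\|\nabla f(x_t)-g_t\|\Big(\frac{C_1}{\sqrt{N_t}}+2\Lambda\Big)+\|g_t\|\frac{C_1}{\sqrt{N_t}}.
\end{align*}
With $\eta_t=(t+2)^{-1}$ the decisive move is the weighted telescoping $w_t:=(t+1)h_t$: multiplying through by $(t+2)$ gives $w_{t+1}\leq w_t+E_t+\tfrac{L}{2(t+2)}(\tfrac{C_1}{\sqrt{N_t}}+\Lambda)^2$, and summing from $0$ to $T-2$ and dividing by $T$ produces directly the last-iterate estimate $h_{T-1}\leq\tfrac{h_0}{T}+\tfrac{1}{T}\sum_{t=0}^{T-2}E_t+\tfrac{1}{T}\sum_{t=0}^{T-2}\tfrac{L}{2(t+2)}(\tfrac{C_1}{\sqrt{N_t}}+\Lambda)^2$. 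Using $C\geq C_1^2$ to write $\tfrac{C_1}{\sqrt{N_t}}\leq\tfrac{1}{\sqrt{t+1}}\leq1$, the smoothness sum is $\tilde{\mathcal{O}}(1/T)$, the $\|g_t\|\leq M+\sigma_0$ part of $\sum E_t$ is handled through $\sum_t\tfrac{1}{\sqrt{t+1}}\leq2\sqrt T$, and the dominant gradient-error part $(1+2\Lambda)\sum_t\|\nabla f(x_t)-g_t\|$ is controlled by inserting $\alpha=1$ into Lemma~\ref{GSL}, whose $(t+2)^{-1/2}$ decay again sums to $\mathcal{O}(\sqrt T)$. Balancing these contributions against $\epsilon$ then yields the stated $T=\tfrac{1}{\epsilon^2}\max\{4h_0^2,C_6\}$, with $C_6$ collecting the constants from the gradient-error, the $\|g_t\|$, and the smoothness terms.

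I expect the main obstacle to be this last balancing step rather than the safety bookkeeping: one must show that the accumulated stochastic error $\sum_t\|\nabla f(x_t)-g_t\|$, whose per-step STORM decay is only $(t+2)^{-1/2}$, aggregates into a single $\tilde{\mathcal{O}}(\sqrt{\log(1/\delta)}\,\sqrt T)$ term so that the rate is $\tilde{\mathcal{O}}(1/\sqrt T)$ and $T=\tilde{\mathcal{O}}(1/\epsilon^2)$ --- and to do so while keeping the entire argument inside one high-probability event (the intersection of $\mathcal{J}$ and the event of Lemma~\ref{GSL}), since the conclusion $f(x_{T-1})-f(x^*)\leq\epsilon$ is a last-iterate, high-probability statement and not merely a bound in expectation as in the non-convex theorems.
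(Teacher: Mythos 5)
Your proposal is correct, and while the safety half coincides with the paper's argument, the convergence half takes a genuinely different route. On safety you reproduce the paper's proof exactly: the telescoping identity $\eta_k\prod_{j=k+1}^{t-1}(1-\eta_j)=\tfrac{1}{t+1}$, the four sufficient conditions on $C$, and the conclusion $C=C_2$ (your constants $(8\kappa C_0/\tau)^2$ and $\tfrac{64\kappa^2}{\tau^2}(1+\tfrac{d\Lambda^2}{r_0^2})$ in fact match Definition~\ref{d3} literally, whereas the paper's in-proof versions carry small copy-paste discrepancies). For convergence, the paper does \emph{not} reuse Proposition~\ref{conv}: it rebuilds a recursion from scratch via the curvature-constant inequality \eqref{fe0}, bounds $\langle\nabla f(x_t),\hat v_t-v_t\rangle$ separately in \eqref{2590002}, applies convexity to $\langle\nabla f(x_t),v_t-x_t\rangle$, and then solves the resulting recursion \eqref{311} by \emph{induction}, guessing the anytime envelope $f(x_t)-f(x^*)\leqslant 2C_8/\sqrt{t+2}$ with $C_8$ tuned so that both the base case and the inductive step close. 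You reach essentially the same recursion far more modularly --- Proposition~\ref{conv} plus the one-line observation $V_\cD(x_t,f)\geqslant f(x_t)-f(x^*)$ --- and solve it by the weighted telescoping $w_t=(t+1)h_t$, which removes the need to guess an induction constant and yields $h_{T-1}\leqslant h_0/T+\tfrac1T\sum_t E_t+\tilde{\mathcal{O}}(1/T)$ directly; since the same computation applies at every horizon, it recovers the same anytime $\mathcal{O}(1/\sqrt{t})$ decay that the paper's induction delivers, so nothing is lost. Two bookkeeping remarks, neither a substantive gap and both shared with (or handled better than) the paper: first, summing $E_t$ over $t$ (exactly like running the paper's induction) requires the STORM bound of Lemma~\ref{GSL} to hold at all iterations simultaneously, intersected with the event $\mathcal{J}$ of \eqref{jr}; you flag this explicitly, the paper does not. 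Second, your balancing produces an $h_0$-contribution decaying as $h_0/T$ rather than the paper's $h_0/\sqrt{T}$, so the term $4(f(x_0)-f(x^*))^2$ inside the stated $T$ is natural for the paper's induction but slightly mismatched to your telescoping; your assembled constant would be a $C_6$ of the same form but not digit-for-digit the paper's, which is immaterial given that the theorem's content is the $\tilde{\mathcal{O}}(1/\epsilon^2)$ iteration count with $n_t=\tilde{\mathcal{O}}(d^2(t+1)\log(1/\delta))$.
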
 

\begin{proof}
	Note that if $x_0,\ldots,x_{T-1}\in\cD$, then all the query points lie within $r_0$-vicinity of $\cD$ due to the construction of Subroutine~\ref{datacollalg}. Now to achieve $x_0,\ldots,x_{T-1}\in\cD$ with probability at least $1-\delta$, based on Proposition~\ref{nk}, it suffices to have $N_t\geqslant C_1^2/(1+\Gamma)^2$ and \eqref{Nt2}. For these to hold, we prove that it suffices to choose $\eta_t=(t+2)^{-1}$ and $N_t\geqslant C_2(t+1)^2$. To this aim, we let $N_t= C(t+1)^2$ and show that it suffices to have $C=C_2$. As a result, to get $N_t\geqslant C_2(t+1)^2$, it then suffices to choose $n_t=2C_2(t+1)$, where $N_t=n_0+\ldots+n_t$. As the first step, note that
	\begin{align}\label{1w7890}
	\forall t\in\set{0,\ldots,T-1}:\,N_t=C(t+1)\geqslant \frac{C_1^2}{(1+\Gamma)^2} \quad\Longleftrightarrow\quad C \geqslant \frac{C_1^2}{\left(1+\Gamma\right)^2}.
	\end{align}
	Now, let $h_1$ be the second term in the right-hand side of \eqref{w32}, i.e.,
	\begin{align}
	\label{155890} h_1\left(\set{\eta_t},\set{N_t}\right):=\sum_{k=0}^{t-1}\left(\frac{\tau}{2}-\frac{C_1 L_{A}}{\sqrt{N_{k}}}\right) \eta_{k} \prod_{j=k+1}^{t-1}\left(1-\eta_{j}\right).
	\end{align}
	To get a sufficient condition on $C$, we now enforce
	\begin{align}\label{1cond1890}
	\forall t\in\set{0,\ldots, T-1}:\, \frac{\tau}{2}-\frac{C_1 L_{A}}{\sqrt{N_{t}}} = \frac{\tau}{2}-\frac{C_1 L_{A}}{\sqrt{C}\left(t+1\right)} \geqslant \frac{\tau}{4}\,\,\, \Leftrightarrow \,\,\,C \geqslant \left(\frac{4C_1 L_{A}}{\tau}\right)^2.
	\end{align}
	Replacing \eqref{1cond1890} and the values of $\eta_t$ and $N_t$ into \eqref{155890}, results in
	\begin{align}\label{156890}
	h_1\left(\set{\eta_k},\set{N_k}\right)\geqslant\frac{\tau}{4}\sum_{k=0}^{t-1} \eta_{k} \prod_{j=k+1}^{t-1}\left(1-\eta_{j}\right)\geqslant \frac{\tau}{4(t+1)}.
	\end{align}
	As a result, having \eqref{156890} and \eqref{w32}, we can write
	\begin{align}\label{1w5890}
	h\left(\set{\eta_t},\set{N_t}\right)\geqslant h_1\left(\set{\eta_t},\set{N_t}\right)-\kappa\frac{C_0}{\sqrt{N_t}}\geqslant \left(\frac{\tau}{2}-\frac{\kappa\, C_0}{\sqrt{C}}\right)\frac{1}{t+1}.
	\end{align}
	To get another sufficient condition on $C$, we enforce
	\begin{align}\label{1w4890}
	\frac{\tau}{2}-\frac{\kappa\, C_0}{\sqrt{C}}\geqslant \frac{\tau}{4} \quad \Longleftrightarrow \quad C\geqslant \left(\frac{4\kappa\,C_0}{\tau}\right)^2.
	\end{align}
	Substitute \eqref{1w4890} into \eqref{1w5890} to obtain
	\begin{align}\label{1w6890}
	h\left(\set{\eta_t},\set{N_t}\right)\geqslant \frac{\tau}{8(t+1)}.
	\end{align}
	Replacing \eqref{1w6890} into \eqref{Nt2} leads to another sufficient condition on $C$, that is,
	\begin{align}\label{1cond2890}
	C \geqslant \frac{16\,\kappa^2}{\tau^2}\left(1+\frac{d \Lambda^{2}}{r_0^{2}}\right).
	\end{align}
	Note that \eqref{1w7890}, \eqref{1cond1890}, \eqref{1w4890}, and \eqref{1cond2890} together guarantee that $N_t\geqslant C_1^2/(1+\Gamma)^2$ and \eqref{Nt2} hold. Therefore, due to Proposition~\ref{nk}, with probability at least $1-\delta$, we have $x_0,\ldots,x_{T-1}\in\cD$. Now the construction of Subroutine~\ref{datacollalg} ensures that all the query points lie within $r_0$-vicinity of $\cD$. 
	
	Next, we show the convergence. Recall the notation in \eqref{jr} and note that under the assumptions of Proposition~\ref{nk}, it ensures that $\mathbb{P}\{\mathcal{R}\mid\mathcal{J}\}=1$ and thus from \eqref{jj}, it follows that having $\mathcal{J}$ results in $x_0,\ldots,x_{T-1}\in\cD$. Knowing that $\mathcal{J}$ holds with probability at least $1-\delta$, to show that convergence holds simultaneously with the safety together with probability at least $1-\delta$, it suffices to show that the convergence holds under having $\mathcal{J}$. To this aim, suppose $\mathcal{J}$ holds. Having $\mathcal{J}$ and \eqref{1w78} together, we conclude from Lemma~\ref{projection} that \eqref{22} and \eqref{23} hold for every $t\in\{0,\ldots,T-1\}$. Using \eqref{22} and \eqref{23}, the convergence can be proved as follows: First consider
	\begin{align}
	\label{2090} v_{t}&:=\arg \min_{v\in \cD}\, \left\langle \nabla f(x_t), v\right\rangle, \\
	\label{2190} \hat{v}_{t}&:=\arg \min_{v\in \hat{\cD}_t}\, \left\langle g_t, v\right\rangle.
	\end{align}
	Next, based on the concept of curvature constant for the convex function $f$ over $\cD$, we can proceed as follows (see \cite{jaggi2013revisiting}):
	\begin{align}
	\nonumber f(x_{t+1}) &\leqslant f(x_t) + \eta_t\left\langle \nabla f(x_t),\hat{v}_t-x_t\right\rangle+\frac{\eta_t^2}{2}\,L^2\Lambda_t\\
	\label{fe0}&\leqslant f(x_t) + \eta_t\left\langle \nabla f(x_t),\hat{v}_t-v_t\right\rangle+\eta_t\left\langle \nabla f(x_t),v_t-x_t\right\rangle+\frac{\eta_t^2}{2}\,L^2\Lambda_t,
	\end{align}
	where $\Lambda_t$ is the diameter of $\hat{\cD}_t$ and it can be simply shown that $\Lambda_t \leqslant \Lambda +2C_1/\sqrt{N_t}$.
	Further, to bound $\langle \nabla f(x_t), \hat{v}_{t}-v_{t}\rangle$, first note that
	\begin{align}
	\label{259000} \left\langle \nabla f(x_t), \hat{v}_{t}-v_{t}\right\rangle = \left\langle  \nabla f(x_t)-g_t, \hat{v}_{t}-v_{t}\right\rangle +\left\langle g_t, \hat{v}_{t}-v_{t}\right\rangle.
	\end{align}
	To bound the first term in the right-hand side of \eqref{259000}, we can write
	\begin{align}
	\nonumber\left\langle\nabla f\left(x_{t}\right)-g_{t}, \hat{v}_{t}-v_{t}\right\rangle&\leqslant\left\|\nabla f\left(x_{t}\right)-g_{t}\right\|\left(\left\|\hat{v}_{t}-\pi_{{\cD}}\left(\hat{v}_{t}\right) \right\|+\left\|\pi_{{\cD}}\left(\hat{v}_{t}\right)-v_t\right\|\right)\\
	\label{350}&\overset{\eqref{23}}{\leqslant}\left\|\nabla f\left(x_{t}\right)-g_{t}\right\|\left(\frac{C_1}{\sqrt{N_{t}}}+\Lambda\right).
	\end{align}
	Moreover, the second term in the right-hand side of \eqref{259000} can be bounded as follows: First,
	\begin{align}
	\label{2590}\left\langle g_t, \pi_{\hat{\cD}_t}\left(v_{t}\right) -v_{t}\right\rangle  \leqslant \left\|g_t\right\|  \left\|\pi_{\hat{\cD}_t}\left(v_{t}\right) -v_{t}\right\|  \overset{\eqref{23}}{ \leqslant} \left\|g_t\right\| \frac{C_1}{\sqrt{N_{t}}}.
	\end{align}
	Then,
	\begin{align*}
	\left\langle g_t, \hat{v}_{t}\right\rangle\overset{\eqref{2190}}{ \leqslant}\left\langle g_t, \pi_{\hat{\cD}_t}\left(v_{t}\right) \right\rangle \overset{\eqref{2590}}{ \leqslant} \left\langle g_t, v_{t}\right\rangle+ \left\|g_t\right\| \frac{C_1}{\sqrt{N_{t}}}, 
	\end{align*}
	which results in 
	\begin{align}
	\label{3790}\left\langle g_t, \hat{v}_{t}-v_{t}\right\rangle \leqslant \left\|g_t\right\|  \frac{C_1}{\sqrt{N_{t}}}.
	\end{align}
	Replace \eqref{3790} and \eqref{350} into \eqref{259000}, then results in 
	\begin{align}
	\label{2590002} \left\langle \nabla f(x_t), \hat{v}_{t}-v_{t}\right\rangle \leqslant 
	\left\|\nabla f\left(x_{t}\right)-g_{t}\right\|\left(\frac{C_1}{\sqrt{N_{t}}}+\Lambda\right) + \left\|g_t\right\|  \frac{C_1}{\sqrt{N_{t}}}.
	\end{align}
	Replacing into \eqref{fe0}, the inequality \eqref{2590002}, the bound on $\Lambda_t$, and the fact that $f$ is convex, leads to
	\begin{align*}
	f(x_{t+1})-f(x^*) &\leqslant \left(1-\eta_t\right)\left(f(x_t)-f(x^*)\right) + \eta_t\left\|\nabla f\left(x_{t}\right)-g_{t}\right\|\left(\frac{C_1}{\sqrt{N_{t}}}+\Lambda\right)\\
	& + \eta_t\left\|g_t\right\|  \frac{C_1}{\sqrt{N_{t}}}+\frac{\eta_t^2}{2}\,L^2\left(\Lambda+\frac{2C_1}{\sqrt{N_t}}\right),
	\end{align*}
	where $x^*=\arg\min_{x\in\cD}f(x)$.
	To simplify, we enforce the following sufficient condition:
	\begin{align}\label{1cond3890}
	\frac{C_1}{\sqrt{C}}\leqslant 1\quad \Longleftrightarrow\quad C\geqslant C_1^2,
	\end{align}
	and we consider the bound on $\|g_t\|$ and the bound on $\|\nabla f(x_{t})-g_{t}\|$ from Lemma~\ref{GSL} (with $\alpha=1$). These lead to
	\begin{align}
	\label{311}f(x_{t+1})-f(x^*) \leqslant (1-\eta_t)(f(x_t)-f(x^*)) + \eta_t \frac{C_7}{\sqrt{t+2}}(1+\Lambda) +\eta_t\frac{M+\sigma_0}{\sqrt{t+1}}+\frac{\eta_t^2}{2}L^2(\Lambda+2),
	\end{align}
	where 
	\begin{align}\label{storm2}
	C_7:= 4 \left( L_0 \Lambda +  \sigma_0\right)\sqrt{2\, \log\left(\frac{4}{\delta}\right)}.
	\end{align}
	Now define 
	\begin{align}
	C_8 := \max\{f(x_0)-f(x^*),4C_7(1+\Lambda),4\sqrt{2}\left(M+\sigma_0\right),L^2(\Lambda+2)\}.
	\end{align}
	We claim that $f(x_{t})-f(x^*)\leqslant 2C_8/\sqrt{t+2}$ holds for all $t$. To show this claim by induction, note that it clearly holds for $t=0$ and using \eqref{311}, we can write	 
	\begin{align*}
	f(x_{t+1})-f(x^*) &\leqslant \left(1-\frac{1}{t+2}\right)\frac{2C_8}{\sqrt{t+2}} +\frac{C_8}{4(t+2)^{\frac 32}}+ \frac{C_8}{4(t+2)^{\frac 32}}\, +\frac{C_8}{2\left(t+2\right)^2}\\
	&\leqslant \frac{2C_8}{(t+2)^{\frac 32}}\left(t+\frac 32\right) \leqslant \frac{2C_8}{\sqrt{t+3}}.
	\end{align*}
	Hence, $f(x_{T-1})-f(x^*) \leqslant {2C_8}/{\sqrt{T+1}}\leqslant{2C_8}/\sqrt{T} $. Thus, we can write
	\begin{align*}
	f(x_{T-1})-f(x^*) \leqslant \frac{2C_8}{\sqrt{T}} \leqslant\epsilon\quad \Longleftrightarrow \quad\frac{4C_8^2}{\epsilon^2}\leqslant T.
	\end{align*}
	Note that $C_8^2=\max\{(f(x_0)-f(x^*))^2,C_6\}$. Further, by putting together \eqref{1w7890}, \eqref{1cond1890}, \eqref{1w4890}, \eqref{1cond2890}, and \eqref{1cond3890}, it suffices to set $C=C_2$.
\end{proof}

Finally, the following theorem studies the problem under a convex objective function with a known gradient.
\begin{theo}\label{convex}
	Consider Problem~\eqref{P} under Assumptions~\ref{f}, \ref{D}, and \ref{NFO}. Moreover, assume $f$ is convex and we have access to $\nabla f$. Suppose $\epsilon >0$ is given and let $\tau>0$ be such that $\cD_{\tau}\neq \emptyset$. Further, let $\eta_t=2(t+2)^{-1}$, $\rho_t=1$, $n_t=2C_2(t+1)$, and
	\begin{align}\label{T49}
	T= \frac{2}{\epsilon}\max\{f(x_0)-f(x^*),4M,2L^2(\Lambda+2)\},
	\end{align}
	where $C_2$ is given in Definition~\ref{d3}. Then, with probability at least $1-\delta$, in Algorithm~\ref{alg}, all the iterates are safe, i.e., $x_0,\ldots,x_{T-1}\in\cD$, all the query points lie within $r_0$-vicinity of $\cD$, and  $f(x_{T-1})-f(x^*)\leqslant \epsilon$.
\end{theo}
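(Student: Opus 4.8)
The plan is to follow the two-part template of the preceding proofs: first certify safety through Proposition~\ref{nk}, then establish the $\mathcal{O}(1/T)$ convergence rate through the convex Frank--Wolfe recursion, exploiting the fact that access to the exact gradient eliminates the gradient-estimation error. Throughout I would set $N_t = C(t+1)^2$, so that the choice $n_t = 2C_2(t+1)$ yields $N_t \geqslant C_2(t+1)^2$, and I would show that $C = C_2$ is a sufficient choice.

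For safety it suffices, by Proposition~\ref{nk}, to verify $N_t \geqslant C_1^2/(1+\Gamma)^2$ and the key inequality \eqref{Nt2}. The first holds once $C \geqslant C_1^2/(1+\Gamma)^2$. For \eqref{Nt2} the simplification specific to this case is that $\eta_0 = 2/(0+2) = 1$, which annihilates the term $\epsilon_0\prod_{j=0}^{t-1}(1-\eta_j)$ in \eqref{w32} and makes the telescoping identity $\sum_{k=0}^{t-1}\eta_k\prod_{j=k+1}^{t-1}(1-\eta_j) = 1 - \prod_{j=0}^{t-1}(1-\eta_j) = 1$ exact. Enforcing $\frac{\tau}{2}-\frac{C_1 L_A}{\sqrt{N_k}} \geqslant \frac{\tau}{4}$, which requires $C \geqslant (4C_1 L_A/\tau)^2$, then gives the constant bound $h_1 \geqslant \tau/4$; subtracting the residual $\kappa C_0/\sqrt{N_t}$ and enforcing $C \geqslant (8\kappa C_0/\tau)^2$ yields $h \geqslant \tau/(8(t+1))$. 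Substituting this and $N_t = C(t+1)^2$ into \eqref{Nt2} reduces to a final bound of the form $C \geqslant \frac{64\kappa^2}{\tau^2}(1 + d\Lambda^2/r_0^2)$. Collecting all these conditions shows $C = C_2$ suffices, so all iterates are safe with probability at least $1-\delta$, and the construction of Subroutine~\ref{datacollalg} places every query point in the $r_0$-vicinity of $\cD$.

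For convergence I would condition on the event $\mathcal{J}$, which (as in the earlier proofs) holds with probability at least $1-\delta$ and guarantees both safety and the applicability of Lemma~\ref{projection}, i.e.\ \eqref{22} and \eqref{23}. The decisive point is that $\rho_t = 1$ together with access to $\nabla f$ forces $g_t = \nabla f(x_t)$, so the term $\|\nabla f(x_t) - g_t\|$ vanishes identically. Reusing the convex estimate derived in the proof of Theorem~\ref{convexstoch} (the curvature bound \eqref{fe0} and the inequalities \eqref{259000}--\eqref{3790}), the per-step recursion collapses to
\begin{align*}
f(x_{t+1}) - f(x^*) \leqslant (1-\eta_t)\bigl(f(x_t)-f(x^*)\bigr) + \eta_t \|\nabla f(x_t)\|\frac{C_1}{\sqrt{N_t}} + \frac{\eta_t^2}{2}L^2\Bigl(\Lambda + \frac{2C_1}{\sqrt{N_t}}\Bigr).
\end{align*}
Enforcing $C \geqslant C_1^2$ gives $C_1/\sqrt{N_t} \leqslant 1/(t+1)$, and inserting $\|\nabla f(x_t)\| \leqslant M$ with $\eta_t = 2/(t+2)$ turns this into the standard Frank--Wolfe recursion $f(x_{t+1})-f(x^*) \leqslant \frac{t}{t+2}(f(x_t)-f(x^*)) + \frac{2M}{(t+1)(t+2)} + \frac{2L^2(\Lambda+2)}{(t+2)^2}$.

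Finally I would prove by induction that $f(x_t) - f(x^*) \leqslant 2C_9/(t+2)$ with $C_9 := \max\{f(x_0)-f(x^*),\, 4M,\, 2L^2(\Lambda+2)\}$. The base case is immediate from the definition of $C_9$; for the inductive step, bounding $\frac{1}{(t+1)(t+2)} \leqslant \frac{2}{(t+2)^2}$ and using $M \leqslant C_9/4$ and $L^2(\Lambda+2)\leqslant C_9/2$ collapses the right-hand side to $\frac{2C_9(t+1)}{(t+2)^2}$, which is at most $2C_9/(t+3)$ precisely because $(t+1)(t+3)\leqslant (t+2)^2$. Evaluating at $t = T-1$ gives $f(x_{T-1})-f(x^*)\leqslant 2C_9/(T+1)\leqslant 2C_9/T$, so $T \geqslant 2C_9/\epsilon$, matching \eqref{T49}, forces $f(x_{T-1})-f(x^*)\leqslant\epsilon$. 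I expect the main obstacle to be the bookkeeping in the safety step: correctly evaluating the telescoping product under the doubled step size $\eta_t = 2/(t+2)$ and threading the several lower bounds on $C$ so that all are subsumed by $C_2$; the convergence induction is routine once the gradient-error term is seen to vanish.
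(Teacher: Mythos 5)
Your proposal is correct and follows essentially the same route as the paper's own proof: safety via Proposition~\ref{nk} with $N_t = C(t+1)^2$ and the same collection of lower bounds on $C$ subsumed by $C_2$, then the convex Frank--Wolfe curvature recursion with the gradient-error term absent, closed by the same induction $f(x_t)-f(x^*)\leqslant 2C_9/(t+2)$ with $C_9=\max\{f(x_0)-f(x^*),4M,2L^2(\Lambda+2)\}$ and the inequality $(t+1)(t+3)\leqslant (t+2)^2$. The only cosmetic difference is your intermediate bound $h_1\geqslant \tau/4$ (using that $\eta_0=1$ makes the telescoping sum exactly $1$), which is slightly sharper than the paper's $h_1\geqslant \tau/(2(t+1))$ but leads to the same sufficient conditions on $C$.
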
 
\begin{proof}
	Note that if $x_0,\ldots,x_{T-1}\in\cD$, then all the query points lie within $r_0$-vicinity of $\cD$ due to the construction of Subroutine~\ref{datacollalg}. Now to achieve $x_0,\ldots,x_{T-1}\in\cD$ with probability at least $1-\delta$, based on Proposition~\ref{nk}, it suffices to have $N_t\geqslant C_1^2/(1+\Gamma)^2$ and \eqref{Nt2}. For these to hold, we prove that it suffices to choose $\eta_t=2(t+2)^{-1}$ and $N_t\geqslant C_2(t+1)^2$. To this aim, we let $N_t= C(t+1)^2$ and show that it suffices to have $C=C_2$. As a result, to get $N_t\geqslant C_2(t+1)^2$, it then suffices to choose $n_t=2C_2(t+1)$, where $N_t=n_0+\ldots+n_t$. As the first step, note that
	\begin{align}\label{1w789}
	\forall t\in\set{0,\ldots,T-1}:\,N_t=C(t+1)\geqslant \frac{C_1^2}{(1+\Gamma)^2} \quad\Longleftrightarrow\quad C \geqslant \frac{C_1^2}{\left(1+\Gamma\right)^2}.
	\end{align}
	Now, let $h_1$ be the second term in the right-hand side of \eqref{w32}, i.e.,
	\begin{align}
	\label{15589} h_1\left(\set{\eta_t},\set{N_t}\right):=\sum_{k=0}^{t-1}\left(\frac{\tau}{2}-\frac{C_1 L_{A}}{\sqrt{N_{k}}}\right) \eta_{k} \prod_{j=k+1}^{t-1}\left(1-\eta_{j}\right).
	\end{align}
	To get a sufficient condition on $C$, we now enforce
	\begin{align}\label{1cond189}
	\forall t\in\set{0,\ldots, T-1}:\, \frac{\tau}{2}-\frac{C_1 L_{A}}{\sqrt{N_{t}}} = \frac{\tau}{2}-\frac{C_1 L_{A}}{\sqrt{C}\left(t+1\right)} \geqslant \frac{\tau}{4}\,\,\, \Leftrightarrow \,\,\,C \geqslant \left(\frac{4C_1 L_{A}}{\tau}\right)^2.
	\end{align}
	Replacing \eqref{1cond189} and the values of $\eta_t$ and $N_t$ into \eqref{15589}, results in
	\begin{align}\label{15689}
	h_1\left(\set{\eta_k},\set{N_k}\right)\geqslant\frac{\tau}{4}\sum_{k=0}^{t-1} \eta_{k} \prod_{j=k+1}^{t-1}\left(1-\eta_{j}\right)\geqslant \frac{\tau}{2(t+1)}.
	\end{align}
	As a result, having \eqref{15689} and \eqref{w32}, we can write
	\begin{align}\label{1w589}
	h\left(\set{\eta_t},\set{N_t}\right)\geqslant h_1\left(\set{\eta_t},\set{N_t}\right)-\kappa\frac{C_0}{\sqrt{N_t}}\geqslant \left(\frac{\tau}{2}-\frac{\kappa\, C_0}{\sqrt{C}}\right)\frac{1}{t+1}.
	\end{align}
	To get another sufficient condition on $C$, we enforce
	\begin{align}\label{1w489}
	\frac{\tau}{2}-\frac{\kappa\, C_0}{\sqrt{C}}\geqslant \frac{\tau}{4} \quad \Longleftrightarrow \quad C\geqslant \left(\frac{4\kappa\,C_0}{\tau}\right)^2.
	\end{align}
	Substitute \eqref{1w489} into \eqref{1w589} to obtain
	\begin{align}\label{1w689}
	h\left(\set{\eta_t},\set{N_t}\right)\geqslant \frac{\tau}{4(t+1)}.
	\end{align}
	Replacing \eqref{1w689} into \eqref{Nt2} leads to another sufficient condition on $C$, that is,
	\begin{align}\label{1cond289}
	C \geqslant \frac{16\,\kappa^2}{\tau^2}\left(1+\frac{d \Lambda^{2}}{r_0^{2}}\right).
	\end{align}
	Note that \eqref{1w789}, \eqref{1cond189}, \eqref{1w489}, and \eqref{1cond289} together guarantee that $N_t\geqslant C_1^2/(1+\Gamma)^2$ and \eqref{Nt2} hold. Therefore, due to Proposition~\ref{nk}, with probability at least $1-\delta$, we have $x_0,\ldots,x_{T-1}\in\cD$. Now the construction of Subroutine~\ref{datacollalg} ensures that all the query points lie within $r_0$-vicinity of $\cD$. 
	
	Next, we show the convergence. Recall the notation in \eqref{jr} and note that under the assumptions of Proposition~\ref{nk}, it ensures that $\mathbb{P}\{\mathcal{R}\mid\mathcal{J}\}=1$ and thus from \eqref{jj}, it follows that having $\mathcal{J}$ results in $x_0,\ldots,x_{T-1}\in\cD$. Knowing that $\mathcal{J}$ holds with probability at least $1-\delta$, to show that convergence holds simultaneously with the safety together with probability at least $1-\delta$, it suffices to show that the convergence holds under having $\mathcal{J}$. To this aim, suppose $\mathcal{J}$ holds. Having $\mathcal{J}$ and \eqref{1w78} together, we conclude from Lemma~\ref{projection} that \eqref{22} and \eqref{23} hold for every $t\in\{0,\ldots,T-1\}$. Using \eqref{22} and \eqref{23}, the convergence can be proved as follows: First consider
	\begin{align}
	\label{209} v_{t}&:=\arg \min_{v\in \cD}\, \left\langle \nabla f(x_t), v\right\rangle, \\
	\label{219} \hat{v}_{t}&:=\arg \min_{v\in \hat{\cD}_t}\, \left\langle \nabla f(x_t), v\right\rangle.
	\end{align}
	Next, based on the concept of curvature constant for the convex function $f$ over $\cD$, we can proceed as follows (see \cite{jaggi2013revisiting}):
	\begin{align}
	\nonumber f(x_{t+1}) &\leqslant f(x_t) + \eta_t\left\langle \nabla f(x_t),\hat{v}_t-x_t\right\rangle+\frac{\eta_t^2}{2}\,L^2\Lambda_t\\
	\label{fe}&\leqslant f(x_t) + \eta_t\left\langle \nabla f(x_t),\hat{v}_t-v_t\right\rangle+\eta_t\left\langle \nabla f(x_t),v_t-x_t\right\rangle+\frac{\eta_t^2}{2}\,L^2\Lambda_t,
	\end{align}
	where $\Lambda_t$ is the diameter of $\hat{\cD}_t$ and it can be simply shown that $\Lambda_t \leqslant \Lambda +2C_1/\sqrt{N_t}$.
	Further, to bound $\left\langle \nabla f(x_t), \hat{v}_{t}-v_{t}\right\rangle$, first note that
	\begin{align}
	\label{259}\left\langle \nabla f(x_t), \pi_{\hat{\cD}_t}\left(v_{t}\right) -v_{t}\right\rangle  \leqslant \left\|\nabla f(x_t)\right\|  \left\|\pi_{\hat{\cD}_t}\left(v_{t}\right) -v_{t}\right\|  \overset{\eqref{23}}{ \leqslant} M \frac{C_1}{\sqrt{N_{t}}}.
	\end{align}
	Hence,
	\begin{align*}
	\left\langle \nabla f(x_t), \hat{v}_{t}\right\rangle\overset{\eqref{219}}{ \leqslant}\left\langle \nabla f(x_t), \pi_{\hat{\cD}_t}\left(v_{t}\right) \right\rangle \overset{\eqref{259}}{ \leqslant} \left\langle \nabla f(x_t), v_{t}\right\rangle+M \frac{C_1}{\sqrt{N_{t}}}, 
	\end{align*}
	which results in 
	\begin{align}
	\label{379}\left\langle \nabla f(x_t), \hat{v}_{t}-v_{t}\right\rangle \leqslant M \frac{C_1}{\sqrt{N_{t}}}.
	\end{align}
	Replacing into \eqref{fe}, the inequality \eqref{379}, the bound on $\Lambda_t$, and the fact that $f$ is convex, leads to
	\begin{align*}
	f(x_{t+1})-f(x^*) \leqslant \left(1-\eta_t\right)\left(f(x_t)-f(x^*)\right) + \eta_t\, \frac{M C_1}{\sqrt{N_{t}}}+\frac{\eta_t^2}{2}\,L^2\left(\Lambda+\frac{2C_1}{\sqrt{N_t}}\right),
	\end{align*}
	where $x^*=\arg\min_{x\in\cD}f(x)$.
	To simplify, we enforce the following sufficient condition:
	\begin{align}\label{1cond389}
	\frac{C_1}{\sqrt{C}}\leqslant 1\quad \Longleftrightarrow\quad C\geqslant C_1^2,
	\end{align}
	which results in
	\begin{align*}
	f(x_{t+1})-f(x^*) \leqslant \left(1-\eta_t\right)\left(f(x_t)-f(x^*)\right) + \eta_t\, \frac{M}{t+1} +\frac{\eta_t^2}{2}\,L^2\left(\Lambda+2\right).
	\end{align*}
	Now define 
	\begin{align}
	C_9 := \max\{f(x_0)-f(x^*),4M,2L^2(\Lambda+2)\}.
	\end{align}
	We claim that $f(x_{t})-f(x^*)\leqslant 2C_9/(t+2)$ holds for all $t$. To show this claim by induction, note that it clearly holds for $t=0$ and	 
	\begin{align*}
	f(x_{t+1})-f(x^*) &\leqslant \left(1-\frac{2}{t+2}\right)\frac{2C_9}{t+2} + \frac{4}{(t+2)^2}\, M +\frac{1}{2}\left(\frac{2}{t+2}\right)^2L^2\left(\Lambda+2\right)\\
	&\leqslant\left(1-\frac{2}{t+2}\right)\frac{2C_9}{t+2} + \frac{C_9}{(t+2)^2} +\frac{C_9}{4}\left(\frac{2}{t+2}\right)^2\\
	&=\frac{2C_9(t+1)}{(t+2)^2}\leqslant \frac{2C_9}{(t+3)}.
	\end{align*}
	Hence, $f(x_{T-1})-f(x^*) \leqslant {2C_9}/{(T+1)}\leqslant{2C_9}/T $. Thus, we can write
	\begin{align*}
	f(x_{T-1})-f(x^*) \leqslant \frac{2C_9}{T} \leqslant\epsilon\quad \Longleftrightarrow \quad\frac{2C_6}{\epsilon}\leqslant T.
	\end{align*}
	Further, by putting together \eqref{1w789}, \eqref{1cond189}, \eqref{1w489}, \eqref{1cond289}, and \eqref{1cond389}, it suffices to set $C=C_2$.
\end{proof}
\section{Linear Algebra Lemmas}
In this section, we provide some lemmas from linear algebra which were previously used in Appendix.
\begin{lemma}[Re-parametrization of an ellipsoid]\label{ellipse}
	Suppose $x_0\in\mathbb{R}^d$ and $\Sigma\in\mathbb{R}^{d\times d}$ is positive definite. Then 
	\[E=\left\{x\in\mathbb{R}^{d}:\,\,\left(x_{0}-x\right)^{\top} \Sigma^{-1}\left(x_{0}-x\right) \leqslant r^{2}\right\}=\left\{x_{0}-r\, \Sigma^{1 / 2}u: \,\, u\in\mathbb{R}^d,\,\, \|u\|\, \leqslant 1\right\}.\]
\end{lemma}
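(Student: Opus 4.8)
The plan is to establish the set equality by a straightforward double inclusion, relying only on the existence of a symmetric positive-definite square root $\Sigma^{1/2}$ of $\Sigma$. Since $\Sigma$ is positive definite, $\Sigma^{1/2}$ exists, is symmetric and invertible, and its inverse $\Sigma^{-1/2}$ satisfies $\Sigma^{-1/2}\Sigma^{-1/2}=\Sigma^{-1}$ and $\Sigma^{1/2}\Sigma^{-1}\Sigma^{1/2}=I$; these are the only facts I will need. I would first dispose of the degenerate case $r=0$, in which both sets collapse to $\{x_0\}$ (the left-hand set because $\Sigma^{-1}$ is positive definite forces $x=x_0$, the right-hand set trivially), and then assume $r>0$ throughout the main argument.

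For the inclusion ``$\supseteq$'', I would take an arbitrary point of the right-hand set written as $x=x_0-r\Sigma^{1/2}u$ with $\|u\|\leqslant 1$, so that $x_0-x=r\Sigma^{1/2}u$. Substituting into the quadratic form and using symmetry of $\Sigma^{1/2}$ together with $\Sigma^{1/2}\Sigma^{-1}\Sigma^{1/2}=I$ gives $(x_0-x)^{\top}\Sigma^{-1}(x_0-x)=r^2\,u^{\top}\Sigma^{1/2}\Sigma^{-1}\Sigma^{1/2}u=r^2\|u\|^2\leqslant r^2$, whence $x\in E$.

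For the reverse inclusion ``$\subseteq$'', given $x\in E$ I would define the candidate preimage $u:=\tfrac{1}{r}\Sigma^{-1/2}(x_0-x)$ and verify the two defining properties. First, $\|u\|^2=\tfrac{1}{r^2}(x_0-x)^{\top}\Sigma^{-1/2}\Sigma^{-1/2}(x_0-x)=\tfrac{1}{r^2}(x_0-x)^{\top}\Sigma^{-1}(x_0-x)\leqslant 1$ by the membership inequality of $E$. Second, a direct computation shows $x_0-r\Sigma^{1/2}u=x_0-(x_0-x)=x$, so $x$ lies in the right-hand set. Combining the two inclusions yields the claimed equality.

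I do not anticipate any genuine obstacle here, as the statement is the standard change-of-variables identity for ellipsoids. The only points meriting mild care are the well-definedness of $\Sigma^{1/2}$ and $\Sigma^{-1/2}$, which is guaranteed by positive definiteness of $\Sigma$, and the separate treatment of $r=0$ so that the division by $r$ in the construction of $u$ is legitimate.
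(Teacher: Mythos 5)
Your proof is correct and follows essentially the same route as the paper: both arguments rest on the change of variables $u=\tfrac{1}{r}\,\Sigma^{-1/2}(x_0-x)$, which the paper phrases as a chain of equivalences and you phrase as a double inclusion. Your explicit treatment of the degenerate case $r=0$ is a minor refinement the paper omits (its division by $r$ tacitly assumes $r>0$, which holds in the paper's application), but it does not change the substance of the argument.
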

\begin{proof}
	Define
	\begin{equation*}
	u:=\frac{1}{r}\,\Sigma^{-1 / 2}\left(x_{0}-x\right).
	\end{equation*}
	Therefore,
	\begin{align*}
	x\in E \quad &\Leftrightarrow\quad x= x_{0}-r\, \Sigma^{1 / 2} u, \quad r^2\,u^{\top}u \leqslant r^2\\
	&\Leftrightarrow \quad x= x_{0}-r\, \Sigma^{1 / 2} u, \quad \|u\|\, \leqslant 1.
	\end{align*}
\end{proof}

\begin{lemma}[Sherman Morrison Woodbury Formula]\label{matrix-inversion2} 
	Suppose $A$, $B$, and $C$ are matrices such that $A$, $C$, and $A+BCD$ are well-defined and invertible. Then 
	\begin{align*}
	(A+B C D)^{-1}=A^{-1}-A^{-1} B\left(C^{-1}+D A^{-1} B\right)^{-1} D A^{-1}.
	\end{align*}
\end{lemma}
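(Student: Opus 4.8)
The plan is to prove this identity by direct verification rather than by derivation: I would simply exhibit the claimed right-hand side as a genuine (right) inverse of $A+BCD$, and then invoke invertibility to conclude it is \emph{the} inverse. Writing $W := (C^{-1}+DA^{-1}B)^{-1}$ for brevity and setting $N := A^{-1} - A^{-1}BWDA^{-1}$, the goal reduces to checking that $(A+BCD)\,N = I$.

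The first step is to distribute, giving $(A+BCD)N = (A+BCD)A^{-1} - (A+BCD)A^{-1}BWDA^{-1}$. The leading summand simplifies at once to $I + BCDA^{-1}$. For the trailing summand I would factor a $B$ out on the left, using $(A+BCD)A^{-1}B = B + BCDA^{-1}B = B(I + CDA^{-1}B)$, so that the term becomes $B(I + CDA^{-1}B)\,W\,DA^{-1}$.

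The crux — and the only step that is not purely mechanical — is the algebraic rewriting $I + CDA^{-1}B = CC^{-1} + CDA^{-1}B = C(C^{-1}+DA^{-1}B) = CW^{-1}$, which relies on $C$ being invertible. With this in hand the trailing summand collapses to $B\,CW^{-1}\,W\,DA^{-1} = BCDA^{-1}$, and it cancels exactly against the stray $BCDA^{-1}$ produced by the leading summand, leaving $(A+BCD)N = I$. I expect this factoring-and-rewriting step to be the main (though still elementary) obstacle, since it is where the precise shape of the claimed inverse is forced; everything else is bookkeeping.

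Finally, since $A+BCD$ is assumed square and invertible, exhibiting a right inverse $N$ is enough to conclude $N = (A+BCD)^{-1}$; alternatively one could run the symmetric computation $N(A+BCD)=I$. I would also note explicitly that the stated hypotheses (invertibility of $A$, $C$, and $A+BCD$) guarantee that every inverse appearing above, in particular $C^{-1}+DA^{-1}B$, is well-defined, so all the manipulations are legitimate.
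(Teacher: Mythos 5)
Your core computation is correct, but your route is genuinely different from the paper's. The paper \emph{derives} the formula: it first proves the auxiliary identities $(I+P)^{-1}=I-(I+P)^{-1}P$, then that invertibility of $I+UV$ implies invertibility of $I+VU$ (via a kernel argument), then the push-through identity $(I+UV)^{-1}=I-U(I+VU)^{-1}V$, and finally specializes $U=A^{-1}B$, $V=CD$. You instead \emph{verify}: you posit the candidate $N=A^{-1}-A^{-1}BWDA^{-1}$ with $W=(C^{-1}+DA^{-1}B)^{-1}$, check $(A+BCD)N=I$ via the factorization $I+CDA^{-1}B=CW^{-1}$, and invoke invertibility of $A+BCD$ to conclude $N$ is the inverse. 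Your verification is shorter and more mechanical; the paper's derivation has the side benefit that its intermediate identities are reused elsewhere (its identity $(I+P)^{-1}=I-(I+P)^{-1}P$ also drives the proofs of Lemmas~\ref{matrix-inversion3} and~\ref{matrix-norm}).

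There is, however, one step you assert where an argument is actually required: the existence of $W$, i.e., the invertibility of $C^{-1}+DA^{-1}B$. This is \emph{not} among the stated hypotheses (only $A$, $C$, and $A+BCD$ are assumed invertible), and it is not automatic; it is precisely the nontrivial content of the paper's step ``$I+UV$ invertible $\Rightarrow I+VU$ invertible.'' Saying the hypotheses ``guarantee'' it, without proof, leaves your argument circular at that point, since your whole verification is built on manipulating $W$. The gap is small and fillable: since $A+BCD=A(I+A^{-1}BCD)$ and both $A$ and $A+BCD$ are invertible, $I+A^{-1}BCD$ is invertible; now if $(I+CDA^{-1}B)x=0$ then $x=-CDA^{-1}Bx$, so applying $A^{-1}B$ gives $(I+A^{-1}BCD)(A^{-1}Bx)=0$, hence $A^{-1}Bx=0$, hence $x=-CD(A^{-1}Bx)=0$; thus $I+CDA^{-1}B$ is injective, hence invertible, and therefore so is $C^{-1}+DA^{-1}B=C^{-1}(I+CDA^{-1}B)$. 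With that supplement inserted before you define $W$, your proof is complete and valid.
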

\begin{proof}
	Note that if $P$ is a square matrix such that $I+P$ is invertible, then
	\begin{align}\label{50}
	(I+P)\left(I-(I+P)^{-1} P\right)  = I+P-P=I\quad  \Rightarrow \quad(I+P)^{-1}= I-(I+P)^{-1} P.
	\end{align}
	Also, note that if $U$ and $V$ are such that $I+UV$ is invertible, then $I+VU$ is also invertible. To see that, suppose $\left(I+VU\right)x=0$. We need to show that $x=0$. We have
	\begin{align}\label{52}
	\left(I+VU\right)x=0\,\, \Rightarrow \,\,U\left(I+VU\right)x=0\,\,  \Rightarrow \,\, \left(I+UV\right)Ux=0 \,\, \overset{\left(I+UV\right)^{-1}}{\Longrightarrow} \,\, Ux=0.
	\end{align}
	Thus,
	\begin{align*}
	\left(I+VU\right)x=0\,\, \Rightarrow \,\,x = -V(Ux) \overset{\eqref{52}}{=}0.
	\end{align*}
	Hence, if $U$ and $V$ are such that $I+UV$ is invertible, then $I+VU$ is invertible and we can write
	\begin{align}\label{51}
	(I+UV)U = U(I+VU)\quad  \Rightarrow \quad U(I+V U)^{-1}=(I+U V)^{-1} U.
	\end{align}
	Moreover, we can conclude that under these assumptions,
	\begin{align}\label{53}
	(I+U V)^{-1} \overset{\eqref{50}}{=} I-(I+U V)^{-1} U V \overset{\eqref{51}}{=} I-U(I+V U)^{-1} V.
	\end{align}
	Now by replacing $U = A^{-1}B$ and $V=CD$, \eqref{53} results in
	\begin{align*}
	(A+B C D)^{-1} &= \left(I+A^{-1}BCD\right)^{-1}A^{-1}\overset{\eqref{53}}{=}\left(I- A^{-1}B\left(I+CDA^{-1}B\right)^{-1}CD\right)A^{-1}\\
	&=A^{-1}- A^{-1}B\left(I+CDA^{-1}B\right)^{-1}CDA^{-1}\\
	&=A^{-1}- A^{-1}B\left(C^{-1}+DA^{-1}B\right)^{-1}DA^{-1}.
	\end{align*}
\end{proof}

\begin{lemma}\label{matrix-inversion}
	If the right-hand side of \eqref{48} is well-defined, we have 
	\begin{align}\label{48}
	\left[\begin{array}{ll}{A} & {B} \\ {C} & {D}\end{array}\right]^{-1}=\left[\begin{array}{cc}{\left(A-B D^{-1} C\right)^{-1}} & {-\left(A-B D^{-1} C\right)^{-1} B D^{-1}} \\ {-D^{-1} C\left(A-B D^{-1} C\right)^{-1}} & {\left(D-C A^{-1} B\right)^{-1}}\end{array}\right].
	\end{align}
\end{lemma}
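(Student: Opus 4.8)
The plan is to prove the block inversion identity by first writing the matrix $M := \begin{pmatrix} A & B \\ C & D \end{pmatrix}$ as a product of block-triangular factors, then inverting that product factor by factor, and finally reconciling the bottom-right block with the stated form using the Sherman--Morrison--Woodbury formula already established in Lemma~\ref{matrix-inversion2}.

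First I would record the block LDU factorization, with $S := A - BD^{-1}C$ the Schur complement of $D$,
\begin{align*}
M = \begin{pmatrix} I & BD^{-1} \\ 0 & I \end{pmatrix} \begin{pmatrix} S & 0 \\ 0 & D \end{pmatrix} \begin{pmatrix} I & 0 \\ D^{-1}C & I \end{pmatrix},
\end{align*}
which is verified by a direct block multiplication (the two unit-diagonal triangular factors reproduce $B$, $C$, and $A = S + BD^{-1}C$). Since each triangular factor is inverted simply by negating its off-diagonal block, and the middle factor is block-diagonal, reversing the order of the inverses yields
\begin{align*}
M^{-1} = \begin{pmatrix} I & 0 \\ -D^{-1}C & I \end{pmatrix} \begin{pmatrix} S^{-1} & 0 \\ 0 & D^{-1} \end{pmatrix} \begin{pmatrix} I & -BD^{-1} \\ 0 & I \end{pmatrix} = \begin{pmatrix} S^{-1} & -S^{-1}BD^{-1} \\ -D^{-1}CS^{-1} & D^{-1} + D^{-1}CS^{-1}BD^{-1} \end{pmatrix}.
\end{align*}
At this point the top-left block $S^{-1} = (A - BD^{-1}C)^{-1}$ and both off-diagonal blocks already coincide exactly with those in \eqref{48}, so nothing further is needed for three of the four blocks.

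It then remains only to rewrite the bottom-right block $D^{-1} + D^{-1}CS^{-1}BD^{-1}$ in the complementary form $(D - CA^{-1}B)^{-1}$. For this I would apply Lemma~\ref{matrix-inversion2} to $D - CA^{-1}B = D + C(-A^{-1})B$, choosing the roles so that the outer matrix is $D$, the left and right factors are $C$ and $B$, and the inner matrix is $-A^{-1}$ (whose inverse is $-A$); the inner correction term then becomes $-A + BD^{-1}C = -S$, and its inverse $-S^{-1}$ produces precisely $D^{-1} + D^{-1}CS^{-1}BD^{-1}$. I expect this reconciliation of the two Schur complements to be the only genuine subtlety: the factorization naturally expresses everything through $S = A - BD^{-1}C$, whereas the statement phrases the bottom-right entry through the dual Schur complement $D - CA^{-1}B$, and bridging them is exactly where Sherman--Morrison--Woodbury enters. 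Throughout I would lean on the hypothesis that the right-hand side of \eqref{48} is well-defined, which guarantees invertibility of $D$, $A$, and both Schur complements so that all the intermediate inverses above exist.
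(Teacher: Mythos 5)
Your proof is correct, but it takes a different route from the paper's. The paper disposes of this lemma in one line: it verifies the claimed formula directly, multiplying the block matrix by the right-hand side of \eqref{48} and invoking Lemma~\ref{matrix-inversion2} to see that the product collapses to the identity. You instead \emph{derive} the inverse constructively: the block LDU factorization
\begin{align*}
\left[\begin{array}{ll}{A} & {B} \\ {C} & {D}\end{array}\right]
=\left[\begin{array}{ll}{I} & {BD^{-1}} \\ {0} & {I}\end{array}\right]
\left[\begin{array}{ll}{S} & {0} \\ {0} & {D}\end{array}\right]
\left[\begin{array}{ll}{I} & {0} \\ {D^{-1}C} & {I}\end{array}\right],
\qquad S:=A-BD^{-1}C,
\end{align*}
is inverted factor by factor, which immediately produces the top-left and both off-diagonal blocks of \eqref{48}, and then Lemma~\ref{matrix-inversion2} (applied to $D+C(-A^{-1})B$, exactly as you set it up) converts the resulting bottom-right block $D^{-1}+D^{-1}CS^{-1}BD^{-1}$ into $(D-CA^{-1}B)^{-1}$. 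The two arguments are close cousins --- in the paper's verification the only nontrivial blocks are precisely the ones where the Sherman--Morrison--Woodbury identity must bridge the two Schur complements, which is the same reconciliation you perform --- but yours buys more: it explains where the formula comes from, it shows as a byproduct that invertibility of $D$ and $S$ already forces the full block matrix to be invertible (the verification route only exhibits a one-sided inverse and then appeals to finite-dimensionality), and it isolates the Schur-complement duality as the single genuine subtlety. The paper's approach buys brevity. Your handling of the well-definedness hypothesis is also sound: invertibility of $A$, $D$, and both Schur complements is exactly what the right-hand side of \eqref{48} presupposes, and those are the only inverses your factorization uses.
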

\begin{proof}
	Simply through a matrix multiplication and using Lemma~\ref{matrix-inversion2}.
\end{proof}

\begin{lemma}\label{9}
	Consider $x\in\mathbb{R}^d$ and the identity matrix $I\in\mathbb{R}^{d\times d}$. Then $\left\|\left[{I}, {x}\right]\right\|=\sqrt{1+\left\|x\right\|^{2}}$.
\end{lemma}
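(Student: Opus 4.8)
The plan is to compute the spectral norm directly from the definition, using the fact that for a matrix $M$ one has $\|M\| = \rho_{\max}(M)$, the largest singular value, and that the squared singular values of $M$ are the eigenvalues of $M M^{\top}$. Writing $M = [I,\, x] \in \mathbb{R}^{d \times (d+1)}$, the first step is to form the smaller Gram matrix $M M^{\top}$ rather than $M^{\top} M$, since $M M^{\top}$ is only $d \times d$. A block computation gives
\begin{align*}
M M^{\top} = \left[I,\, x\right]\left[\begin{array}{c} I \\ x^{\top}\end{array}\right] = I + x x^{\top}.
\end{align*}

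The second step is to diagonalize the rank-one update $I + x x^{\top}$. The matrix $x x^{\top}$ is symmetric positive semidefinite of rank at most one; its only nonzero eigenvalue is $\|x\|^{2}$, attained on the eigenvector $x$ (when $x \neq 0$), while the orthogonal complement $x^{\perp}$ is its kernel. Consequently the eigenvalues of $I + x x^{\top}$ are $1 + \|x\|^{2}$ with eigenvector $x$, together with $1$ of multiplicity $d-1$ on $x^{\perp}$. In particular the largest eigenvalue is $1 + \|x\|^{2}$, and this also covers the degenerate case $x = 0$, where $M M^{\top} = I$ and the norm is $1 = \sqrt{1 + 0}$.

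The final step is to take the square root: since $M M^{\top}$ is symmetric positive semidefinite, its largest eigenvalue equals $\rho_{\max}(M M^{\top})$, and therefore
\begin{align*}
\left\|\left[I,\, x\right]\right\| = \rho_{\max}(M) = \sqrt{\rho_{\max}\!\left(M M^{\top}\right)} = \sqrt{1 + \|x\|^{2}}.
\end{align*}
I do not anticipate a genuine obstacle here, as the argument is entirely elementary; the only point requiring a moment of care is the bookkeeping of the block product $M M^{\top} = I + x x^{\top}$ (as opposed to the larger $(d+1)\times(d+1)$ matrix $M^{\top}M$) and the observation that the nonzero eigenvalue of the rank-one term is exactly $\|x\|^2$, which pins down the top eigenvalue.
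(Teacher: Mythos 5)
Your proof is correct and follows essentially the same route as the paper: both compute the Gram matrix $[I,\,x][I,\,x]^{\top} = I + xx^{\top}$ and identify its largest eigenvalue as $1+\|x\|^{2}$ before taking the square root. Your version merely adds the (harmless) details of the rank-one spectral decomposition and the degenerate case $x=0$.
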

\begin{proof}
	Let $\lambda_{\max}(\cdot)$ denote the maximum eigenvalue of the corresponding matrix. Then
	\begin{align*}
	\left\|\left[{I}, {x}\right]\right\|=\sqrt{\lambda_{\max}\left(\left[I, x\right]\left[\begin{array}{l}{I} \\ {x^{\top}}\end{array}\right] \right)}=\sqrt{\lambda_{\max}\left(I+xx^{\top}\right)}=\sqrt{I+\lambda_{\max} \left(xx^{\top}\right)}=\sqrt{1+\left\|x\right\|^{2}}.
	\end{align*}
\end{proof}

\begin{lemma}\label{matrix-inversion3}
	Suppose $A$ and $B$ are matrices such that $A$ and $A+B$ are well-defined and invertible. Then 
	\begin{align*}
	{(A+B)^{-1}=A^{-1}-\left(I+A^{-1} B\right)^{-1} A^{-1} B A^{-1}}.
	\end{align*}
\end{lemma}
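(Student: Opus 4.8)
The plan is to recognize this identity as a direct consequence of the factorization $A+B = A(I + A^{-1}B)$ together with the auxiliary identity \eqref{50} already established in the proof of Lemma~\ref{matrix-inversion2}. First I would observe that since both $A$ and $A+B$ are invertible, the matrix $I + A^{-1}B = A^{-1}(A+B)$ is a product of two invertible matrices and hence invertible; this is the only point requiring care, and it legitimizes every inverse appearing below. Taking inverses in the factorization then gives $(A+B)^{-1} = (I+A^{-1}B)^{-1}A^{-1}$.

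Next I would invoke \eqref{50}, which states that whenever $I+P$ is invertible one has $(I+P)^{-1} = I - (I+P)^{-1}P$. Applying this with $P = A^{-1}B$ yields $(I+A^{-1}B)^{-1} = I - (I+A^{-1}B)^{-1}A^{-1}B$. Right-multiplying both sides by $A^{-1}$ and substituting the expression for $(A+B)^{-1}$ obtained in the previous step produces exactly $(A+B)^{-1} = A^{-1} - (I+A^{-1}B)^{-1}A^{-1}BA^{-1}$, which is the claimed formula.

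Alternatively, one can read the statement off as a special case of the Sherman--Morrison--Woodbury formula (Lemma~\ref{matrix-inversion2}) by choosing the middle factor $C=I$ and the right factor $D=I$, which gives $(A+B)^{-1} = A^{-1} - A^{-1}B(I+A^{-1}B)^{-1}A^{-1}$, and then commuting $A^{-1}B$ past the inverse via the push-through identity \eqref{51}, namely $A^{-1}B(I+A^{-1}B)^{-1} = (I+A^{-1}B)^{-1}A^{-1}B$. This recovers the same expression, confirming consistency with the earlier lemma.

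I do not anticipate any genuine obstacle: the result is a routine rearrangement of identities already proved in this appendix, and the only substantive step is verifying the invertibility of $I+A^{-1}B$ so that all the algebraic manipulations are valid. The direct route through \eqref{50} is the cleanest and is the one I would write out in full.
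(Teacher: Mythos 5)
Your proof is correct and follows essentially the same route as the paper: factor $A+B = A(I+A^{-1}B)$, apply the identity \eqref{50} with $P = A^{-1}B$, and right-multiply by $A^{-1}$. The only difference is that you explicitly justify the invertibility of $I+A^{-1}B$ (which the paper leaves implicit), a minor but welcome addition.
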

\begin{proof}
	Replace $P=A^{-1}B$ in \eqref{50} to get
	\begin{align*}
	(A+B)^{-1}&=\left(I+A^{-1}B\right)^{-1}A^{-1}   \overset{\eqref{50}}{=} \left(I-\left(I+A^{-1}B\right)^{-1} A^{-1}B\right)A^{-1}\\
	&= A^{-1}-\left(I+A^{-1} B\right)^{-1} A^{-1} B A^{-1}.
	\end{align*}
\end{proof}

\begin{lemma}\label{matrix-norm}
	If  $\|A\| \leqslant 1$,  then  $\left\|(I+A)^{-1}\right\| \leqslant \frac{1}{1-\|A\|}$.
\end{lemma}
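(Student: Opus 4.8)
The plan is to prove the bound via the reverse triangle inequality applied to the operator $I+A$, which is the most economical route and avoids invoking the Neumann series machinery. First I would observe that for the statement to be meaningful we work in the regime $\|A\|<1$ (which is the relevant case in the applications, e.g. \eqref{15}, where $\|A\|\leqslant 1/2$); when $\|A\|<1$ the matrix $I+A$ is invertible, so $(I+A)^{-1}$ is well-defined and the right-hand side $1/(1-\|A\|)$ is finite and positive.

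The core step is to establish a lower bound on how much $I+A$ can shrink a vector. For any $x\in\mathbb{R}^d$, the reverse triangle inequality together with submultiplicativity of the operator norm gives
\begin{align*}
\|(I+A)x\| \geqslant \|x\| - \|Ax\| \geqslant \|x\| - \|A\|\,\|x\| = (1-\|A\|)\,\|x\|.
\end{align*}
Since $1-\|A\|>0$, this shows $(I+A)x=0$ forces $x=0$, so $I+A$ is injective and hence invertible as a square matrix; this simultaneously justifies the existence of $(I+A)^{-1}$.

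To conclude, I would set $y=(I+A)^{-1}x$ for an arbitrary $x$, so that $x=(I+A)y$. Substituting into the displayed inequality yields $\|x\|\geqslant (1-\|A\|)\,\|y\| = (1-\|A\|)\,\|(I+A)^{-1}x\|$, and dividing through gives $\|(I+A)^{-1}x\|\leqslant \|x\|/(1-\|A\|)$. Taking the supremum over all $x$ with $\|x\|=1$ (i.e.\ passing to the operator norm) produces exactly $\|(I+A)^{-1}\|\leqslant 1/(1-\|A\|)$. There is no real obstacle here: the only point requiring mild care is ensuring the strict inequality $\|A\|<1$ so that division by $1-\|A\|$ is legitimate and $I+A$ is genuinely invertible; the boundary case $\|A\|=1$ is excluded because the claimed bound is then vacuous.
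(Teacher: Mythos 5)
Your proof is correct, but it takes a different route from the paper's. The paper reuses the algebraic identity $(I+A)^{-1}=I-(I+A)^{-1}A$ (its equation~\eqref{50}, established en route to the Sherman--Morrison--Woodbury formula) and applies the triangle inequality directly at the operator-norm level, $\|(I+A)^{-1}\|\leqslant 1+\|(I+A)^{-1}\|\,\|A\|$, then rearranges; this is a two-line argument but it tacitly presupposes that $(I+A)^{-1}$ exists and has finite norm. You instead work at the vector level: the reverse triangle inequality gives the coercivity bound $\|(I+A)x\|\geqslant(1-\|A\|)\|x\|$, which simultaneously proves injectivity (hence invertibility, since $I+A$ is a square matrix in finite dimension) and, after substituting $x=(I+A)y$, yields the norm bound. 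Your version is thus self-contained where the paper's is not, and you are right to flag that the hypothesis should be the strict inequality $\|A\|<1$ --- the paper's statement and closing line (``Since $\|A\|\leqslant 1$...'') are slightly sloppy on this point, since the division by $1-\|A\|$ fails at $\|A\|=1$; in the only place the lemma is invoked, \eqref{15}, one has $\|A\|\leqslant 1/2$, so nothing downstream is affected. One small caveat on generality: your step from injectivity to invertibility uses finite-dimensionality (rank--nullity), whereas the paper's identity-based argument would extend to bounded operators on a Banach space once invertibility is known by other means (e.g.\ the Neumann series); for the matrix setting at hand both are equally valid.
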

\begin{proof}
	Note that
	\begin{align*}
	\left\|(I+A)^{-1}\right\|\overset{\eqref{50}}{=}\left\|I-(I+A)^{-1} A\right\| \leqslant 1+\left\|(I+A)^{-1}\right\|\|A \|\,\Rightarrow\,\left\|(I+A)^{-1}\right\|\left(1-\|A\|\right) \leqslant 1.
	\end{align*}
	Since  $\|A\| \leqslant 1$,  we get $\left\|(I+A)^{-1}\right\| \leqslant \frac{1}{1-\|A\|}$.
\end{proof}

\bibliographystyle{ieeetr} 
\bibliography{ref}

\end{document}